\DeclareMathOperator*{\argmin}{arg\,min}
\DeclareMathOperator*{\argmax}{arg\,max}
\newtheorem{theorem}{Theorem}
\newtheorem{proposition}{Proposition}
\newtheorem{remark}{Remark}
\newtheorem{lemma}{Lemma}
\newtheorem{definition}{Definition}
\begin{document}
\title{Privacy Against Inference Attacks in Vertical Federated Learning}
\author{Borzoo Rassouli$^1$, Morteza Varasteh$^1$ and Deniz G\"und\"uz$^2$\\
\small{$^1$ School of Computer Science and Electronic Engineering, University of Essex, Colchester CO4 3SQ, UK}\\
\small{$^2$ Department of Electrical and Electronic Engineering, Imperial College London, , London SW7 2AZ, UK}\\
{\tt\small \{b.rassouli,m.varasteh\}@essex.ac.uk}, {\tt\small  d.gunduz@imperial.ac.uk}
% \thanks{
% A conference version of this paper is provided in \cite{RG18}.
% }
}
\maketitle

\begin{abstract}
    Vertical federated learning is considered, where an active party, having access to true class labels, wishes to build a classification model by utilizing more features from a passive party, which has no access to the labels, to improve the model accuracy. In the prediction phase, with logistic regression as the classification model, several inference attack techniques are proposed that the adversary, i.e., the active party, can employ to reconstruct the passive party's features, regarded as sensitive information. These attacks, which are mainly based on a classical notion of the center of a set, i.e., the Chebyshev center, are shown to be superior to those proposed in the literature. Moreover, several theoretical performance guarantees are provided for the aforementioned attacks. Subsequently, we consider the minimum amount of information that the adversary needs to fully reconstruct the passive party's features. In particular, it is shown that when the passive party holds one feature, and the adversary is only aware of the signs of the parameters involved, it can perfectly reconstruct that feature when the number of predictions is large enough. Next, as a defense mechanism, several privacy-preserving schemes are proposed that worsen the adversary's reconstruction attacks, while preserving the benefits that VFL brings to the active party. Finally, experimental results demonstrate the effectiveness of the proposed attacks and the privacy-preserving schemes.
\end{abstract}
\section{introduction}
% Emergence of distributed machine learning (ML) techniques has recently revolutionized the way useful information is extracted from raw data in different areas, such as computer vision, image recognition, financial services and natural language processing. There is a huge request in many sectors for utilizing distributed data to build more accurate and sophisticated ML models. Many distributed ML techniques have been proposed to deal with various concerns, such as centralized data storage, overloaded computations, security and privacy of distributed ML models \cite{Wei_Fed_DP}.
To tackle the concerns in the traditional centralized learning, i.e., privacy, storage, and computational complexity, Federated Learning (FL) has been proposed in \cite{McMahan_2017} where machine learning (ML) models are jointly trained by multiple local data owners (i.e., parties), such as smart phones, data centres, etc., without revealing their private data to each other. This approach has gained interest in many real-life applications, such as health systems \cite{Songtao_health, Wenqi_health}, keyboard prediction \cite{Francoise_Keyboard, Andrew_keyboard_prediction}, and e-commerce \cite{Kai_2019, Wang_2020}.

% Federated learning (FL), has recently been proposed  to enable (centrally coordinated) training of ML models via local devices (smart phones, smart watches) and/or data centres (banks, hospitals) without having to gather local data in a centralized centre \cite{Chuan_2019}.
% This approach has alleviated some of the concerns regarding data storage, computation overload, privacy leakage; and it

Based on how data is partitioned among participating parties, three variants of FL, which are horizontal, vertical and transfer FL, have been considered. Horizontal FL (HFL) refers to the FL among data owners that share different data records/samples with the same set of features \cite{Rong}, and vertical FL (VFL) is the FL in which parties share common data samples with disjoint set of features \cite{Cheng_2020}.

% its applicability is limited due to confidentiality and competition among large data centres \cite{Cheng_2020}. Vertical FL (VFL) however, avoids this issue by enabling model training among non-competing data centres (such as a bank and an insurance company), where they
Figure \ref{fig122} illustrates a digital banking system as an example of a VFL setting, in which two parties are participating, namely, a bank and a FinTech company \cite{Xinjian}. The bank wishes to build a binary classification model to approve/disapprove a user's credit card application by utilizing more features from the Fintech company. In this context, only the bank has access to the class labels in the \textit{training} and \textit{testing} datasets, hence named the \textit{active party}, and the FinTech company that is unaware of the labels is referred to as the \textit{passive party}.

\begin{figure}[ht]
 \centering % centering figure
 \scalebox{1.2} % rescale the figure by a factor of 0.8
 {\includegraphics{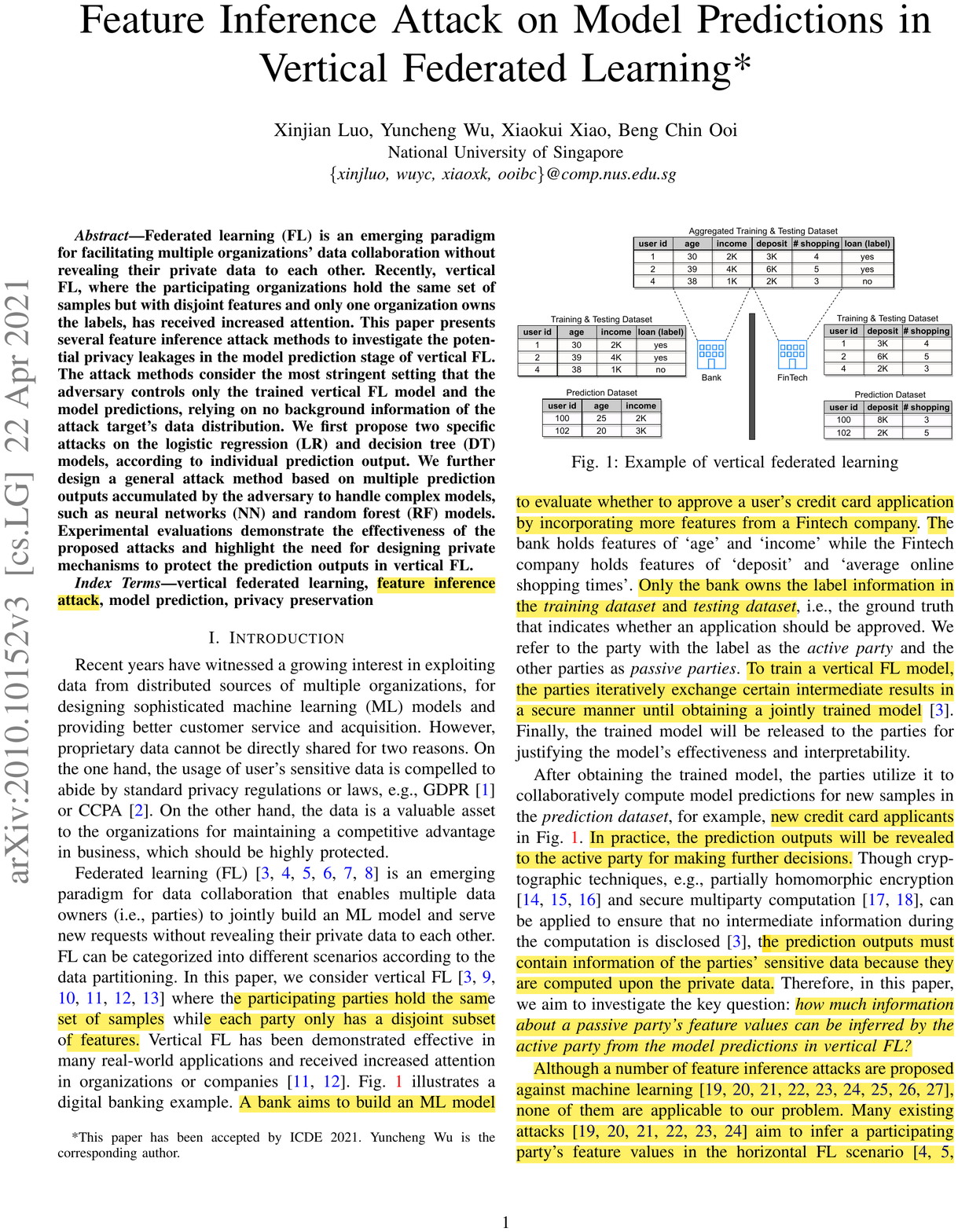}} % importing figure
 \caption{Digital banking as an example of vertical federated learning \cite{Xinjian}.}
 \label{fig122} % labeling to refer it inside the text
\end{figure}

Once the model is trained, it can be used to predict the decision (approve/disapprove) on a new credit card application in the \textit{prediction} dataset. The model outputs, referred to as the prediction outputs or more specifically, the \textit{confidence scores}, are revealed to the active party that is in charge of making decision. The active party maybe aware or unaware of the passive party's model parameters, which are, respectively, referred to as the \textit{white-box} and \textit{black-box} settings.

As stated in \cite{Xinjian}, upon the receipt of the prediction outputs, which generally depend on the passive party's features, a curious active party can perform \textit{reconstruction attacks} to infer the latter, which are regarded as passive the party's sensitive information \footnote{The active party is also referred to as the \textit{adversary} in this paper.}. This privacy leakage in the prediction phase of VFL is the main focus of this paper, and the following contributions are made.
\begin{itemize}
    \item In the white-box setting, several reconstruction attacks are proposed that outperform those given in \cite{Xinjian},\cite{Jiang}. The attacks are motivated by the notion of the Chebyshev center of a convex polytope.
    \item Theorems 1 and 2 provide theoretical bounds as rigorous guarantees for some of these attacks.
    \item In the black-box setting, it is shown that when the passive party holds one feature, and the adversary is aware of the signs of the parameters involved, it can still fully reconstruct the passive party's feature given that the number of predictions is large enough.
    \item Several privacy-preserving schemes are proposed as a defense technique against reconstruction attacks, which have the advantage of not degrading the benefits that VFL brings to the active party.
\end{itemize}

% One party, namely, the \textit{active party} has a database of records with features and their corresponding lables. The other parties, namely, passive parties, have a subset of the same database of records with no lables and with different sets of features.

% This paper is a comprehensive study of the works in \cite{Jiang, Xinjian}, and investigates the privacy leakage during the prediction phase when two parties (and a CA) are involved in VFL. Considering logistic regression (LR) models, we focus on both white box and black box settings. Our contribution is summarized as below:
The organization of the paper is as follows. In section \ref{sm}, an explanation of the system model under consideration is provided. In section \ref{ps}, the elementary steps that pave the way for the adversary's attack are elaborated, and the measure by which the performance of the reconstruction attack is evaluated is provided. The analysis and derivations in this paper need some preliminaries from linear algebra and optimization. To make the text as self contained as possible, these have been provided in section \ref{prel}. The main results of the paper are given in sections \ref{wbs} to \ref{expr}. In section \ref{wbs}, the white-box setting is considered and several attack methods are proposed and evaluated analytically. Section \ref{bbs} deals with the black-box setting and investigates the minimum knowledge the adversary needs to perform a successful attack. In section \ref{PR}, a privacy-preserving scheme is provided that worsens the adversary's attacks, while not altering the confidence scores revealed to it. Section \ref{expr} is devoted to the experimental evaluation of the results of this paper and comparison to those in the literature. Finally, section \ref{conc} concludes the paper. To improve the readability, the notation used in this paper is provided next.

\textbf{Notation.}
%Random variables are denoted by capital letters, their realizations by lower case letters, and their alphabets by capital letters in calligraphic font.
Matrices and vectors\footnote{All the vectors considered in this paper are column vectors.} are denoted by bold capital (e.g. $\mathbf{A,Q}$) and bold lower case letters (e.g. $\mathbf{b,z}$), respectively. Random variables are denoted by capital letters (e.g. $X$), and their realizations by lower case letters (e.g. $x$). \footnote{In order to prevent confusion between the notation of a matrix (bold capital) and a random vector (bold capital), in this paper, letters $x,y$ are not used to denote a matrix, hence, $\mathbf{X,Y}$ are random vectors rather than matrices.}Sets are denoted by capital letters in calligraphic font (e.g. $\mathcal{X},\mathcal{G}$) with the exception of the set of real numbers, i.e., $\mathds{R}$.  The cardinality of the finite set $\mathcal{X}$ is denoted by $|\mathcal{X}|$. For a matrix $\mathbf{A}_{m\times k}$, the null space, rank, and nullity are denoted by $\textnormal{Null}(\mathbf{A})$, $\textnormal{rank}(\mathbf{A})$, and $\textnormal{nul}(\mathbf{A})$, respectively, with $\textnormal{rank}(\mathbf{A})+\textnormal{nul}(\mathbf{A})=k$, i.e., the number of columns. The transpose of $\mathbf{A}$ is denoted by $\mathbf{A}^T$, and when $m=k$, its trace and determinant are denoted by $\textnormal{Tr}(\mathbf{A})$ and $\textnormal{det}(\mathbf{A})$, respectively. For an integer $n\geq 1$, the terms $\mathbf{I}_n$, $\mathbf{1}_n$, and $\mathbf{0}_n$ denote the $n$-by-$n$ identity matrix, the $n$-dimensional all-one, and all-zero column vectors, respectively, and whenever it is clear from the context, their subscripts are dropped. For two vectors $\mathbf{a,b}$, $\mathbf{a}\geq\mathbf{b}$ means that each element of $\mathbf{a}$ is greater than or equal to the corresponding element of $\mathbf{b}$. The notation $\mathbf{A}\succeq 0$ is used to show that $\mathbf{A}$ is positive semi-definite, and  $\mathbf{A}\succeq\mathbf{B}$ is equivalent to $\mathbf{A}-\mathbf{B}\succeq 0$.
For integers $m\leq n$, we have the discrete interval $[m:n]\triangleq\{m, m+1,\ldots,n\}$, and the set $[1:n]$ is written in short as $[n]$.
$F_{X}(\cdot)$ denotes the cumulative distribution function (CDF) of random variable $X$, whose expectation is denoted by $\mathds{E}[X]=\int xdF_{X}(x)$. In this paper, all the (in)equalities that involve a random variable are in the \textit{almost surely} (a.s.) sense, i.e., they happen with probability 1. For $\mathbf{x}\in\mathbb{R}^n$ and $p\in[1,\infty]$, the $L^p$-norm is defined as $\|\mathbf{x}\|_p\triangleq(\sum_{i=1}^n|x_i|^p)^{\frac{1}{p}},p\in[1,\infty)$, and $\|\mathbf{x}\|_\infty\triangleq\max_{i\in[n]}|x_i|$. Throughout the paper, $\|\cdot\|$ (i.e., without subscript) refers to the $L^2$-norm. The nuclear norm of matrix $\mathbf{A}$ is denoted by $\|\mathbf{A}\|_*\triangleq\textnormal{Tr}(\sqrt{\mathbf{A}^T\mathbf{A}})$, which is equal to the sum of its singular values. Let $p,q$ be two arbitrary pmfs on $\mathcal{X}$. The Kullback–Leibler divergence from $q$ to $p$ is defined as\footnote{We assume that $p$ is absolutely continuous with respect to $q$, i.e., $q(x)=0$ implies $p(x)=0$, otherwise, $D(p||q)\triangleq\infty$.} $D(p||q)\triangleq\sum_xp(x)\log_2(\frac{p(x)}{q(x)})$, which is also shown as $D(\mathbf{p}||\mathbf{q})$ with $\mathbf{p},\mathbf{q}$ being the corresponding probability vectors of $p,q$, respectively. Likewise, the cross entropy of $q$ relative to $p$ is given by $H(p,q)\triangleq -\sum_{x}p(x)\log q(x)$, which is also shown as $H(\mathbf{p},\mathbf{q})$. Finally, the total variation distance between two probability vectors is $d_\textnormal{TV}(\mathbf{p},\mathbf{q})\triangleq\frac{1}{2}\|\mathbf{p}-\mathbf{q}\|_1$.
%For $0\leq t\leq 1$, $H_b(t)\triangleq-t\log t-(1-t)\log (1-t)$ denotes the binary entropy function. The unit-step function is denoted by $s(\cdot)$, and $\lfloor\cdot\rfloor$ denotes the floor operator.
% Throughout the paper, for a random variable $Y$ with the corresponding probability vector $\mathbf{p}_Y$, $H(Y)$ and $H(\mathbf{p}_Y)$ are written interchangeably, and so are the quantities $D(p_Y(\cdot)||q_Y(\cdot))$ and $D(\mathbf{p}_Y||\mathbf{q}_Y)$. {\color{blue}All the logarithms in this paper are to the base of 2.} Given two positive integers $a,b$, $a$ modulo $b$ is abbreviated as $a\textnormal{ mod }b$. Finally, $d_{\textnormal{TV}}$, $\lfloor\cdot\rfloor$, and $\lceil\cdot\rceil$ denote the total variation distance, the floor, and the ceiling operators, respectively.
\section{System model}\label{sm}
\subsection{Machine learning (ML)}
An ML model is a function $f_\mathbf{\theta}:\mathcal{X}\to\mathcal{Y}$ parameterized by the vector $\mathbf{\theta}$, where $\mathcal{X}$ and $\mathcal{Y}$ denote the input and output spaces, respectively. Supervised classification is considered in this paper, where a labeled training dataset is used to train the model.

Assume that a training dataset $\mathcal{D}_{\textnormal{train}}\triangleq\{(\mathbf{x}_i,y_i)|i\in[n]\}$ is given, where each $\mathbf{x}_i$ is a $d$-dimensional example/sample and $y_i$ denotes its corresponding label. Learning refers to the process of obtaining the parameter vector $\mathbf{\theta}$ in the minimization of a loss function, i.e.,
\begin{equation}
    \min_{\mathbf{\theta}}\frac{1}{n}\sum_{i=1}^n l(f_\mathbf{\theta}(\mathbf{x}_i),y_i)+\omega(\mathbf{\theta}),
\end{equation}
where $l(\cdot,\cdot)$ measures the loss of predicting $f_\mathbf{\theta}(\mathbf{x}_i)$, while the true label is $y_i, i\in[n]$. A regularization term $\omega(\mathbf{\theta})$ can be added to the optimization to avoid overfitting.

Once the model is trained, i.e., $\mathbf{\theta}$ is obtained, it can be used for the prediction of any new sample. In practice, the prediction is (probability) vector-valued, i.e., it is a vector of confidence scores as $\mathbf{c}=(c_1,c_2,\ldots,c_k)^T$ with $\sum_ic_i=1,c_i\geq 0,i\in[k]$, where $c_i$ denotes the probability that the sample belongs to class $i$, and $k$ denotes the number of classes. Classification can be done by choosing the class that has the highest confidence score.

In this paper, we focus on logistic regression (LR), which can be modelled as
\begin{align}\label{confi}
    \mathbf{c} = \sigma(\mathbf{Wx} + \mathbf{b}),
\end{align}
where $\mathbf{W}$ and $\mathbf{b}$ are the parameters collectively denoted as $\mathbf{\theta}$, and $\sigma(\cdot)$ is the sigmoid or softmax function in the case of binary or multi-class classification, respectively.
%$\mathbf{c}$ is a $k$-dimensional vector representing the confidence score of each class for a particular $\mathbf{x}$

% \subsubsection{Neural Networks}
% An NN can be considered as a composition function with each function representing a layer. Generally, each layer is composed of a nonlinear function (such as tanh, Relu, Pooling) applied on an affine transformation of its input. The overall mapping can be considered as
% \begin{align}\label{NN}
%     \mathbf{c} = \sigma(\mathcal{F}(\mathbf{Wx}+\mathbf{b})),
% \end{align}
% where $\mathcal{F}(\cdot)$ is the overall composition function formed by NN. $\sigma(\cdot)$ is defined similarly as in LR.

\subsection{Vertical Federated Learning}
VFL is a type of ML model training approach in which two or more parties are involved in the training process, such that they hold the same set of samples with disjoint set of features. The main goal in VFL is to train a model in a privacy-preserving manner, i.e., to collaboratively train a model without each party having access to other parties' features. Typically, the training involves a trusted third party known as the coordinator authority (CA), and it is commonly assumed that only one party has access to the label information in the training and testing datasets. This party is named \textit{active} and the remaining parties are called \textit{
passive}. Throughout this paper, we assume that only two parties are involved; one is active and the other is passive. The active party is assumed to be \textit{honest but curious}, i.e., it obeys the protocols exactly, but may try to infer passive party's features based on the information received. As a result, the active party is referred to as the \textit{adversary} in this paper.

In the existing VFL frameworks, CA's main task is to coordinate the learning process once it has been initiated by the active party. During the training,
% each party forward-propagates their corresponding parameters to the last layer of their model, which is an intermediate layer of a larger model. This stage is performed locally at each party. Next,
CA receives the intermediate model updates from each party, and after a set of computations, backpropagates each party's gradient updates, separately and securely. To meet the privacy requirements of parties' datasets, cryptographic techniques such as secure multi-party computation (SMC) \cite{Andrew_SMC} or homomorphic encryption (HE) \cite{HE_IVAN} are used.

Once the global model is trained, upon the request of the active party for a new record prediction, each party computes the results of their model using their own features. CA aggregates these results from all the parties, obtains the prediction (confidence scores), and delivers that to the active party for further action.

As in \cite{Xinjian}, we assume that the active party has no information about the underlying distribution of the passive party's features. However, it is assumed that the knowledge about the name, types and range of the features is available to the active party to decide whether to participate in a VFL or not.

\section{Problem statement}\label{ps}
Let $(\mathbf{Y}^T,\mathbf{X}^T)^T$ denote a random $d_t$-dimensional input sample for prediction, where the $(d_t-d)$-dimensional $\mathbf{Y}$ and the $d$-dimensional $\mathbf{X}$ correspond to the feature values held by the active and passive parties, respectively. The VFL model under consideration is LR, where the confidence score is given by $\mathbf{c}=\sigma(\mathbf{z})$ with $\mathbf{z}=\bold{W}_{act} \mathbf{Y}+\bold{W}_{pas} \mathbf{X}+\mathbf{b}$. Denoting the number of classes in the classification task by $k$, $\bold{W}_{act}$ (with dimension $k\times (d_t-d)$) and $\bold{W}_{pas}$ (with dimension $k\times d$) are the model parameters of the active and passive parties, respectively, and $\mathbf{b}$ is the $k$-dimensional bias vector. From the definition of $\sigma(\cdot)$, we have
\begin{align}\label{qe1}
    \ln \frac{c_{m+1}}{c_m} = z_{m+1}-z_m,\ m\in[k-1],
\end{align}
where $c_m,z_m$ denote the $m$-th element of $\mathbf{c},\mathbf{z}$, respectively. Define $\mathbf{J}$ as
\begin{equation}\label{JJ}
    \mathbf{J}\triangleq \begin{bmatrix}
    -1 & 1 & 0 & 0 & \ldots & 0\\
    0 & -1 & 1 & 0 & \ldots & 0\\
    0 & 0 & -1 & 1 & \ldots & 0\\
    \vdots & \vdots & \vdots & \vdots & \ddots &\vdots \\
    0 & \ldots & \ldots & 0 & -1 & 1
    \end{bmatrix}_{(k-1)\times k},
\end{equation}
whose rows are cyclic permutations of the first row with offset equal to the row index$-1$. By multiplying both sides of $\mathbf{z}=\bold{W}_{act} \bold{Y}+\bold{W}_{pas} \bold{X}+\bold{b}$ with $\mathbf{J}$, and using (\ref{qe1}), we get
% Denote the $m$-th row of $\bold{W}_{pas}$ and $\bold{W}_{act}$ by $\bold{p}_{m}^{T}$ and $\bold{q}_{m}^{T}$, respectively, and construct matrices $\bold{A}$ and $\bold{A}^{'}$ whose $m$-th rows ($m\in[k-1]$) are $\bold{p}^{T}_{m+1}-\bold{p}^{T}_{m}$ and $\bold{q}^{T}_{m+1}-\bold{q}^{T}_{m}$, respectively.
\begin{align}
    \bold{JW}_{pas}\bold{X} &= \mathbf{Jz}-\bold{J}\bold{W}_{act}\mathbf{Y}-\mathbf{Jb}\label{eqeq1}\\
    &=\mathbf{c}^{'}-\bold{J}\bold{W}_{act}\mathbf{Y}-\mathbf{Jb},\label{eq:1}
    \end{align}
where $\mathbf{c}'$ is a $(k-1)$-dimensional vector whose $m$-th element is $\ln \frac{c_{m+1}}{c_m}$. Denoting the RHS of (\ref{eq:1}) by $\bold{b}'$, (\ref{eq:1}) writes in short as $\mathbf{AX}=\mathbf{b}'$, where $\mathbf{A}\triangleq \bold{JW}_{pas}$.

\begin{remark}\label{Rem1}
It is important to note that the way to obtain a system of linear equations is not unique, but all of them are equivalent in the sense that they result in the same solution space. More specifically, let $\mathbf{R}_{(k-1)\times(k-1)}$ be an invertible matrix, and define $\mathbf{A}_{\textnormal{new}}\triangleq\mathbf{RA}$ and $\mathbf{b}'_{\textnormal{new}}\triangleq\mathbf{Rb}'$. We have that both $\mathbf{AX}=\mathbf{b}'$ and $\mathbf{A}_{\textnormal{new}}\mathbf{X}=\mathbf{b}'_{\textnormal{new}}$ are equivalent.
\end{remark}

The white-box setting refers to the scenario where the adversary is aware of $(\mathbf{W}_{act},\mathbf{W}_{pas},\mathbf{b})$ and the black-box setting refers to the context in which the adversary is only aware of $\mathbf{W}_{act}$.
% it is straightforward to verify that $\bold{v}$ is known to the active party.
% Therefore, in the white box setting, where the adversary is aware of $\mathbf{A}$, observing the confidence score in each prediction enables the adversary to construct the satisfiable system of linear equations $\mathbf{Ax}_{pas}=\mathbf{v}$. In \cite{Xinjian} the minimum-norm solution of this system, i.e., $\mathbf{A}^+\mathbf{v}$ is named as ESA. This is the same as $\hat{\mathbf{x}}_\textnormal{LS}$ in this paper.

Since the active party wishes to reconstruct the passive party's features, one measure by which the attack performance can be evaluated is the \textit{mean square error} per feature, i.e.,
\begin{equation}\label{MSE}
    \textnormal{MSE}=\frac{1}{d}\mathds{E}\left[\|\mathbf{X}-\hat{\mathbf{X}}\|^2\right],
\end{equation}
where $\hat{\mathbf{X}}$ is the adversary's estimate. Let $N$ denote the number of predictions. Assuming that these $N$ predictions are carried out in an i.i.d. manner, \textit{Law of Large Numbers} (LLN) allows to approximate MSE by its empirical value $\frac{1}{Nd}\sum_{i=1}^{N}\|\mathbf{X}_i-\hat{\mathbf{X}_i}\|^2$, since the latter converges almost surely to (\ref{MSE}) as $N$ grows.\footnote{It is important to note however that in the case when the adversary's estimates are not independent across the predictions (non-i.i.d. case) the empirical MSE is not necessarily equal to (\ref{MSE}). In such cases, the empirical MSE is taken as the performance metric.} This observation is later used in the experimental results to evaluate the performance of different reconstruction attacks.
\section{Preliminaries}\label{prel}
Throughout this paper, we are interested in solving a satisfiable\footnote{This means that at least one solution exists for this system, which is due to the context in which this problem arises.} system of linear equations, in which the unknowns (features of the passive party) are in the range $[0,1]$. This can be captured by solving for $\mathbf{x}$ in the equation
$\mathbf{Ax}=\mathbf{b}$, where $\mathbf{A}\in\mathds{R}^{m\times d}$, $\mathbf{x}\in[0,1]^d$, and $\mathbf{b}\in\mathds{R}^m$ for some positive integers $m,d$. We are particularly interested in the case when the number of unknowns $d$ is greater than the number of equations $m$. This is a particular case of an \textit{indeterminate/under-determined system}, where $\mathbf{A}$ does not have full column rank and an infinitude of solutions exists for this linear system. Since the system under consideration is satisfiable, any solution can be written as $\mathbf{A}^{+}\mathbf{b}+(\mathbf{I}_d-\mathbf{A}^+\mathbf{A})\mathbf{w}$ for some $\mathbf{w}\in\mathds{R}^d$, where $\mathbf{A}^+$ denotes the pseudoinverse of $\mathbf{A}$ satisfying the Moore-Penrose conditions\cite{penrose}\footnote{When $\mathbf{A}$ has linearly independent rows, we have $\mathbf{A}^+=\mathbf{A}^T(\mathbf{AA}^T)^{-1}$.}. One property of pseudoinverse that is useful in the sequel is that if $\mathbf{A}=\mathbf{U\Sigma V}^T$ is a singular value decomposition (SVD) of $\mathbf{A}$, then $\mathbf{A}^+=\mathbf{V\Sigma}^+\mathbf{U}^T$, in which $\mathbf{\Sigma}^+$ is obtained by taking the reciprocal of each non-zero element on the diagonal of $\mathbf{\Sigma}$, and then transposing the matrix.

For a given pair $(\mathbf{A},\mathbf{b})$, define
\begin{equation}
    \mathcal{S}\triangleq\bigg\{\mathbf{x}\in\mathds{R}^d\bigg|\mathbf{A}\mathbf{x}=\mathbf{b}\bigg\}\ ,\     \mathcal{S_F}\triangleq\bigg\{\mathbf{x}\in\mathcal{S}\cap [0,1]^d\bigg\},
\end{equation}
as the solution space and feasible solution space, respectively. Alternatively, by defining
\begin{equation}\label{defw}
    \mathcal{W}\triangleq\bigg\{\mathbf{w}\in\mathds{R}^d \bigg| -\mathbf{A}^+\mathbf{b}\leq(\mathbf{I}_d-\mathbf{A}^+\mathbf{A})\mathbf{w}\leq\mathbf{1}_d-\mathbf{A}^+\mathbf{b}\bigg\},
\end{equation}
we have
\begin{equation}\label{defs}
    \mathcal{S_F}=\bigg\{(\mathbf{I}_d-\mathbf{A}^+\mathbf{A})\mathbf{w}+\mathbf{A}^+\mathbf{b}\bigg|\mathbf{w}\in\mathcal{W}\bigg\}.
\end{equation}
We have that $\mathcal{W}$ is a closed and bounded convex set defined as an intersection of $2d$ half-spaces. Since $\mathcal{S_F}$ is the image of $\mathcal{W}$ under an affine transformation, it is a closed convex polytope in $[0,1]^d$.

\begin{figure}[ht]
 \centering % centering figure
 \scalebox{0.7} % rescale the figure by a factor of 0.8
 {\includegraphics{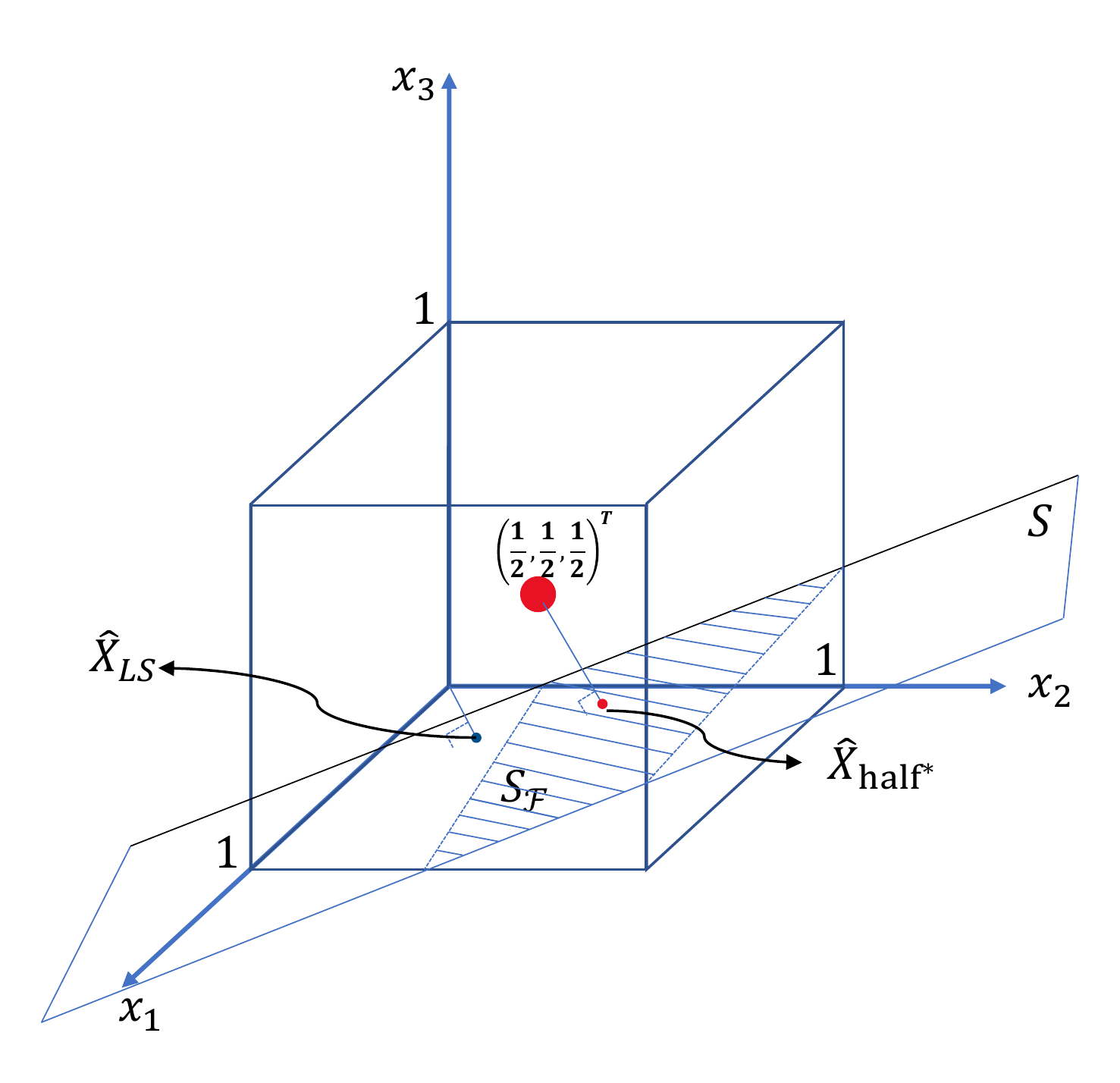}} % importing figure
 \caption{An example with $(m,d)=(1,3)$. The feasible solution space $\mathcal{S_F}$ is the intersection of the solution space $\mathcal{S}$, which is denoted by the plane representing $\mathbf{Ax}=\mathbf{b}$, with the hypercube $[0,1]^3$. In this example, the minimum-norm point on the solution space, i.e., $\hat{X}_\textnormal{LS}$, does not belong to $[0,1]^3$.}
 \label{fig11} % labeling to refer it inside the text
\end{figure}

\begin{figure}[ht]
 \centering % centering figure
 \scalebox{0.4} % rescale the figure by a factor of 0.8
 {\includegraphics{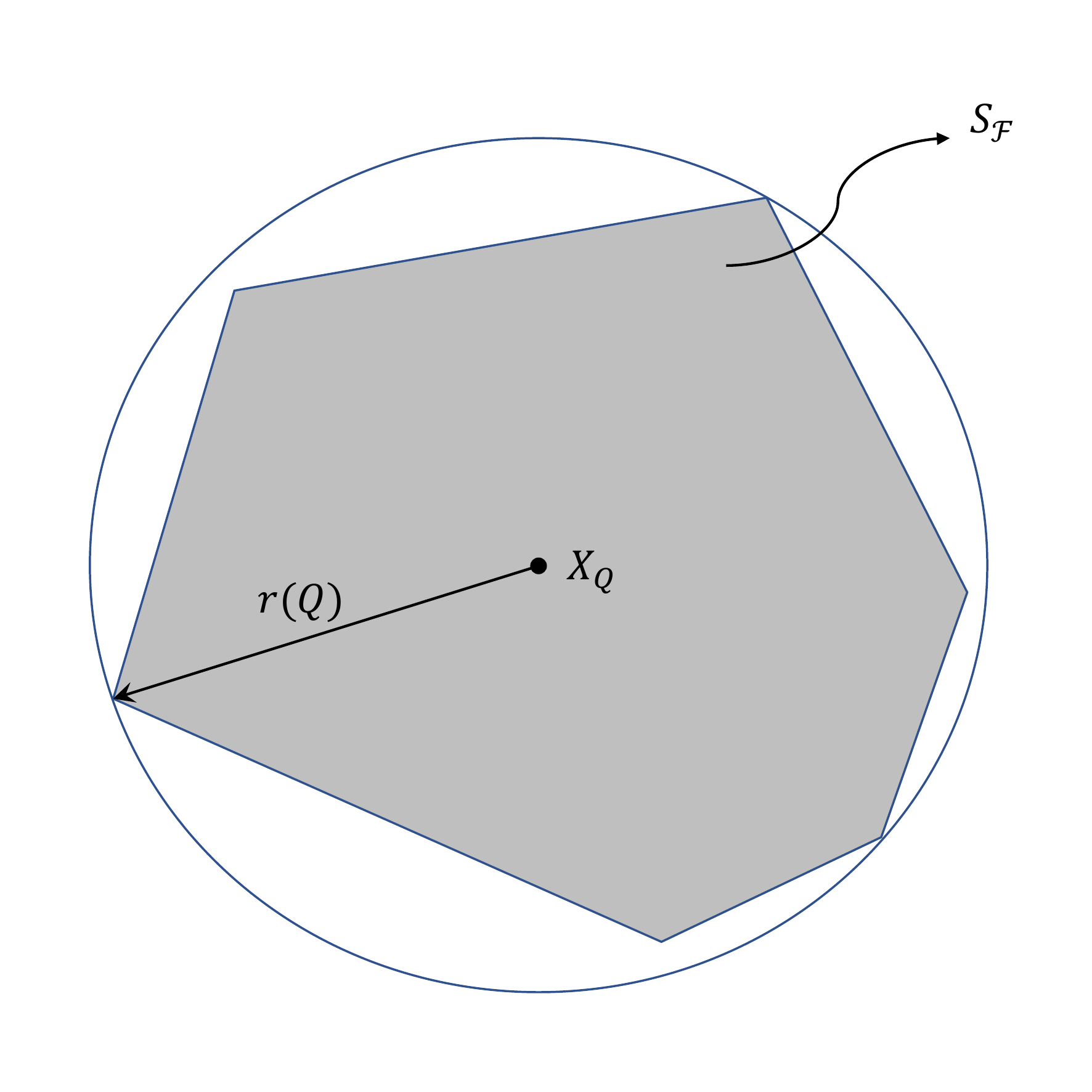}} % importing figure
 \caption{An example of the Chebyshev centre of $\mathcal{Q}$, which is the convex polytope $\mathcal{S_F}$ in this paper.}
 \label{fig123} % labeling to refer it inside the text
\end{figure}

For a general (satisfiable or not) system of linear equations $\mathbf{Cx}=\mathbf{d}$, we have that $\mathbf{C}^+\mathbf{d}=\argmin_{\mathbf{x}}\|\mathbf{Cx}-\mathbf{d}\|_2$. Moreover, if the system is satisfiable, the quantity $\mathbf{C}^+\mathbf{d}$ is the minimum $L^2$-norm solution. Therefore, in our problem, we have $\|\mathbf{x}\|_2\geq\|\mathbf{A}^+\mathbf{b}\|_2$ for all $\mathbf{x}\in\mathcal{S}$. Define $\hat{\mathbf{x}}_\textnormal{LS}\triangleq\mathbf{A}^+\mathbf{b}$, where the subscripts stands for \textit{Least Square}\footnote{Note that this naming is with a slight abuse of convention, as the term least square points to a vector $\mathbf{x}$ that minimizes $\|\mathbf{A}\mathbf{x}-\mathbf{b}\|_2$ in general, rather than a minimum norm solution.}. It is important to note that $\hat{\mathbf{x}}_\textnormal{LS}$ may not necessarily belong to $\mathcal{S_F}$, which is our region of interest. A geometrical representation for the case $(m,d)=(1,3)$ is provided in Figure \ref{fig11}. As a result, one can always consider the constrained optimization $\min_\mathbf{x}\|\mathbf{A}\mathbf{x}-\mathbf{b}\|_2$ with the constraint $\mathbf{0}_d\leq\mathbf{x}\leq\mathbf{1}_d$ in order to find a feasible solution. We denote any solution obtained in this manner by $\hat{\mathbf{x}}_\textnormal{CLS}$, where the subscript stands for \textit{Constrained Least Square}. In an indeterminate system, in contrast to $\hat{\mathbf{x}}_\textnormal{LS}$, $\hat{\mathbf{x}}_\textnormal{CLS}$ is not unique, and any point in $\mathcal{S_F}$ can be a candidate for $\hat{\mathbf{x}}_\textnormal{CLS}$ depending on the initial point of the solver.

Consider this simple example that $x$ is an unknown quantity in the range $[0,1]$ to be estimated and the error in the estimation is measured by the mean square, i.e., $|x-\hat{x}|^2$. Obviously, any point in $[0,1]$ can be proposed as an estimate for $x$. However, without any further knowledge about $x$, one can select the center of $[0,1]$, i.e., $\frac{1}{2}$ as an intuitive estimate. The rationale behind this selection is that the maximum error of the estimate $\hat{x}=\frac{1}{2}$, i.e., $\max_x|x-\frac{1}{2}|^2$ is minimal among all other estimates. In other words, the center minimizes the worst possible estimation error, and hence, it is optimal in the \textit{best-worst} sense. As mentioned earlier, any element of $\mathcal{S_F}$ is a feasible solution of $\mathbf{Ax}=\mathbf{b}$. This calls for a proper definition of the "center" of $\mathcal{S_F}$ as the best-worst solution. This is called the \textit{Chebyshev center} which is introduced in a general topological context as follows.

\begin{definition}
(Chebyshev Center \cite{Narang}) Let $\mathcal{Q}$ be a bounded subset of a metric space $(\mathcal{X},d)$, where $d$ denotes the distance. A Chebyshev center of $\mathcal{Q}$ is the center of minimal closed ball containing $\mathcal{Q}$, i.e., it is an element $x_\mathcal{Q}\in \mathcal{X}$ such that $\sup_{u\in \mathcal{Q}}d(x_\mathcal{Q},u)=\inf_{x\in \mathcal{X}}\sup_{u\in \mathcal{Q}}d(x,u)$. The quantity $r(\mathcal{Q})\triangleq \inf_{x\in \mathcal{X}}\sup_{u\in \mathcal{Q}}d(x,u)$ is the Chebyshev radius of $\mathcal{Q}$.
\end{definition}
In this paper, the metric space under consideration is $(\mathds{R}^d,\|\cdot\|_2)$ for some positive integer $d$, and we have
\begin{equation}\label{Chebcenter}
    \mathbf{x}_\mathcal{Q}\triangleq \argmin_{\hat{\mathbf{x}}\in\mathds{R}^d}\max_{\mathbf{x}\in \mathcal{Q}}\|\mathbf{x}-\hat{\mathbf{x}}\|_2^2.
\end{equation}
For example the Chebyshev center of $[0,1]$ in $(\mathds{R},\|\cdot\|_2)$ is $\frac{1}{2}$, and the Chebyshev center of the ball $\{\mathbf{r}\in\mathds{R}^2| \|\mathbf{r}\|\leq 1\}$ in $(\mathds{R}^2,\|\cdot\|_2)$ is the origin\footnote{Note that the Chebyshev center of the circle $\{\mathbf{r}\in\mathds{R}^2|\|\mathbf{r}\| = 1\}$ in the same metric is still the origin, but obviously it does not belong to the circle, as the circle is not convex in $\mathds{R}^2$}.

In this paper, the subset of interest, i.e., $\mathcal{Q}$ ($=\mathcal{S_F}$), is bounded, closed and convex. In this context, the Chebyshev center of $\mathcal{Q}$ is unique and belongs to $\mathcal{Q}$. Hence, in the argmin in (\ref{Chebcenter}), $\hat{\mathbf{x}}\in\mathds{R}^d$ can be replaced with $\hat{\mathbf{x}}\in\mathcal{Q}$. An example is provided in Figure \ref{fig123}.

Except for simple cases, computing the Chebyshev center is a computationally complex problem due to the non-convex quadratic inner maximization in (\ref{Chebcenter}). When the subset of interest, i.e., $\mathcal{Q}$, can be written as the convex hull of a finite number of points, there are algorithms \cite{Welzl,Botkin} that can find the Chebyshev center. In this paper, $\mathcal{Q}$($=\mathcal{S_F}$) is a convex polytope with a finite number of extreme points (as shown in Figure \ref{fig11}), hence, one can apply these algorithms. However, it is important to note that these extreme points are not given \textit{a priori} and they need to be found in the first place from the equation $\mathbf{Ax}=\mathbf{b}$. Since the procedure of finding the extreme points of $\mathcal{S_F}$ is exponentially complex, it makes sense to seek approximations for the Chebyshev center that can be handled efficiently. Therefore, in this paper, instead of obtaining the exact Chebyshev center of $\mathcal{S_F}$, we rely on its approximations. A nice approximation worth mentioning is given in \cite{Eldar}, which is in the context of signal processing and is explained in the sequel. This approximation is based on replacing
the non-convex inner maximization in (\ref{Chebcenter}) by its semidefinite
relaxation, and then solving the resulting convex-concave minimax problem. A clear explanation of this method, henceforth named Relaxed Chebyshev Center 1 (RCC1), is needed because it is used as one of the adversary's attack methods in this paper. Later, in proposition \ref{prop3}, a second relaxation is proposed, which is denoted as RCC2.

The set $\mathcal{Q}$ in \cite{Eldar} is an intersection of $k$ ellipsoids, i.e.,
\begin{equation}\label{setq}
    \mathcal{Q}\triangleq\{\mathbf{x}|f_i(\mathbf{x})\triangleq \mathbf{x}^T\mathbf{Q}_i\mathbf{x}+2\mathbf{g}_i^T\mathbf{x}+t_i\leq 0, i\in[k]\},
\end{equation}
where $\mathbf{Q}_i\succeq 0, \mathbf{g}_i\in\mathds{R}^d,t_i\in\mathds{R}$, and the optimization problem is given in (\ref{Chebcenter}). Defining $\mathbf{\Delta}\triangleq \mathbf{x}\mathbf{x}^T$, the equivalence holds
\begin{equation}\label{equivalence}
    \max_{\mathbf{x}\in\mathcal{Q}}\|\mathbf{x}-\hat{\mathbf{x}}\|^2 = \max_{(\mathbf{\Delta,x})\in\mathcal{G}}\{\|\hat{\mathbf{x}}\|^2-2\hat{\mathbf{x}}^T\mathbf{x}+\textnormal{Tr}(\mathbf{\Delta})\},
\end{equation}
where $\mathcal{G}=\{(\mathbf{\Delta,x})|\ \textnormal{Tr}(\mathbf{Q}_i\mathbf{\Delta})+2\mathbf{g}_i^T\mathbf{x}+t_i\leq 0, i\in[k],\mathbf{\Delta} =\mathbf{x}\mathbf{x}^T\}$. By focusing on the right hand side (RHS) of (\ref{equivalence}) instead of its left hand side (LHS), we are now dealing with the maximization of a concave (linear) function in $(\mathbf{\Delta,x})$. However, the downside is that $\mathcal{G}$ is not convex, in contrast to $\mathcal{Q}$. Here is where the relaxation is done in \cite{Eldar}, and the optimization is carried out over a relaxed version of $\mathcal{G}$, i.e.,
\begin{equation*}
    \mathcal{T}=\{(\mathbf{\Delta,x})|\ \textnormal{Tr}(\mathbf{Q}_i\mathbf{\Delta})+2\mathbf{g}_i^T\mathbf{x}+t_i\leq 0, i\in[k],\mathbf{\Delta}\succeq \mathbf{x}\mathbf{x}^T\},
\end{equation*}
which is a convex set, and obviously $\mathcal{G}\subset\mathcal{T}$. As a results, RCC1 is the solution to the following minimax problem
\begin{equation}\label{eq2}
    \min_{\hat{\mathbf{x}}}\max_{(\mathbf{\Delta,x})\in\mathcal{T}}\{\|\hat{\mathbf{x}}\|^2-2\hat{\mathbf{x}}^T\mathbf{x}+\textnormal{Tr}(\mathbf{\Delta})\}.
\end{equation}
Since $\mathcal{T}$ is bounded, and the objective in (\ref{eq2}) is convex in $\hat{\mathbf{x}}$ and concave (linear) in $(\mathbf{\Delta,x})$, the order of minimization and maximization can be changed. Knowing that the minimum (over $\hat{\mathbf{x}}$) of the objective function occurs at $\hat{\mathbf{x}}=\mathbf{x}$, (\ref{eq2}) reduces to
\begin{equation*}
    \max_{(\mathbf{\Delta,x})\in\mathcal{T}}\{-\|{\mathbf{x}}\|^2+\textnormal{Tr}(\mathbf{\Delta})\},
\end{equation*}
whose objective is concave and the constraints are linear matrrix inequalities, and RCC1 is the $\mathbf{x}$-part of the solution. Since $\mathcal{G}\subset \mathcal{T}$, the radius of the corresponding ball of RCC1 is an upperbound on $r(\mathcal{Q})$, i.e., the Chebyshev radius of $\mathcal{Q}$.

An explicit representation of $\mathbf{x}_{\textnormal{RCC1}}$ is given in \cite[Theorem III.1]{Eldar}, which is restated here.
\begin{equation}
    \mathbf{x}_{\textnormal{RCC1}}=-\bigg(\sum_{i=1}^k\alpha_i\mathbf{Q}_i\bigg)^{-1}\bigg(\sum_{i=1}^k\alpha_i\mathbf{g}_i\bigg),
\end{equation}
where $(\alpha_1,\ldots,\alpha_k)$ is an optimal solution of the following convex problem
\begin{align}
    &\min_{\alpha_i}\bigg\{\bigg(\sum_{i=1}^k\alpha_i\mathbf{g}_i\bigg)^T\bigg(\sum_{i=1}^k\alpha_i\mathbf{Q}_i\bigg)^{-1}\bigg(\sum_{i=1}^k\alpha_i\mathbf{g}_i\bigg)-\sum_{i=1}^k\alpha_it_i\bigg\}\nonumber\\
    &\ \textnormal{S.t.}\ \sum_{i=1}^k\alpha_i\mathbf{Q}_i\succeq\mathbf{I}, \nonumber\\
    &\ \ \ \ \ \ \ \alpha_i\geq 0, i\in[k],\label{sdp}
\end{align}
which can be cast as a semidefinite program (SDP) and solved by an SDP solver.

It is shown in \cite{Eldar} that similarly to the exact Chebyshev center, $\mathbf{x}_{\textnormal{RCC1}}$ is also unique (due to strict convexity of the $L^2$-norm) and it belongs to $\mathcal{Q}$, where the latter follows from the fact that for any $(\mathbf{\Delta}',\mathbf{x}')\in\mathcal{T}$, we have $\mathbf{x}'\in\mathcal{Q}$, which is due to the positive semidefiniteness of $\mathbf{Q}_i,i\in[k]$.

Finally, suppose that one of the constraints defining the set $\mathcal{Q}$ is a double-sided linear inequality of the form $l\leq\mathbf{a}^T\mathbf{x}\leq u$. We can proceed and write this constraint as two constraints, i.e., $\mathbf{a}^T\mathbf{x}\leq u$ and $-\mathbf{a}^T\mathbf{x}\leq -l$. However, it is shown in \cite{Eldar} that it is better (in the sense of a smaller minimax estimate) to write it in the quadratic form, i.e., $(\mathbf{a}^T\mathbf{x}-l)(\mathbf{a}^T\mathbf{x}-u)\leq 0$. Although the exact Chebyshev center of $\mathcal{Q}$ does not rely on its specific representation, the RCC1 does, as it is the result of a relaxation of $\mathcal{Q}$. Hence, any constraint of the form $l\leq\mathbf{a}^T\mathbf{x}\leq u$ will be replaced by $\mathbf{x}^T\mathbf{Q}\mathbf{x}+2\mathbf{g}^T\mathbf{x}+t\leq 0$, with $\mathbf{Q}=\mathbf{aa}^T, \mathbf{g}=-\frac{u+l}{2}\mathbf{a}$, and $t = ul$.

The final discussion in this section is Von Neumann's trace inequality \cite{marshall}, which is used throughout the paper. It states that for two $n\times n$ (complex) matrices $\mathbf{A,B}$, with singular values $\alpha_1\geq\alpha_2\geq\ldots\geq\alpha_n$ and $\beta_1\geq\beta_2\geq\ldots\geq\beta_n$, respectively, we have
\begin{equation}
    |\textnormal{Tr}(\mathbf{AB})|\leq\sum_{i=1}^n\alpha_i\beta_i.
\end{equation}
If $\mathbf{A,B}$ are symmetric positive semidefinite matrices with eigenvalues $a_1\geq a_2\geq\ldots\geq a_n$ and $b_1\geq b_2\geq\ldots\geq b_n$, respectively, we have
\begin{equation}
    \sum_{i=1}^na_ib_{n-i+1}\leq\textnormal{Tr}(\mathbf{AB})\leq\sum_{i=1}^na_ib_i.
\end{equation}

\section{White-box setting}\label{wbs}
Let $X\in[0,1]$ be a random variable distributed according to an unknown CDF $F_{X}$. The goal is to find an estimate $\hat{X}$. First, we need the following lemma, which states that when there is no side information available to the estimator, there is no loss of optimality in restricting to the set of deterministic estimates.
\begin{lemma}\label{lem1}
Any randomized guess is outperformed by its statistical mean, and the performance improvement is equal to the variance of the random guess.
\end{lemma}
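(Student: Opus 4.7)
The plan is to expand the mean square error of an arbitrary randomized guess around its expectation and exploit independence from $X$ (justified by the absence of side information) to kill the cross term, leaving exactly the variance as the performance gap.

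First I would fix the setup: let $\hat{X}$ be any randomized estimator of $X$, and, since no side information is available, model this as $\hat{X}$ being independent of $X$. Let $\mu \triangleq \mathds{E}[\hat{X}]$, which serves as the deterministic candidate that replaces $\hat{X}$. The goal is to compare $\mathds{E}[(X-\hat{X})^2]$ with $\mathds{E}[(X-\mu)^2]$.

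Next I would perform the usual add-and-subtract trick by writing $X-\hat{X} = (X-\mu) - (\hat{X}-\mu)$, squaring, and taking expectations. This yields three terms: $\mathds{E}[(X-\mu)^2]$, $\mathds{E}[(\hat{X}-\mu)^2] = \textnormal{Var}(\hat{X})$, and a cross term $-2\mathds{E}[(X-\mu)(\hat{X}-\mu)]$. The cross term is where independence is used: since $\hat{X} \independent X$, the expectation factorizes, and $\mathds{E}[\hat{X}-\mu] = 0$ by definition of $\mu$, so the cross term vanishes. This gives the identity
\begin{equation*}
    \mathds{E}[(X-\hat{X})^2] \;=\; \mathds{E}[(X-\mu)^2] \;+\; \textnormal{Var}(\hat{X}),
\end{equation*}
from which both claims follow immediately: the deterministic estimator $\mu$ attains a strictly smaller (or equal) MSE, and the gap is exactly $\textnormal{Var}(\hat{X})$.

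The only genuinely delicate point to address is the independence assumption: one must justify that ``no side information'' translates into $\hat{X} \independent X$. I would add a short sentence noting that any randomization used to produce $\hat{X}$ can be realized on an auxiliary probability space uncorrelated with $X$, which is the formal meaning of having no side information about $X$; without this, the cross term need not vanish and the claim would fail. The rest of the argument is a one-line computation, so I do not anticipate any real obstacle.
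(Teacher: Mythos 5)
Your proof is correct and is essentially the same argument as the paper's: the paper states the identity $\mathds{E}[(X-\hat{X})^2]=\mathds{E}[(X-\mathds{E}[\hat{X}])^2]+\textnormal{Var}(\hat{X})$ directly, which is exactly the add-and-subtract expansion you carry out, with the cross term vanishing because the guess is drawn from a fixed CDF independent of $X$. Your explicit remark that the independence of $\hat{X}$ from $X$ is what kills the cross term is a point the paper leaves implicit, but it is the same proof.
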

\begin{proof}
Let $\hat{X}$ be a random guess distributed according to a fixed CDF $F_{\hat{X}}$. We have
\begin{align*}
    \mathds{E}[(X-\hat{X})^2]&=\mathds{E}[(X-\mathds{E}[\hat{X}])^2]+\textnormal{Var}(\hat{X})\nonumber\\
    &\geq \mathds{E}[(X-\mathds{E}[\hat{X}])^2].
\end{align*}
Hence, any estimate $\hat{X}\sim F_{\hat{X}}$ is outperformed by the new deterministic estimate $\mathds{E}[\hat{X}]$, whose performance improvement is $\textnormal{Var}(\hat{X})$.
\end{proof}

%  As an example, when there is no side information about $X$, the deterministic estimate $\hat{X}=\frac{1}{3}$ is better than any other random estimate $\hat{X}$ for which $\mathds{E}[\hat{X}]=\frac{1}{3}$.

Since the underlying distribution of $X$ is unknown to the estimator, one conventional approach is to consider the \textit{best-worst} estimator. In other words, the goal of the estimator is to minimize the maximum error, which can be cast as a minimax problem, i.e.,
\begin{equation}
    \min_{\hat{x}}\max_{F_X}\mathds{E}[(X-\hat{x})^2],
\end{equation}
where lemma \ref{lem1} is used in the minimization, i.e., instead of minimizing over $F_{\hat{X}}$, we are minimizing over the singleton $\hat{x}$. Since for any fixed $\hat{x}$, we have $\max_{F_X}\mathds{E}[(X-\hat{x})^2]=\max_{x\in[0,1]}(x-\hat{x})^2$, the best-worst estimation is the solution to
\begin{equation}
   \min_{\hat{x}}\max_{x\in[0,1]}(x-\hat{x})^2,
\end{equation}
which is the Chebyshev center of the interval $[0,1]$ in the space $(\mathds{R},\|\cdot\|_2)$ and it is equal to $\frac{1}{2}$. This implies that with the estimator being blind to the underlying distribution and any possible side information, the best-worst estimate is the Chebyshev center of the support of the random variable, here $[0,1]$.

As one step further, consider that $\mathbf{X}\in[0,1]^d$ is a $d$-dimensional random vector distributed according to an unknown CDF $F_\mathbf{X}$. Although the estimator is still unaware of $F_\mathbf{X}$, this time it has access to the matrix-vector pair $(\mathbf{A},\mathbf{AX})$, and based on this side information, it gives an estimate $\hat{\mathbf{X}}$. This side information refines the prior belief of $\mathbf{X}\in[0,1]^d$ to $\mathbf{X}\in\mathcal{S_F}$. Similarly to the previous discussion, the best-worst estimator gives the Chebyshev center of $\mathcal{S_F}$. As mentioned before, obtaining the exact Chebyshev centre of $\mathcal{S_F}$ is computationally difficult, hence, we focus on its approximations. However, prior to the approximation, we start with simple heuristic estimates that bear an intuitive notion of centeredness.

The first scheme in estimating $\mathbf{X}$, is the naive estimate of $\frac{1}{2}\mathbf{1}_d$, which is the Chebyshev center of $[0,1]^d$. We name this estimate as $\hat{\mathbf{X}}_\textnormal{half}=\frac{1}{2}\mathbf{1}_d$. We already know that when the only information that we have about $\mathbf{X}$ is that it belongs to $[0,1]^d$, then $\hat{\mathbf{X}}_\textnormal{half}$ is optimal in the best-worst sense.

The adversary can perform better when the side information $(\mathbf{AX},\mathbf{A})$ is available. A second scheme can be built on top of the previous scheme as follows. The estimator finds a solution in the solution space, i.e., $\mathcal{S}$, that is closest to $\frac{1}{2}\mathbf{1}_d$, which is shown in Figure \ref{fig11}. In this scheme, the estimate, named $\hat{\mathbf{X}}_{\textnormal{half}^*}$, is given by
\begin{equation}\label{xhalf}
    \hat{\mathbf{X}}_{\textnormal{half}^*}\triangleq\argmin_{{\mathbf{x}}\in\mathcal{S}}\|{\mathbf{x}}-\frac{1}{2}\mathbf{1}_d\|^2,
\end{equation}
whose explicit representation is provided in the following proposition.\footnote{Note that $\hat{\mathbf{X}}_{\textnormal{half}^*}$ may or may not belong to $\mathcal{S_F}$.}
\begin{proposition}
We have
\begin{equation}\label{half*2}
   \hat{\mathbf{X}}_{\textnormal{half}^*}= \mathbf{A}^{+}\mathbf{b}+\frac{1}{2}(\mathbf{I}_d-\mathbf{A}^+\mathbf{A})\mathbf{1}_d.
\end{equation}
\end{proposition}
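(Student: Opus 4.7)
The plan is to directly optimize the objective using the parametrization of the solution space introduced earlier in the excerpt. Since the system $\mathbf{A}\mathbf{x}=\mathbf{b}$ is satisfiable, every $\mathbf{x}\in\mathcal{S}$ can be written as $\mathbf{x}=\mathbf{A}^+\mathbf{b}+(\mathbf{I}_d-\mathbf{A}^+\mathbf{A})\mathbf{w}$ for some $\mathbf{w}\in\mathds{R}^d$, so I would substitute this into (\ref{xhalf}) and turn the constrained problem over $\mathcal{S}$ into an unconstrained quadratic minimization over $\mathbf{w}\in\mathds{R}^d$.

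Next I would exploit the Moore-Penrose properties. The matrix $\mathbf{P}\triangleq\mathbf{I}_d-\mathbf{A}^+\mathbf{A}$ is the orthogonal projector onto $\textnormal{Null}(\mathbf{A})$ (it is symmetric and idempotent, i.e., $\mathbf{P}^T=\mathbf{P}$ and $\mathbf{P}^2=\mathbf{P}$), while $\mathbf{A}^+\mathbf{b}$ lies in $\textnormal{Row}(\mathbf{A})=\textnormal{Null}(\mathbf{A})^\perp$, so $\mathbf{P}\mathbf{A}^+\mathbf{b}=\mathbf{0}$. With these facts, writing
\begin{equation*}
\mathbf{x}-\tfrac{1}{2}\mathbf{1}_d \;=\; \bigl(\mathbf{A}^+\mathbf{b}-\tfrac{1}{2}(\mathbf{I}_d-\mathbf{P})\mathbf{1}_d\bigr)+\mathbf{P}\bigl(\mathbf{w}-\tfrac{1}{2}\mathbf{1}_d\bigr),
\end{equation*}
the two summands live in orthogonal subspaces ($\textnormal{Row}(\mathbf{A})$ and $\textnormal{Null}(\mathbf{A})$), so the cross term in $\|\mathbf{x}-\tfrac{1}{2}\mathbf{1}_d\|^2$ vanishes and the objective decouples into a piece independent of $\mathbf{w}$ plus $\|\mathbf{P}(\mathbf{w}-\tfrac{1}{2}\mathbf{1}_d)\|^2$.

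The remaining minimization is then trivial: choosing $\mathbf{w}=\tfrac{1}{2}\mathbf{1}_d$ (or any $\mathbf{w}$ with $\mathbf{P}\mathbf{w}=\tfrac{1}{2}\mathbf{P}\mathbf{1}_d$) zeroes out the second term. Plugging this optimal $\mathbf{w}$ back into the parametrization yields
\begin{equation*}
\hat{\mathbf{X}}_{\textnormal{half}^*} \;=\; \mathbf{A}^+\mathbf{b}+\tfrac{1}{2}(\mathbf{I}_d-\mathbf{A}^+\mathbf{A})\mathbf{1}_d,
\end{equation*}
which is (\ref{half*2}). Uniqueness of $\hat{\mathbf{X}}_{\textnormal{half}^*}$ as an element of $\mathcal{S}$ follows from strict convexity of the squared $L^2$-norm on the affine set $\mathcal{S}$, even though the minimizing $\mathbf{w}$ itself is not unique (the ambiguity lies entirely in $\textnormal{Row}(\mathbf{A})$, which is annihilated by $\mathbf{P}$).

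There is essentially no obstacle here; the only step requiring care is verifying that $\mathbf{I}_d-\mathbf{A}^+\mathbf{A}$ is the orthogonal projector onto $\textnormal{Null}(\mathbf{A})$ so that the cross term cancels. This is immediate from the Moore-Penrose conditions cited in the preliminaries, namely that $\mathbf{A}^+\mathbf{A}$ is symmetric and satisfies $(\mathbf{A}^+\mathbf{A})^2=\mathbf{A}^+\mathbf{A}$, which can be checked via the SVD representation $\mathbf{A}=\mathbf{U}\mathbf{\Sigma}\mathbf{V}^T$ and $\mathbf{A}^+=\mathbf{V}\mathbf{\Sigma}^+\mathbf{U}^T$ already recalled in the preliminaries.
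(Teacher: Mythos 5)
Your proof is correct and follows essentially the same route as the paper: parametrize $\mathcal{S}$ as $\mathbf{A}^+\mathbf{b}+(\mathbf{I}_d-\mathbf{A}^+\mathbf{A})\mathbf{w}$, reduce to an unconstrained least-squares problem in $\mathbf{w}$, and rely on $\mathbf{I}_d-\mathbf{A}^+\mathbf{A}$ being the orthogonal projector onto $\textnormal{Null}(\mathbf{A})$ together with $(\mathbf{I}_d-\mathbf{A}^+\mathbf{A})\mathbf{A}^+\mathbf{b}=\mathbf{0}$. The only cosmetic difference is that you read off the minimizer $\mathbf{w}=\tfrac{1}{2}\mathbf{1}_d$ from the orthogonal decomposition of the residual, whereas the paper writes $\mathbf{w}^*=(\mathbf{I}_d-\mathbf{A}^+\mathbf{A})^+(\tfrac{1}{2}\mathbf{1}_d-\mathbf{A}^+\mathbf{b})$ and then simplifies using $(\mathbf{I}_d-\mathbf{A}^+\mathbf{A})^+=\mathbf{I}_d-\mathbf{A}^+\mathbf{A}$; your version is marginally cleaner since it avoids the pseudoinverse of the projector altogether.
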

\begin{proof}
For any $\mathbf{x}\in\mathcal{S}$, we have $\mathbf{x}=\mathbf{A}^{+}\mathbf{b}+(\mathbf{I}_d-\mathbf{A}^+\mathbf{A})\mathbf{w}$ for some $\mathbf{w}\in\mathds{R}^d$. Hence,
\begin{align}
    \min_{\hat{\mathbf{x}}\in\mathcal{S}}\|\hat{\mathbf{x}}-\frac{1}{2}\mathbf{1}_d\|^2&=\min_{\mathbf{w}\in\mathds{R}^d}\|\mathbf{A}^{+}\mathbf{b}+(\mathbf{I}_d-\mathbf{A}^+\mathbf{A})\mathbf{w}-\frac{1}{2}\mathbf{1}_d\|^2\label{e1}\\
    &=\min_{\mathbf{w}\in\mathds{R}^d}\|\mathbf{Cw}-\mathbf{z}\|^2,\label{e2}
\end{align}
where $\mathbf{C}\triangleq\mathbf{I}_d-\mathbf{A}^+\mathbf{A}$ and $\mathbf{z}\triangleq\frac{1}{2}\mathbf{1}_d-\mathbf{A}^{+}\mathbf{b}$. It is already known that the minimizer in (\ref{e2}) is $\mathbf{w}^*=\mathbf{C}^+\mathbf{z}$, which results in
\begin{align}
   \hat{\mathbf{X}}_{\textnormal{half}^*}&=\mathbf{A}^{+}\mathbf{b}+(\mathbf{I}_d-\mathbf{A}^+\mathbf{A})\mathbf{w}^*\nonumber\\ &=\mathbf{A}^{+}\mathbf{b}+(\mathbf{I}_d-\mathbf{A}^+\mathbf{A})(\mathbf{I}_d-\mathbf{A}^+\mathbf{A})^+(\frac{1}{2}\mathbf{1}_d-\mathbf{A}^{+}\mathbf{b})\nonumber\\
   &=\mathbf{A}^{+}\mathbf{b}+(\mathbf{I}_d-\mathbf{A}^+\mathbf{A})^2(\frac{1}{2}\mathbf{1}_d-\mathbf{A}^{+}\mathbf{b})\label{es1}\\
   &=\mathbf{A}^{+}\mathbf{b}+(\mathbf{I}_d-\mathbf{A}^+\mathbf{A})(\frac{1}{2}\mathbf{1}_d-\mathbf{A}^{+}\mathbf{b})\label{es2}\\
   &=\mathbf{A}^{+}\mathbf{b}+\frac{1}{2}(\mathbf{I}_d-\mathbf{A}^+\mathbf{A})\mathbf{1}_d,\label{es3}
\end{align}
where (\ref{es1}) to (\ref{es3}) are justified as follows. Let $\mathbf{A}=\mathbf{U\Sigma V}^T$ be an SVD of $\mathbf{A}$. From $\mathbf{A}^+=\mathbf{V\Sigma}^+\mathbf{U}^T$, we get $\mathbf{A}^+\mathbf{A}=\mathbf{V\Sigma}^+\mathbf{\Sigma V}^T$ and $\mathbf{I}-\mathbf{A}^+\mathbf{A}=\mathbf{V}(\mathbf{I}-\mathbf{\Sigma}^+\mathbf{\Sigma})\mathbf{V}^T$. Knowing that $\mathbf{I}-\mathbf{\Sigma}^+\mathbf{\Sigma}$ is a diagonal matrix with only 0 and 1 on its diagonal, we get $(\mathbf{I}-\mathbf{\Sigma}^+\mathbf{\Sigma})^+=\mathbf{I}-\mathbf{\Sigma}^+\mathbf{\Sigma}$, and therefore, $(\mathbf{I}-\mathbf{A}^+\mathbf{A})^+=\mathbf{I}-\mathbf{A}^+\mathbf{A}$, which results in (\ref{es1}). Noting that $(\mathbf{I}-\mathbf{A}^+\mathbf{A})$ is a projector results in (\ref{es2}). Finally, by noting that $(\mathbf{I}-\mathbf{\Sigma}^+\mathbf{\Sigma})\mathbf{\Sigma}^+\mathbf{\Sigma}=\mathbf{0}$, we get $(\mathbf{I}-\mathbf{A}^+\mathbf{A})\mathbf{A}^+\mathbf{b}=\mathbf{0}$, which results in (\ref{es3}).
\end{proof}
Thus far, we have considered two simple schemes, i.e., $\hat{\mathbf{X}}_{\textnormal{half}}$ and $\hat{\mathbf{X}}_{\textnormal{half}^*}$. In what follows, we investigate two approximations for the Chebyshev center of $\mathcal{S_F}$. The exact Chebyshev center of $\mathcal{S_F}$ is given by
\begin{equation}\label{cheb1}
    \argmin_{\hat{\mathbf{x}}\in\mathcal{S_F}}\max_{\mathbf{x}\in\mathcal{S_F}}\|\mathbf{x}-\hat{\mathbf{x}}\|^2.
\end{equation}
Let $\mathbf{A}=\mathbf{U\Sigma V}^T$ be an SVD of $\mathbf{A}$, where the singular values are arranged in a non-increasing order, i.e., $\sigma_1\geq\sigma_2\geq\sigma_3\ldots$. Let $r\triangleq\textnormal{rank}(\mathbf{A})$. Hence, $\textnormal{Null}(\mathbf{A})=\textnormal{Span}\{\mathbf{v}_{r+1},\mathbf{v}_{r+2},\ldots,\mathbf{v}_d\}$, which is the span of those right singular vectors that correspond to zero singular values. Define $\mathbf{W}\triangleq[\mathbf{v}_{r+1},\mathbf{v}_{r+2},\ldots,\mathbf{v}_d]_{d\times(d-r)}$.

The orthonormal columns of $\mathbf{V}$ can be regarded as a basis for $\mathds{R}^d$. Hence, any vector $\mathbf{w}\in\mathds{R}^d$ can be written as $\mathbf{w}=\mathbf{Wu}+\mathbf{q}$, where $\mathbf{u}\in\mathds{R}^{d-r},\mathbf{q}\in\mathds{R}^d$, and $\mathbf{W}^T\mathbf{q}=\mathbf{0}$. With the definition of $\mathbf{W}$, we have
\begin{align*}
    \mathbf{I}-\mathbf{A}^+\mathbf{A}&=\mathbf{V}(\mathbf{I}-\mathbf{\Sigma}^+\mathbf{\Sigma})\mathbf{V}^T\\
    &=\mathbf{WW}^T.
\end{align*}
Noting that $\mathbf{W}$ has orthonormal columns, we have $\mathbf{W}^T\mathbf{W}=\mathbf{I}_{(d-r)}$. Therefore, $\mathcal{S_F}$ in (\ref{defs}) can be written as
\begin{equation}\label{newdef}
   \mathcal{S_F}=\{\mathbf{A}^+\mathbf{b}+\mathbf{Wu}|\mathbf{u}\in\tilde{\mathcal{S_F}}\},\ \Tilde{\mathcal{S_F}}\triangleq\{\mathbf{u}\in\mathds{R}^{(d-r)}|-\mathbf{A}^+\mathbf{b}\leq\mathbf{Wu}\leq\mathbf{1}-\mathbf{A}^+\mathbf{b}\}.
\end{equation}
% For any $\mathbf{x}\in\mathcal{S_F}$, there exist $\mathbf{w}\in\mathds{R}^d$ such that $\mathbf{x}=\mathbf{A}^+b+(\mathbf{I}-\mathbf{A}^+\mathbf{A})\mathbf{w}$ and $\mathbf{x}\in[0,1]^d$. Since $\mathbf{I}-\mathbf{A}^+\mathbf{A}$ is a projector to the null space of $\mathbf{A}$, we have the following equivalence
% \begin{equation}\label{equival}
%     \mathbf{x}\in\mathcal{S_F}\Longleftrightarrow \mathbf{u}\in\mathds{R}^{(d-r)}:\mathbf{x}=\mathbf{A}^+\mathbf{b}+\mathbf{Wu}\ \&\ -\mathbf{A}^+\mathbf{b}\leq\mathbf{Wu}\leq\mathbf{1}-\mathbf{A}^+\mathbf{b}.
% \end{equation}
% By defining the set $\Tilde{\mathcal{S_F}}\triangleq\{\mathbf{u}\in\mathds{R}^{(d-r)}|-\mathbf{A}^+\mathbf{b}\leq\mathbf{Wu}\leq\mathbf{1}-\mathbf{A}^+\mathbf{b}\}$, (\ref{cheb1}) reduces to
Therefore,
\begin{align}
    \min_{\hat{\mathbf{x}}\in\mathcal{S_F}}\max_{\mathbf{x}\in\mathcal{S_F}}\|\mathbf{x}-\hat{\mathbf{x}}\|^2&=\min_{\hat{\mathbf{u}}\in\tilde{\mathcal{S_F}}}\max_{\mathbf{u}\in\tilde{\mathcal{S_F}}}\|\mathbf{A}^+\mathbf{b}+\mathbf{Wu}-\mathbf{A}^+\mathbf{b}-\mathbf{W\hat{u}}\|^2\nonumber\\
    &=\min_{\hat{\mathbf{u}}\in\tilde{\mathcal{S_F}}}\max_{\mathbf{u}\in\tilde{\mathcal{S_F}}}\|\mathbf{u}-\mathbf{\hat{u}}\|^2.
\end{align}
% The set $\tilde{\mathcal{S_F}}$ consists of $2d$ linear constraints in $\mathbf{u}$. In order to cast $\tilde{\mathcal{S_F}}$ as $\mathcal{Q}$ in (\ref{setq}), i.e., having the constraints in the quadratic form, we simply replace any pair of linear constraints by its quadratic dual, i.e., we replace $l\leq u\leq h$ by $(u-l)(u-h)\leq 0$. Denoting the $i$-th row of $\mathbf{W}$ and the $i$-th element of $\mathbf{A}^+\mathbf{b}$ by $\mathbf{a}_i^T$ and $q_i, i\in[d]$, respectively, $\tilde{\mathcal{S_F}}$ can be written as (\ref{setq}) with $\mathbf{Q}_i=\mathbf{a}_i\mathbf{a}_i^T$, $\mathbf{g}_i=(q_i-\frac{1}{2})\mathbf{a}_i$, and $t_i=-q_i(1-q_i)$. With these replacements, we have the first relaxed Chebyshev center of $\mathcal{S_F}$ given by

Denoting the $i$-th row of $\mathbf{W}$ and the $i$-th element of $\mathbf{A}^+\mathbf{b}$ by $\mathbf{a}_i^T$ and $q_i, i\in[d]$, respectively, the following proposition provides an approximation for the exact Chebyshev center in (\ref{cheb1}).
\begin{proposition}
A relaxed Chebyshev center of $\mathcal{S_F}$ is given by
\begin{equation}\label{Eldarcheb}
    \hat{\mathbf{X}}_\textnormal{RCC1}=\mathbf{A}^+\mathbf{b}-\mathbf{W}\bigg(\sum_{i=1}^d\alpha_i\mathbf{Q}_i\bigg)^{-1}\bigg(\sum_{i=1}^d\alpha_i\mathbf{g}_i\bigg),
\end{equation}
where $\alpha_i$'s are obtained as in (\ref{sdp}) with $\mathbf{Q}_i=\mathbf{a}_i\mathbf{a}_i^T$, $\mathbf{g}_i=(q_i-\frac{1}{2})\mathbf{a}_i$, and $t_i=-q_i(1-q_i)$. Furthermore, $\hat{X}_{\textnormal{RCC1}}$ is unique and it belongs to the set of feasible solution, i.e., $\mathcal{S_F}$.
\end{proposition}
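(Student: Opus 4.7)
The plan is to reduce the exact Chebyshev-center problem for $\mathcal{S_F}$ to the semidefinite relaxation machinery already reviewed for a set $\mathcal{Q}$ of the form (\ref{setq}). The reparametrization $\mathcal{S_F}\leftrightarrow\tilde{\mathcal{S_F}}$ in (\ref{newdef}) was already carried out in the text and turns (\ref{cheb1}) into
\begin{equation*}
\min_{\hat{\mathbf{u}}\in\tilde{\mathcal{S_F}}}\max_{\mathbf{u}\in\tilde{\mathcal{S_F}}}\|\mathbf{u}-\hat{\mathbf{u}}\|^2,
\end{equation*}
so the first task is to put $\tilde{\mathcal{S_F}}$ into the ellipsoid-intersection form (\ref{setq}).

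Writing the constraints in $\tilde{\mathcal{S_F}}$ row-wise gives $d$ double-sided linear inequalities $-q_i\leq \mathbf{a}_i^T\mathbf{u}\leq 1-q_i$, $i\in[d]$. Following the quadratic rewriting prescription at the end of Section \ref{prel}, I would rewrite each such pair as
\begin{equation*}
(\mathbf{a}_i^T\mathbf{u}+q_i)(\mathbf{a}_i^T\mathbf{u}+q_i-1)\leq 0,
\end{equation*}
which expands to $\mathbf{u}^T(\mathbf{a}_i\mathbf{a}_i^T)\mathbf{u}+2(q_i-\tfrac{1}{2})\mathbf{a}_i^T\mathbf{u}-q_i(1-q_i)\leq 0$. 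Reading off coefficients identifies $\mathbf{Q}_i=\mathbf{a}_i\mathbf{a}_i^T\succeq 0$, $\mathbf{g}_i=(q_i-\tfrac{1}{2})\mathbf{a}_i$, and $t_i=-q_i(1-q_i)$, exactly as stated. At this point $\tilde{\mathcal{S_F}}$ is in the form (\ref{setq}), so the explicit RCC1 formula together with the SDP (\ref{sdp}) yields $\mathbf{u}_{\textnormal{RCC1}}=-(\sum_i\alpha_i\mathbf{Q}_i)^{-1}(\sum_i\alpha_i\mathbf{g}_i)$. Undoing the change of variables via $\mathbf{x}=\mathbf{A}^+\mathbf{b}+\mathbf{W}\mathbf{u}$ gives the formula (\ref{Eldarcheb}).

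For the second part of the statement, uniqueness of $\hat{\mathbf{X}}_{\textnormal{RCC1}}$ follows from the strict convexity of the $L^2$-norm, exactly as recalled after (\ref{sdp}), since the relaxed minimax has a strictly convex outer objective. Membership in $\mathcal{S_F}$ is inherited from the corresponding property for $\mathcal{Q}$: because every $\mathbf{Q}_i=\mathbf{a}_i\mathbf{a}_i^T$ is positive semidefinite, Eldar's argument that the $\mathbf{x}$-part of any feasible pair $(\mathbf{\Delta}',\mathbf{x}')$ in the relaxed set $\mathcal{T}$ still satisfies the original quadratic constraints applies verbatim to $\mathbf{u}_{\textnormal{RCC1}}$, yielding $\mathbf{u}_{\textnormal{RCC1}}\in\tilde{\mathcal{S_F}}$ and therefore $\hat{\mathbf{X}}_{\textnormal{RCC1}}\in\mathcal{S_F}$ via (\ref{newdef}).

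There is no real analytic obstacle here; the proof is essentially bookkeeping. The only step worth checking carefully is the sign and shift alignment in the quadratic rewriting (which must produce the announced $\mathbf{g}_i$ and $t_i$) and the fact that $\sum_i\alpha_i\mathbf{Q}_i$ is invertible so that the closed form is meaningful, but this is automatic from the constraint $\sum_i\alpha_i\mathbf{Q}_i\succeq\mathbf{I}$ in (\ref{sdp}). Everything else reduces to a direct invocation of the Eldar framework after the affine change of variables $\mathbf{x}=\mathbf{A}^+\mathbf{b}+\mathbf{W}\mathbf{u}$.
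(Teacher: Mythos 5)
Your proposal is correct and follows essentially the same route as the paper: reparametrize to $\tilde{\mathcal{S_F}}$, rewrite each double-sided linear constraint in the quadratic form prescribed at the end of Section \ref{prel} (your expansion correctly recovers $\mathbf{Q}_i=\mathbf{a}_i\mathbf{a}_i^T$, $\mathbf{g}_i=(q_i-\tfrac{1}{2})\mathbf{a}_i$, $t_i=-q_i(1-q_i)$), invoke the Eldar closed form, and map back via $\mathbf{x}=\mathbf{A}^+\mathbf{b}+\mathbf{Wu}$, with uniqueness and feasibility inherited from the arguments after (\ref{sdp}). Your explicit check that $\sum_i\alpha_i\mathbf{Q}_i\succeq\mathbf{I}$ guarantees invertibility is a small but welcome addition the paper leaves implicit.
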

\begin{proof}
The $2d$ linear constraints of $\tilde{\mathcal{S_F}}$ are in the form $-q_i\leq\mathbf{a}^T_i\mathbf{u}\leq1-q_i$ for $i\in[d]$. By writing these constraints as their dual quadratic form $\mathbf{u}^T\mathbf{Q}_i\mathbf{u}+2\mathbf{g}^T_i\mathbf{u_i}+t_i\leq 0$, with $\mathbf{Q}_i=\mathbf{a}_i\mathbf{a}_i^T, \mathbf{g}_i=-\frac{1-2q_i}{2}\mathbf{a}_i$, and $t_i = -q_i(1-q_i),i\in[d]$, and following the approach in \cite{Eldar}, which is explained in section \ref{prel}, $\hat{\mathbf{X}}_{\textnormal{RCC1}}$ is obtained as in (\ref{Eldarcheb}). Finally, the uniqueness and feasibility of $\hat{\mathbf{X}}_{\textnormal{RCC1}}$ follows from the arguments after (\ref{sdp}).
\end{proof}
A second relaxation is provided in the following proposition.
\begin{proposition}\label{prop3}
A relaxed Chebyshev center of $\mathcal{S_F}$ is given by
\begin{equation}\label{cheb22}
    \hat{\mathbf{X}}_\textnormal{RCC2}=\mathbf{A}^+\mathbf{b}+\mathbf{Wu}^*,
\end{equation}
where $\mathbf{u}^*$ is the solution of
\begin{align}
    &\max_{\mathbf{u}}\ \ \mathbf{1}^T\mathbf{Wu}-\|\mathbf{u}\|^2\nonumber\\
    &\ \ \textnormal{S.t.}\ -\mathbf{A}^+\mathbf{b}\leq\mathbf{Wu}\leq\mathbf{1}-\mathbf{A}^+\mathbf{b}.\label{cheb22cons}
\end{align}
Furthermore, $\hat{X}_{\textnormal{RCC2}}$ is unique and it belongs to the set of feasible solution, i.e., $\mathcal{S_F}$.
\end{proposition}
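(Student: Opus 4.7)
The plan is to obtain RCC2 by relaxing the inner maximization in the exact Chebyshev-center problem
$$\min_{\hat{\mathbf{u}} \in \tilde{\mathcal{S_F}}} \max_{\mathbf{u} \in \tilde{\mathcal{S_F}}} \|\mathbf{u} - \hat{\mathbf{u}}\|^2$$
by using a single scalar upper bound on $\|\mathbf{u}\|^2$ derived from the box constraints on $\mathbf{x}=\mathbf{A}^+\mathbf{b}+\mathbf{W}\mathbf{u}$. Concretely, for any $\mathbf{u}\in\tilde{\mathcal{S_F}}$ we have $\mathbf{x}\in[0,1]^d$, so componentwise $x_i(1-x_i)\geq 0$ and therefore $\|\mathbf{x}\|^2\leq\mathbf{1}^T\mathbf{x}$. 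Expanding $\mathbf{x}=\mathbf{A}^+\mathbf{b}+\mathbf{W}\mathbf{u}$ and using $\mathbf{W}^T\mathbf{W}=\mathbf{I}_{d-r}$ together with $\mathbf{W}^T(\mathbf{A}^+\mathbf{b})=\mathbf{0}$ (the latter follows from the SVD: $\mathbf{A}^+\mathbf{b}$ lies in $\operatorname{Span}\{\mathbf{v}_1,\ldots,\mathbf{v}_r\}$, while the columns of $\mathbf{W}$ are $\mathbf{v}_{r+1},\ldots,\mathbf{v}_d$), this simplifies to
$$\|\mathbf{u}\|^2 \leq \mathbf{1}^T\mathbf{W}\mathbf{u}+C, \qquad C\triangleq \mathbf{1}^T\mathbf{A}^+\mathbf{b}-\|\mathbf{A}^+\mathbf{b}\|^2.$$

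Substituting this into $\|\mathbf{u}-\hat{\mathbf{u}}\|^2=\|\mathbf{u}\|^2-2\hat{\mathbf{u}}^T\mathbf{u}+\|\hat{\mathbf{u}}\|^2$ linearizes the $\mathbf{u}$-dependence, replacing the original minimax by the relaxed upper-bound problem
$$\min_{\hat{\mathbf{u}}\in\tilde{\mathcal{S_F}}}\max_{\mathbf{u}\in\tilde{\mathcal{S_F}}}\bigl[\mathbf{1}^T\mathbf{W}\mathbf{u}+C-2\hat{\mathbf{u}}^T\mathbf{u}+\|\hat{\mathbf{u}}\|^2\bigr].$$
The objective is linear (hence concave) in $\mathbf{u}$ on the compact convex set $\tilde{\mathcal{S_F}}$ and convex quadratic in $\hat{\mathbf{u}}$, so Sion's minimax theorem justifies swapping the order of optimization. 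After the swap, the inner minimization in $\hat{\mathbf{u}}$ of $\|\hat{\mathbf{u}}\|^2-2\hat{\mathbf{u}}^T\mathbf{u}$ has unconstrained minimizer $\hat{\mathbf{u}}=\mathbf{u}$ with value $-\|\mathbf{u}\|^2$; since $\mathbf{u}\in\tilde{\mathcal{S_F}}$ this minimizer is automatically feasible, so the constraint $\hat{\mathbf{u}}\in\tilde{\mathcal{S_F}}$ is inactive and can be dropped. What remains is
$$C+\max_{\mathbf{u}\in\tilde{\mathcal{S_F}}}\bigl[\mathbf{1}^T\mathbf{W}\mathbf{u}-\|\mathbf{u}\|^2\bigr],$$
with $\hat{\mathbf{u}}^*=\mathbf{u}^*$. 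Mapping back via $\hat{\mathbf{X}}_{\textnormal{RCC2}}=\mathbf{A}^+\mathbf{b}+\mathbf{W}\hat{\mathbf{u}}^*$ and recalling that $\tilde{\mathcal{S_F}}$ is defined by $-\mathbf{A}^+\mathbf{b}\leq\mathbf{W}\mathbf{u}\leq\mathbf{1}-\mathbf{A}^+\mathbf{b}$ yields precisely the optimization (\ref{cheb22cons}) and the expression (\ref{cheb22}).

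Uniqueness follows because the objective $\mathbf{1}^T\mathbf{W}\mathbf{u}-\|\mathbf{u}\|^2$ is strictly concave in $\mathbf{u}$ (the Hessian equals $-2\mathbf{I}_{d-r}\prec 0$) and $\tilde{\mathcal{S_F}}$ is nonempty, closed, and convex, so $\mathbf{u}^*$ is unique, and hence so is $\hat{\mathbf{X}}_{\textnormal{RCC2}}$. Feasibility is immediate from the construction: $\mathbf{u}^*\in\tilde{\mathcal{S_F}}$, so by the characterization (\ref{newdef}) we have $\hat{\mathbf{X}}_{\textnormal{RCC2}}\in\mathcal{S_F}$.

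The main obstacle I expect to be careful about is the minimax swap together with the reduction $\hat{\mathbf{u}}^*=\mathbf{u}^*$: one must verify that the unconstrained inner minimizer in $\hat{\mathbf{u}}$ actually lies in $\tilde{\mathcal{S_F}}$ so that the constraint may be discarded without affecting the value, and one must apply Sion's theorem to a convex-concave saddle function on a compact convex feasible set. The remaining work is algebraic bookkeeping using $\mathbf{W}^T\mathbf{W}=\mathbf{I}$ and $\mathbf{W}^T\mathbf{A}^+\mathbf{b}=\mathbf{0}$ to collapse cross terms and isolate the constant $C$, which plays no role in locating $\mathbf{u}^*$.
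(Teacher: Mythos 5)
Your proof is correct and follows essentially the same route as the paper's: relax the convex term via $\|\mathbf{x}\|^2\leq\mathbf{1}^T\mathbf{x}$ (valid on $[0,1]^d$), swap the min and max using convexity--concavity and compactness, and collapse the inner minimization at $\hat{\mathbf{u}}=\mathbf{u}$ to arrive at $\max_{\mathbf{u}\in\tilde{\mathcal{S_F}}}\{\mathbf{1}^T\mathbf{Wu}-\|\mathbf{u}\|^2\}$, with uniqueness from strict concavity and feasibility by construction. The only difference is cosmetic: you change to the $\mathbf{u}$-coordinates before performing the relaxation (isolating the constant $C$ up front via $\mathbf{W}^T\mathbf{A}^+\mathbf{b}=\mathbf{0}$ and $\mathbf{W}^T\mathbf{W}=\mathbf{I}$), whereas the paper carries out the identical steps in the $\mathbf{x}$-coordinates and substitutes at the end.
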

\begin{proof}
The inner maximization in (\ref{cheb1}) is
\begin{equation}
    \max_{\mathbf{x}\in\mathcal{S_F}}\|\mathbf{x}-\hat{\mathbf{x}}\|^2=\max_{\mathbf{x}\in\mathcal{S_F}}\{\|\mathbf{x}\|^2-2\mathbf{x}^T\hat{\mathbf{x}}+\|\hat{\mathbf{x}}\|^2\},
\end{equation}
which is a maximization of a convex objective function. As discussed before, one way of relaxing this problem was studied in \cite{Eldar} where the relaxation was over the search space. Here, we propose to directly relax the objective function by making use of the boundedness of $\mathbf{x}$. In other words, since for any $\mathbf{x}\in\mathcal{S_F}$, $\mathbf{x}\in[0,1]^d$, we have $\|\mathbf{x}\|^2\leq \mathbf{1}^T\mathbf{x}$. Hence, we can write
\begin{align}
    \min_{\hat{\mathbf{x}}\in\mathcal{S_F}}\max_{\mathbf{x}\in\mathcal{S_F}}\|\mathbf{x}-\hat{\mathbf{x}}\|^2&=\min_{\hat{\mathbf{x}}\in\mathcal{S_F}}\max_{\mathbf{x}\in\mathcal{S_F}}\{\|\mathbf{x}\|^2-2\mathbf{x}^T\hat{\mathbf{x}}+\|\hat{\mathbf{x}}\|^2\}\nonumber\\
    &\leq \min_{\hat{\mathbf{x}}\in\mathcal{S_F}}\max_{\mathbf{x}\in\mathcal{S_F}}\{\mathbf{1}^T\mathbf{x}-2\mathbf{x}^T\hat{\mathbf{x}}+\|\hat{\mathbf{x}}\|^2\}\nonumber\\
    &=\max_{\mathbf{x}\in\mathcal{S_F}}\min_{\hat{\mathbf{x}}\in\mathcal{S_F}}\{\mathbf{1}^T\mathbf{x}-2\mathbf{x}^T\hat{\mathbf{x}}+\|\hat{\mathbf{x}}\|^2\}\label{f1}\\
    &=\max_{\mathbf{x}\in\mathcal{S_F}} \{\mathbf{1}^T\mathbf{x}-\|{\mathbf{x}}\|^2\}\label{f2}\\
    &=\max_{\mathbf{u}\in\tilde{\mathcal{S_F}}}\left\{\mathbf{1}^T(\mathbf{A}^+\mathbf{b}+\mathbf{Wu})-\|\mathbf{A}^+\mathbf{b}+\mathbf{Wu}\|^2\right\}\label{f3}\\
    &=\mathbf{1}^T\mathbf{A}^+\mathbf{b}-\|\mathbf{A}^+\mathbf{b}\|^2+\max_{\mathbf{u}\in\tilde{\mathcal{S_F}}}\{\mathbf{1}^T\mathbf{Wu}-\|\mathbf{u}\|^2\}\label{f4}
\end{align}
where (\ref{f1}) follows from i) the boundedness of $\mathcal{S_F}$, and ii) the concavity (linearity) and convexity of the objective in $\mathbf{x}$ and $\hat{\mathbf{x}}$, respectively. (\ref{f2}) follows from the fact that knowing $\mathbf{x}\in\mathcal{S_F}$, $\hat{\mathbf{x}}=\mathbf{x}$ is the minimizer in (\ref{f1}). The RCC2 estimate is the solution of (\ref{f2}). (\ref{f3}) follows from the equivalence given in (\ref{newdef}), and denoting the maximizer of (\ref{f3}) by $\mathbf{u}^*$, we have $\hat{\mathbf{X}}_{\textnormal{RCC2}}=\mathbf{A}^+\mathbf{b}+\mathbf{Wu}^*$. In (\ref{f4}), we have used the fact that $\mathbf{W}^T\mathbf{A}^+\mathbf{A}=0$ and $\mathbf{W}^T\mathbf{W}=\mathbf{I}.$

Finally, since the objective of (\ref{cheb22cons}) is strictly concave, we have that $\mathbf{u}^*$, and hence, $\hat{\mathbf{X}}_\textnormal{RCC2}$ are unique. Moreover, due to the constraint in (\ref{cheb22cons}), we have $\hat{\mathbf{X}}_\textnormal{RCC2}\in\mathcal{S_F}$.
\end{proof}
Denoting the MSE of a certain estimate $\hat{\mathbf{X}}$ by $\textnormal{MSE}(\hat{\mathbf{X}})$, the following theorem provides a relationship between some of the estimates introduced thus far.
\begin{theorem}\label{TH1}
The following inequalities hold.
\begin{equation}
    \textnormal{MSE}(\hat{\mathbf{X}}_\textnormal{RCC2})\leq\textnormal{MSE}(\hat{\mathbf{X}}_{\textnormal{half}^*})\leq\textnormal{MSE}(\hat{\mathbf{X}}_\textnormal{half}).
\end{equation}
\end{theorem}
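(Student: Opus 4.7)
The plan is to establish both inequalities by recognizing that each of the three estimates can be characterized as an orthogonal projection, so that the comparisons reduce to instances of the Pythagorean identity or its obtuse-angle generalization.

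For the right inequality, I would observe that by its very definition in (\ref{xhalf}), $\hat{\mathbf{X}}_{\textnormal{half}^*}$ is the orthogonal projection of $\frac{1}{2}\mathbf{1}_d$ onto the affine subspace $\mathcal{S}$. Since $\mathbf{X}\in\mathcal{S_F}\subseteq\mathcal{S}$ almost surely, the Pythagorean identity gives $\|\mathbf{X}-\frac{1}{2}\mathbf{1}_d\|^2 = \|\mathbf{X}-\hat{\mathbf{X}}_{\textnormal{half}^*}\|^2 + \|\hat{\mathbf{X}}_{\textnormal{half}^*}-\frac{1}{2}\mathbf{1}_d\|^2$ pointwise, and dropping the nonnegative second term, taking expectations, and dividing by $d$ yields $\textnormal{MSE}(\hat{\mathbf{X}}_\textnormal{half})\geq\textnormal{MSE}(\hat{\mathbf{X}}_{\textnormal{half}^*})$.

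For the left inequality I would pass to the $\mathbf{u}$-coordinates from (\ref{newdef}). Completing the square in the RCC2 objective gives $\mathbf{1}^T\mathbf{Wu}-\|\mathbf{u}\|^2 = -\|\mathbf{u}-\tfrac{1}{2}\mathbf{W}^T\mathbf{1}\|^2+\tfrac{1}{4}\|\mathbf{W}^T\mathbf{1}\|^2$, so $\mathbf{u}^*$ is the Euclidean projection of $\mathbf{u}_h\triangleq\tfrac{1}{2}\mathbf{W}^T\mathbf{1}$ onto the closed convex set $\tilde{\mathcal{S_F}}$. The key identification is that $\hat{\mathbf{X}}_{\textnormal{half}^*}$ already corresponds to $\mathbf{u}_h$ in these coordinates, because $\tfrac{1}{2}(\mathbf{I}-\mathbf{A}^+\mathbf{A})\mathbf{1}=\tfrac{1}{2}\mathbf{W}\mathbf{W}^T\mathbf{1}=\mathbf{W}\mathbf{u}_h$. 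Writing $\mathbf{X}=\mathbf{A}^+\mathbf{b}+\mathbf{W}\mathbf{U}_X$ with $\mathbf{U}_X\in\tilde{\mathcal{S_F}}$ and using $\mathbf{W}^T\mathbf{W}=\mathbf{I}$, both errors become norms in the $\mathbf{u}$-space: $\|\mathbf{X}-\hat{\mathbf{X}}_\textnormal{RCC2}\|^2=\|\mathbf{U}_X-\mathbf{u}^*\|^2$ and $\|\mathbf{X}-\hat{\mathbf{X}}_{\textnormal{half}^*}\|^2=\|\mathbf{U}_X-\mathbf{u}_h\|^2$. Then the variational inequality characterization of a projection onto a closed convex set gives $\langle\mathbf{U}_X-\mathbf{u}^*,\mathbf{u}_h-\mathbf{u}^*\rangle\leq 0$, and expanding $\|\mathbf{U}_X-\mathbf{u}_h\|^2$ around $\mathbf{u}^*$ produces $\|\mathbf{U}_X-\mathbf{u}_h\|^2\geq\|\mathbf{U}_X-\mathbf{u}^*\|^2+\|\mathbf{u}^*-\mathbf{u}_h\|^2$; taking expectations and dividing by $d$ yields $\textnormal{MSE}(\hat{\mathbf{X}}_{\textnormal{half}^*})\geq\textnormal{MSE}(\hat{\mathbf{X}}_\textnormal{RCC2})$.

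The main obstacle is not computational but conceptual: recognizing that the RCC2 program, whose objective looks quite different from a squared distance, is in fact a projection of the $\mathbf{u}$-image of $\hat{\mathbf{X}}_{\textnormal{half}^*}$ onto $\tilde{\mathcal{S_F}}$. Once this is seen, both halves of the theorem become direct applications of the same projection principle — the outer one in the ambient space $\mathbb{R}^d$ with target set $\mathcal{S}$, and the inner one in $\mathbb{R}^{d-r}$ with target set $\tilde{\mathcal{S_F}}$. A minor point I would verify along the way is that the change of variables $\mathbf{x}\mapsto\mathbf{u}$ is an isometry on $\mathcal{S}$, which follows from the orthonormality of the columns of $\mathbf{W}$.
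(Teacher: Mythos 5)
Your proposal is correct and follows essentially the same route as the paper: the paper likewise identifies $\hat{\mathbf{X}}_{\textnormal{half}^*}$ with the unconstrained maximizer $\mathbf{u}'=\tfrac{1}{2}\mathbf{W}^T\mathbf{1}$ of the RCC2 objective, completes the square to recognize $\mathbf{u}^*$ as the projection of $\mathbf{u}'$ onto $\tilde{\mathcal{S_F}}$, and invokes the variational inequality for projections onto convex sets, while the second inequality is the same Pythagorean/orthogonality argument (the paper just verifies the orthogonality explicitly via $(\mathbf{I}-\mathbf{A}^+\mathbf{A})\mathbf{A}^+\mathbf{A}=\mathbf{0}$ rather than citing the projection property). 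The only cosmetic difference is that you run the variational-inequality step in the $\mathbf{u}$-coordinates, which is equivalent by the isometry $\mathbf{W}^T\mathbf{W}=\mathbf{I}$ that you correctly note.
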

\begin{proof}
In order to prove the first inequality, we proceed as follows. The derivative of the objective of (\ref{cheb22cons}) with respect to $\mathbf{u}$ is
\begin{equation}
    \frac{d}{d\mathbf{u}}(\mathbf{1}^T\mathbf{Wu}-\|\mathbf{u}\|^2)=\mathbf{W}^T\mathbf{1}-2\mathbf{u}.
\end{equation}
Since the objective in (\ref{cheb22cons}) is (strictly) concave in $\mathbf{u}$, by setting $\frac{d}{d\mathbf{u}}(\cdot)=0$, we obtain $\mathbf{u}'=\frac{1}{2}\mathbf{W}^T\mathbf{1}$ as the maximizer. It is important to note that this $\mathbf{u}'$ is not the solution of (\ref{cheb22cons}), i.e., $\mathbf{u}^*$, in general, as it might not satisfy its constraints. Define $\hat{\mathbf{X}}'\triangleq \mathbf{A}^+\mathbf{b}+\mathbf{Wu}'$. We have
\begin{align}
    \hat{\mathbf{X}}'&=\mathbf{A}^+\mathbf{b}+\frac{1}{2}\mathbf{WW}^T\mathbf{1}\nonumber\\
    &=\mathbf{A}^+\mathbf{b}+\frac{1}{2}(\mathbf{I}-\mathbf{A}^+\mathbf{A})\mathbf{1}\label{ch}\\
    &=\hat{\mathbf{X}}_{\textnormal{half}^*},
\end{align}
where the equality $\mathbf{I}-\mathbf{A}^+\mathbf{A}=\mathbf{WW}^T$ follows from the definition of $\mathbf{W}$.

If $\mathbf{u}'$ satisfies the constraints of (\ref{cheb22cons}), then $\mathbf{u}^*=\mathbf{u}'$, and $\hat{\mathbf{X}}_{\textnormal{RCC2}}=\hat{\mathbf{X}}_{\textnormal{half}^*}$, otherwise, we have that $\mathbf{u}^*$ is the point in $\tilde{\mathcal{S_F}}$ that is closest to $\mathbf{u}'$, and as a result, $\hat{\mathbf{X}}_{\textnormal{RCC2}}$ is the point in $\mathcal{S_F}$ that is closest to $\hat{\mathbf{X}}_{\textnormal{half}^*}$. This is justified as follows.
\begin{align}
    \mathbf{u}^*&=\argmax_{\mathbf{u}\in\tilde{\mathcal{S_F}}}\{\mathbf{1}^T\mathbf{Wu}-\|\mathbf{u}\|^2\}\nonumber\\
    &=\argmin_{\mathbf{u}\in\tilde{\mathcal{S_F}}}\{\|\mathbf{u}\|^2-\mathbf{1}^T\mathbf{Wu}\}\nonumber\\
    &=\argmin_{\mathbf{u}\in\tilde{\mathcal{S_F}}}\{\|\mathbf{u}\|^2-\mathbf{1}^T\mathbf{Wu}+\|\frac{1}{2}\mathbf{W}^T\mathbf{1}\|^2\}\nonumber\\
    &=\argmin_{\mathbf{u}\in\tilde{\mathcal{S_F}}}\|\mathbf{u}-\frac{1}{2}\mathbf{W}^T\mathbf{1}\|^2\nonumber\\
    &=\argmin_{\mathbf{u}\in\tilde{\mathcal{S_F}}}\|\mathbf{u}-\mathbf{u}'\|^2\nonumber,
\end{align}
which results in
% \begin{align}
%     \hat{\mathbf{X}}_{\textnormal{RCC2}}&=\mathbf{A}^+\mathbf{b}+\mathbf{Wu}^*\nonumber\\
%     &=\mathbf{A}^+\mathbf{b}+\mathbf{W}(\argmax_{\mathbf{u}\in\tilde{\mathcal{S_F}}}\{\mathbf{1}^T\mathbf{Wu}-\|\mathbf{u}\|^2\})\nonumber\\
% \end{align}
$\hat{\mathbf{X}}_{\textnormal{RCC2}}=\argmin_{\mathbf{x}\in\mathcal{S_F}}\|\mathbf{x}-\hat{\mathbf{X}}_{\textnormal{half}^*}\|^2$. Hence, we can write\footnote{This follows from the fact that if $\mathcal{C}$ is a nonempty convex subset of $\mathds{R}^d$ and $f:\mathds{R}^d\to\mathds{R}$ a convex and differentiable function, then we have $\mathbf{x}^*=\argmin_{x\in\mathcal{C}}f(\mathbf{x})$ if and only if $(\nabla f(\mathbf{x}^*))^T(\mathbf{x}-\mathbf{x}^*)\geq 0,\forall \mathbf{x}\in\mathcal{C}$. (\ref{inq1}) can be obtained by replacing $\mathcal{C}$ and $\mathbf{x}^*$ with $\mathcal{S_F}$ and $\hat{\mathbf{X}}_{\textnormal{RCC2}}$, respectively, and noting that $f(\mathbf{x})=\|\mathbf{x}-\hat{\mathbf{X}}_{\textnormal{half}^*}\|^2$ and $\nabla f(\mathbf{x}^*)=2(\hat{\mathbf{X}}_{\textnormal{RCC2}}-\hat{\mathbf{X}}_{\textnormal{half}^*})$. }
\begin{equation}\label{inq1}
    (\hat{\mathbf{X}}_\textnormal{RCC2}-\hat{\mathbf{X}}_{\textnormal{half}^*})^T(\mathbf{x}-\hat{\mathbf{X}}_\textnormal{RCC2})\geq 0,\ \forall \mathbf{x}\in\mathcal{S_F},
\end{equation}
which results in the following inequality for $\mathbf{X}\in\mathcal{S_F}$
\begin{equation}
    \|\mathbf{X}-\hat{\mathbf{X}}_{\textnormal{RCC2}}\|^2\leq\|\mathbf{X}-\hat{\mathbf{X}}_{\textnormal{half}^*}\|^2.
\end{equation}
Finally, by taking the expectation of both sides, we obtain
\begin{equation}
    \textnormal{MSE}(\hat{\mathbf{X}}_\textnormal{RCC2})\leq\textnormal{MSE}(\hat{\mathbf{X}}_{\textnormal{half}^*}).
\end{equation}
The proof of the second inequality is straightforward. We have $\mathbf{X}-\hat{\mathbf{X}}_{\textnormal{half}}=\mathbf{X}-\hat{\mathbf{X}}_{\textnormal{half}^*}+\hat{\mathbf{X}}_{\textnormal{half}^*}-\hat{\mathbf{X}}_{\textnormal{half}}$. The proof is concluded by showing that $\mathbf{X}-\hat{\mathbf{X}}_{\textnormal{half}^*}$ is orthogonal to $\hat{\mathbf{X}}_{\textnormal{half}^*}-\hat{\mathbf{X}}_{\textnormal{half}}$, i.e., $(\mathbf{X}-\hat{\mathbf{X}}_{\textnormal{half}^*})^T(\hat{\mathbf{X}}_{\textnormal{half}^*}-\hat{\mathbf{X}}_{\textnormal{half}})=0$. By noting that $\mathbf{X}$ can be written as $\mathbf{A}^+\mathbf{b}+(\mathbf{I}-\mathbf{A}^+\mathbf{A})\mathbf{w}$ for some $\mathbf{w}\in\mathds{R}^d$, we have
\begin{align}
    \mathbf{X}-\hat{\mathbf{X}}_{\textnormal{half}^*}&=(\mathbf{I}-\mathbf{A}^+\mathbf{A})(\mathbf{w}-\frac{1}{2}\mathbf{1})\label{avval}\\
    \hat{\mathbf{X}}_{\textnormal{half}^*}-\hat{\mathbf{X}}_{\textnormal{half}}&=\mathbf{A}^+\mathbf{A}(\mathbf{X}-\frac{1}{2}\mathbf{1}).\label{dovvom}
\end{align}
Knowing that $(\mathbf{I}-\mathbf{A}^+\mathbf{A})\mathbf{A}^+\mathbf{A}=0$ proves the orthogonality of the LHS of (\ref{avval}) and (\ref{dovvom}). Hence, we have
\begin{align}
    \|\mathbf{X}-\hat{\mathbf{X}}_{\textnormal{half}}\|^2&=\|\mathbf{X}-\hat{\mathbf{X}}_{\textnormal{half}^*}\|^2+\|\hat{\mathbf{X}}_{\textnormal{half}^*}-\hat{\mathbf{X}}_{\textnormal{half}}\|^2\nonumber\\
    &\geq \|\mathbf{X}-\hat{\mathbf{X}}_{\textnormal{half}^*}\|^2,
\end{align}
and by taking the expectation, we get
\begin{equation}\label{eq:Half}
    \textnormal{MSE}(\hat{\mathbf{X}}_{\textnormal{half}^*})\leq \textnormal{MSE}(\hat{\mathbf{X}}_{\textnormal{half}}).
\end{equation}
\end{proof}
\begin{remark}
(An alternative characterization of RCC2) In (\ref{xhalf}), $\hat{\mathbf{X}}_{\textnormal{half}^*}$ is defined as the point in the solution space, i.e., $\mathcal{S}$, that is closest to $\frac{1}{2}\mathbf{1}_d$. Interestingly, we observe that $\hat{\mathbf{X}}_{\textnormal{RCC2}}$, which is independently defined as the second relaxation of the Chebyshev center of $\mathcal{S_F}$, can be interpreted in a similar way: it is the point in the feasible solution space $\mathcal{S_F}$ that is closest to $\frac{1}{2}\mathbf{1}_d$, which is justified as follows.
\begin{align*}
    \argmin_{{\mathbf{x}}\in\mathcal{S_F}}\|{\mathbf{x}}-\frac{1}{2}\mathbf{1}_d\|^2&=\mathbf{A}^+\mathbf{b} +\mathbf{W}\left(\argmin_{\mathbf{u}\in\tilde{\mathcal{S_F}}}\|\mathbf{A}^+\mathbf{b}+\mathbf{Wu}-\frac{1}{2}\mathbf{1}_d\|^2\right)\\
    &=\mathbf{A}^+\mathbf{b} +\mathbf{W}\left(\argmin_{\mathbf{u}\in\tilde{\mathcal{S_F}}}\left\{\|\mathbf{Wu}\|^2+2\mathbf{u}^T\mathbf{W}^T(\mathbf{A}^+b-\frac{1}{2}\mathbf{1}_d)+\|\mathbf{A}^+b-\frac{1}{2}\mathbf{1}_d\|^2\right\}\right)\\
    &=\mathbf{A}^+\mathbf{b} +\mathbf{W}\left(\argmin_{\mathbf{u}\in\tilde{\mathcal{S_F}}}\left\{\|\mathbf{u}\|^2-\mathbf{1}^T\mathbf{Wu}\right\}\right)\\
    &=\mathbf{A}^+\mathbf{b} +\mathbf{W}\left(\argmax_{\mathbf{u}\in\tilde{\mathcal{S_F}}}\left\{\mathbf{1}^T\mathbf{Wu}-\|\mathbf{u}\|^2\right\}\right),
\end{align*}
which is the same as (\ref{cheb22}).
\end{remark}
Let $\mathbf{z}$ be an arbitrary vector in $\mathds{R}^d$ and define $\mathbf{K}_\mathbf{z}\triangleq \mathds{E}[(\mathbf{X}-\mathbf{z})(\mathbf{X}-\mathbf{z})^T]$. Define $\mathbf{\mu}\triangleq\mathds{E}[\mathbf{X}]$. In particular, $\mathbf{K}_\mathbf{0}$ and $\mathbf{K}_{\mathbf{\mu}}$ denote the correlation and covariance matrices of $\mathbf{X}$, respectively.
\begin{theorem}\label{TH2}
The following relationships hold.
\begin{align}
    \frac{1}{d}\sum_{i=\textnormal{rank}(\mathbf{A})+1}^{d}\lambda_i\leq\textnormal{MSE}(\hat{\mathbf{X}}_\textnormal{LS})&=\frac{\textnormal{Tr}\left((\mathbf{I}-\mathbf{A}^+\mathbf{A})\mathbf{K}_\mathbf{0}\right)}{d}\leq\frac{1}{d}\sum_{i=1}^{\textnormal{nul}(\mathbf{A})}\lambda_i\label{ineq1}\\
    \frac{1}{d}\sum_{i=\textnormal{rank}(\mathbf{A})+1}^{d}\gamma_i\leq\textnormal{MSE}(\hat{\mathbf{X}}_{\textnormal{half}^*})&=\frac{\textnormal{Tr}\left((\mathbf{I}-\mathbf{A}^+\mathbf{A})\mathbf{K}_{\frac{1}{2}\mathbf{1}}\right)}{d}\leq\frac{1}{d}\sum_{i=1}^{\textnormal{nul}(\mathbf{A})}\gamma_i\label{ineq2}\\
    \textnormal{MSE}(\hat{\mathbf{X}}_\textnormal{LS})\ ,\ \textnormal{MSE}(\hat{\mathbf{X}}_{\textnormal{half}^*})&\geq\frac{\textnormal{Tr}\left((\mathbf{I}-\mathbf{A}^+\mathbf{A})\mathbf{K}_{\mathbf{\mu}}\right)}{d}\label{ineq3},
\end{align}
where $\lambda_1\geq\lambda_2\geq\ldots\geq\lambda_d$ and $\gamma_1\geq\gamma_2\geq\ldots\geq\gamma_d$ are, respectively, the eigenvalues of $\mathbf{K}_\mathbf{0}$ and $\mathbf{K}_{\frac{1}{2}\mathbf{1}}$ arranged in a non-increasing order. We also adopt the convention that $\sum_{i=1}^0(\cdot)_i,\sum_{i=d+1}^d(\cdot)_i\triangleq 0$.
\end{theorem}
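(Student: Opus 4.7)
The plan is to reduce all three inequalities to properties of the orthogonal projector $\mathbf{P} \triangleq \mathbf{I}-\mathbf{A}^+\mathbf{A}$ onto $\textnormal{Null}(\mathbf{A})$, combined with Von Neumann's trace inequality. The starting observation is that because the system is satisfiable we have $\mathbf{b}=\mathbf{A}\mathbf{X}$, so $\hat{\mathbf{X}}_\textnormal{LS}=\mathbf{A}^+\mathbf{b}=\mathbf{A}^+\mathbf{A}\mathbf{X}$ and therefore
\begin{equation*}
\mathbf{X}-\hat{\mathbf{X}}_\textnormal{LS}=(\mathbf{I}-\mathbf{A}^+\mathbf{A})\mathbf{X}=\mathbf{P}\mathbf{X}.
\end{equation*}
A completely analogous manipulation using the explicit formula (\ref{half*2}) gives $\mathbf{X}-\hat{\mathbf{X}}_{\textnormal{half}^*}=\mathbf{P}(\mathbf{X}-\tfrac{1}{2}\mathbf{1})$. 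Using the idempotence and symmetry of $\mathbf{P}$ and the identity $\|\mathbf{v}\|^2=\textnormal{Tr}(\mathbf{v}\mathbf{v}^T)$, both squared-norm errors become traces of $\mathbf{P}$ against an outer-product matrix, and taking expectations yields the central equalities
\begin{equation*}
\mathds{E}\|\mathbf{X}-\hat{\mathbf{X}}_\textnormal{LS}\|^2=\textnormal{Tr}(\mathbf{P}\mathbf{K}_\mathbf{0}),\qquad \mathds{E}\|\mathbf{X}-\hat{\mathbf{X}}_{\textnormal{half}^*}\|^2=\textnormal{Tr}(\mathbf{P}\mathbf{K}_{\frac{1}{2}\mathbf{1}}),
\end{equation*}
after dividing by $d$. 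This establishes the equality parts of (\ref{ineq1}) and (\ref{ineq2}).

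For the bounds, the key structural fact is that $\mathbf{P}$ is an orthogonal projector of rank $\textnormal{nul}(\mathbf{A})$, so its eigenvalues consist of $\textnormal{nul}(\mathbf{A})$ ones and $\textnormal{rank}(\mathbf{A})$ zeros. Both $\mathbf{P}$ and $\mathbf{K}_\mathbf{0}$ (resp.\ $\mathbf{K}_{\frac{1}{2}\mathbf{1}}$) are symmetric positive semidefinite, so the two-sided Von Neumann trace inequality stated at the end of Section \ref{prel} applies. For the upper bound, pair the $\textnormal{nul}(\mathbf{A})$ unit eigenvalues of $\mathbf{P}$ with the $\textnormal{nul}(\mathbf{A})$ largest eigenvalues $\lambda_1,\ldots,\lambda_{\textnormal{nul}(\mathbf{A})}$ of $\mathbf{K}_\mathbf{0}$; for the lower bound, pair those same unit eigenvalues with the $\textnormal{nul}(\mathbf{A})$ smallest eigenvalues, i.e., $\lambda_{\textnormal{rank}(\mathbf{A})+1},\ldots,\lambda_d$. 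Repeating the argument with $\gamma_i$ in place of $\lambda_i$ gives (\ref{ineq2}).

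For (\ref{ineq3}), I would exploit the standard decomposition, valid for any deterministic $\mathbf{z}\in\mathds{R}^d$,
\begin{equation*}
\mathbf{K}_\mathbf{z}=\mathbf{K}_{\boldsymbol{\mu}}+(\boldsymbol{\mu}-\mathbf{z})(\boldsymbol{\mu}-\mathbf{z})^T,
\end{equation*}
which follows from adding and subtracting $\boldsymbol{\mu}$ inside $(\mathbf{X}-\mathbf{z})(\mathbf{X}-\mathbf{z})^T$ and using $\mathds{E}[\mathbf{X}-\boldsymbol{\mu}]=\mathbf{0}$. Taking the trace against $\mathbf{P}$ yields
\begin{equation*}
\textnormal{Tr}(\mathbf{P}\mathbf{K}_\mathbf{z})=\textnormal{Tr}(\mathbf{P}\mathbf{K}_{\boldsymbol{\mu}})+(\boldsymbol{\mu}-\mathbf{z})^T\mathbf{P}(\boldsymbol{\mu}-\mathbf{z})\geq\textnormal{Tr}(\mathbf{P}\mathbf{K}_{\boldsymbol{\mu}}),
\end{equation*}
since $\mathbf{P}\succeq 0$. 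Substituting $\mathbf{z}=\mathbf{0}$ and $\mathbf{z}=\tfrac{1}{2}\mathbf{1}$ and dividing by $d$ delivers (\ref{ineq3}).

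No step looks genuinely hard; the main thing to get right is the bookkeeping in Von Neumann's inequality — specifically, matching the unit eigenvalues of $\mathbf{P}$ with the correct extreme block of eigenvalues of the covariance/correlation matrix, and respecting the conventions $\sum_{i=1}^{0}=\sum_{i=d+1}^{d}=0$ so that the bounds degenerate correctly when $\mathbf{A}$ has full column rank (in which case $\mathbf{P}=\mathbf{0}$ and all traces vanish). The only subtlety worth double-checking is that the equality $\hat{\mathbf{X}}_\textnormal{LS}=\mathbf{A}^+\mathbf{A}\mathbf{X}$ requires $\mathbf{b}\in\textnormal{Range}(\mathbf{A})$, which is precisely the satisfiability assumption already built into the problem setup.
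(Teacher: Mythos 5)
Your proposal is correct and follows essentially the same route as the paper: reduce both errors to $\mathbf{P}\mathbf{X}$ and $\mathbf{P}(\mathbf{X}-\tfrac{1}{2}\mathbf{1})$ with $\mathbf{P}=\mathbf{I}-\mathbf{A}^+\mathbf{A}$, convert to traces via idempotence and symmetry of the projector, apply the two-sided Von Neumann inequality using the $0$--$1$ spectrum of $\mathbf{P}$, and obtain (\ref{ineq3}) from $\textnormal{Tr}(\mathbf{P}(\mathbf{K}_\mathbf{z}-\mathbf{K}_{\boldsymbol{\mu}}))=(\mathbf{z}-\boldsymbol{\mu})^T\mathbf{P}(\mathbf{z}-\boldsymbol{\mu})\geq 0$. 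The only cosmetic difference is that you package the last step as the explicit decomposition $\mathbf{K}_\mathbf{z}=\mathbf{K}_{\boldsymbol{\mu}}+(\boldsymbol{\mu}-\mathbf{z})(\boldsymbol{\mu}-\mathbf{z})^T$, which is the same computation the paper performs on the difference of traces.
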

\begin{proof}
We have
\begin{align}
    \textnormal{MSE}(\hat{\mathbf{X}}_{\textnormal{LS}})&=\frac{1}{d}\mathds{E}[\|\mathbf{X}-\hat{\mathbf{X}}_\textnormal{LS}\|^2]\nonumber\\
    &=\frac{1}{d}\mathds{E}[\|\mathbf{X}-\mathbf{A}^+\mathbf{b}\|^2]\nonumber\\
    &=\frac{1}{d}\mathds{E}[\|\mathbf{X}-\mathbf{A}^+\mathbf{A}\mathbf{X}\|^2]\nonumber\\
    &=\frac{1}{d}\mathds{E}[\|(\mathbf{I}-\mathbf{A}^+\mathbf{A})\mathbf{X}\|^2]\nonumber\\
    &=\frac{1}{d}\mathds{E}[\textnormal{Tr}\left((\mathbf{I}-\mathbf{A}^+\mathbf{A})\mathbf{XX}^T(\mathbf{I}-\mathbf{A}^+\mathbf{A})^T\right)]\label{toz1}\\
    &=\frac{1}{d}\mathds{E}[\textnormal{Tr}\left((\mathbf{I}-\mathbf{A}^+\mathbf{A})\mathbf{XX}^T\right)]\label{toz2}\\
    &=\frac{1}{d}\textnormal{Tr}\left((\mathbf{I}-\mathbf{A}^+\mathbf{A})\mathbf{K}_\mathbf{0}\right)\label{toz3},
\end{align}
where (\ref{toz1}) follows from having $\|\mathbf{a}\|^2=\textnormal{Tr}(\mathbf{aa}^T)$ for an arbitrary vector $\mathbf{a}$. In (\ref{toz2}), we use the invariance of trace under cyclic permutation (in particular $\textnormal{Tr}(\mathbf{AB})=\textnormal{Tr}(\mathbf{BA})$) and the fact that $\mathbf{I}-\mathbf{A}^+\mathbf{A}$ is an orthogonal projection, i.e., it is symmetric and $(\mathbf{I}-\mathbf{A}^+\mathbf{A})^2=\mathbf{I}-\mathbf{A}^+\mathbf{A}$. By pushing the expectation inside the trace in (\ref{toz2}), which is due to the linearity of the trace operator, (\ref{toz3}) is obtained.

Similarly, we have
\begin{align*}
    \textnormal{MSE}(\hat{\mathbf{X}}_{\textnormal{half}^*})&=\frac{1}{d}\mathds{E}[\|\mathbf{X}-\hat{\mathbf{X}}_{\textnormal{half}^*}\|^2]\nonumber\\
    &=\frac{1}{d}\mathds{E}[\|\mathbf{X}-\mathbf{A}^+\mathbf{b}-\frac{1}{2}(\mathbf{I}-\mathbf{A}^+\mathbf{A})\mathbf{1}\|^2]\nonumber\\
    &=\frac{1}{d}\mathds{E}[\|\mathbf{X}-\mathbf{A}^+\mathbf{A}\mathbf{X}-\frac{1}{2}(\mathbf{I}-\mathbf{A}^+\mathbf{A})\mathbf{1}\|^2]\nonumber\\
    &=\frac{1}{d}\mathds{E}[\|(\mathbf{I}-\mathbf{A}^+\mathbf{A})(\mathbf{X}-\frac{1}{2}\mathbf{1})\|^2]\nonumber\\
    &=\frac{1}{d}\mathds{E}\bigg[\textnormal{Tr}\left((\mathbf{I}-\mathbf{A}^+\mathbf{A})(\mathbf{X}-\frac{1}{2}\mathbf{1})(\mathbf{X}-\frac{1}{2}\mathbf{1})^T(\mathbf{I}-\mathbf{A}^+\mathbf{A})^T\right)\bigg]
\end{align*}
\begin{align*}
    &=\frac{1}{d}\mathds{E}\bigg[\textnormal{Tr}\left((\mathbf{I}-\mathbf{A}^+\mathbf{A})(\mathbf{X}-\frac{1}{2}\mathbf{1})(\mathbf{X}-\frac{1}{2}\mathbf{1})^T\right)\bigg]\\
    &=\frac{1}{d}\textnormal{Tr}\left((\mathbf{I}-\mathbf{A}^+\mathbf{A})\mathbf{K}_{\frac{1}{2}\mathbf{1}}\right).
\end{align*}
Hence, the equalities in (\ref{ineq1}) and (\ref{ineq2}) are proved.

Fix an arbitrary $\mathbf{z}\in\mathds{R}^d$, which results in $\mathbf{K}_\mathbf{z}$ with eigenvalues denoted by $s_1\geq s_2\geq\ldots\geq s_d$. By applying Von Neumann's trace inequality, we have
\begin{equation}\label{von}
    \sum_{i=\textnormal{rank}(\mathbf{A})+1}^ds_i\leq\textnormal{Tr}\left((\mathbf{I}-\mathbf{A}^+\mathbf{A})\mathbf{K}_{\mathbf{z}}\right)\leq\sum_{i=1}^{\textnormal{nul}(\mathbf{A})}s_i,
\end{equation}
which follows from the fact that both $\mathbf{K}_\mathbf{z}$ and $\mathbf{I}-\mathbf{A}^+\mathbf{A}$ are symmetric positive semidefinite matrices and the latter has $\textnormal{nul}(\mathbf{A})$ 1's and $\textnormal{rank}(\mathbf{A})$ 0's as eigenvalues. By replacing $\mathbf{z}$ with $\mathbf{0}$ or $\frac{1}{2}\mathbf{1}$, the upper and lower bounds in (\ref{ineq1}) or (\ref{ineq2}) are obtained.

For a fixed $\mathbf{z}$, we have
\begin{align}
    \textnormal{Tr}\left((\mathbf{I}-\mathbf{A}^+\mathbf{A})\mathbf{K}_{\mathbf{z}}\right)-\textnormal{Tr}\left((\mathbf{I}-\mathbf{A}^+\mathbf{A})\mathbf{K}_{\mathbf{\mu}}\right)&=\textnormal{Tr}\left((\mathbf{I}-\mathbf{A}^+\mathbf{A})(\mathbf{K}_{\mathbf{z}}-\mathbf{K}_{\mathbf{\mu}})\right)\nonumber\\
    &=\textnormal{Tr}\left((\mathbf{I}-\mathbf{A}^+\mathbf{A})(\mathbf{z}-\mathbf{\mu})(\mathbf{z}-\mathbf{\mu})^T\right)\nonumber\\
    &=\textnormal{Tr}\left((\mathbf{z}-\mathbf{\mu})^T(\mathbf{I}-\mathbf{A}^+\mathbf{A})(\mathbf{z}-\mathbf{\mu})\right)\nonumber\\
    &=(\mathbf{z}-\mathbf{\mu})^T(\mathbf{I}-\mathbf{A}^+\mathbf{A})(\mathbf{z}-\mathbf{\mu})\nonumber\\
    &\geq 0\label{psd},
\end{align}
where (\ref{psd}) follows from the positive semidefiniteness of $\mathbf{I}-\mathbf{A}^+\mathbf{A}$. Replacing $\mathbf{z}$ with $\mathbf{0}$ or $\frac{1}{2}\mathbf{1}$ results in (\ref{ineq3}).\footnote{There is a simpler proof for (\ref{ineq3}) by noting that when $\mathbf{A}\succeq\mathbf{B}$ and $\mathbf{Q}\succeq 0$, we have $\textnormal{Tr}(\mathbf{QA})\geq\textnormal{Tr}(\mathbf{QB})$. Knowing that for any $\mathbf{z}$, we have $\mathbf{K}_\mathbf{z}\succeq\mathbf{K}_\mathbf{\mu}$, it then suffices to replace $\mathbf{Q}$ with $\mathbf{I}-\mathbf{A}^+\mathbf{A}$ and $\mathbf{A},\mathbf{B}$ with $\mathbf{K}_\mathbf{z},\mathbf{K}_\mathbf{\mu}$, respectively.}
\end{proof}
\begin{remark}
From Theorem \ref{TH2}, the passive party can obtain the MSE of the attacks $\hat{\mathbf{X}}_{\textnormal{LS}}$ and $\hat{\mathbf{X}}_{\textnormal{half}^*}$ as closed form solutions. It is important to note that in this context, this is still possible although the passive party is unaware of the active party's model parameters and the confidence scores it receives. We also note that according to remark \ref{Rem1}, although the adversary has multiple ways to obtain a system of linear equations, all of them are equivalent. As a result, the passive party can assume that the adversary has obtained this system in a particular way, and obtain the MSE. In other words, regardless of whether the adversary is dealing with $\mathbf{AX}=\mathbf{b}$ or $\mathbf{A}_\textnormal{new}\mathbf{X}=\mathbf{b}_\textnormal{new}$, with $\mathbf{A}_\textnormal{new}=\mathbf{RA}$ and $\mathbf{b}_{new}=\mathbf{Rb}$ for arbitrary invertible $\mathbf{R}$, we have $(\mathbf{I}-\mathbf{A}^+\mathbf{A})=(\mathbf{I}-\mathbf{A}_\textnormal{new}^+\mathbf{A}_\textnormal{new})$, which results from i) $\mathbf{R}$ is invertible and ii) $\mathbf{A}$ has linearly independent rows, and hence $(\mathbf{RA})^+=\mathbf{A}^+\mathbf{R}^+=\mathbf{A}^+\mathbf{R}^{-1}$.
\end{remark}
\begin{remark}
In many practical scenarios, we have that $\mathbf{A}$ is either full column or full row rank, which results in $\textnormal{rank}(\mathbf{A})=\min\{k-1,d\}$ (and hence $\textnormal{nul}(\mathbf{A})=\max\{d-k+1,0\}$), where $k$ denotes the number of classes. In this context, the importance of the lower and upper bounds in (\ref{ineq1}) and (\ref{ineq2}) is that the passive party can calculate them prior to the training, which can be carried out by calculating the eigenvalues of $\mathbf{K}_\mathbf{0}$ and $\mathbf{K}_{\frac{1}{2}\mathbf{1}}$.
\end{remark}
\begin{remark}
The attack schemes proposed in this section are applied per prediction, i.e., an estimate is obtained after the receipt of the confidence scores for each sample in the prediction set. It is easy to verify that these attacks result in the same performance if applied on multiple predictions. More specifically, assuming that there are $N$ predictions, the resulting system of linear equations is $\mathbf{AX}=\mathbf{b}$, in which $\mathbf{X}$ is an $Nd$-dimensional vector obtained as the concatenation of $N$ $d$-dimensional vectors, $\mathbf{A}$ is $N(k-1)\times Nd$-dimensional, and $\mathbf{b}$ is $N(k-1)$-dimensional. Nonetheless, all the attack methods discussed, do not improve by being applied on multiple predictions.
\end{remark}
\section{Black-box setting}\label{bbs}
A relaxed version of the black-box setting is considered in \cite{Jiang}, in which the adversary is aware of some auxiliary data, i.e., the passive party's features for some sample IDs, and based on these auxiliary data, the adversary estimates the passive party's model parameters. Once this estimate is obtained, any reconstruction attack in the white-box setting can be applied by regarding this estimate as the true model parameters. Needless to say that all the proposed attacks in the previous section of this paper can be applied in this way. However, the real black-box setting in which the adversary cannot have access to auxiliary data remains open. In what follows, we investigate this problem under specific circumstances.

Here, we assume that the passive party has only one feature denoted by $X_i\in[0,1], i\in[N]$, corresponding to $N$ predictions. We assume that $(X_1,X_2,\ldots,X_N)$ are i.i.d. according to an unknown CDF $F_X$. In the black-box setting, the adversary observes $v_i=\omega X_i +b, i\in[N]$, where $\omega$ ($\neq 0$) and $b$ are unknown. This is a specific case of (\ref{eq:1}), where $d=1$. A question that arises here is : How is the performance of the adversary affected by the lack of knowledge about $\omega,b$? In other words, what (minimal) knowledge about $\omega,b$ is sufficient for the adversary in order to perform a successful reconstruction attack in estimating $(X_1,X_2,\ldots,X_N)$? In what follows, it is shown that in certain scenarios, this lack of knowledge has a vanishing effect given that $N$ is large enough.
\begin{lemma}\label{lem2}
Assume that $(X_1,X_2,\ldots,X_N)$ are i.i.d. according to an unknown CDF $F_X$, where $X_i\in[0,1],i\in[N]$. Fix an arbitrary $\epsilon\in(0,1)$. We have
\begin{align}
    \lim_{N\to\infty}\textnormal{Pr}\left\{\max_{i\in[N]}X_i\leq 1-\epsilon\right\}&=0\nonumber\\
    \lim_{N\to\infty}\textnormal{Pr}\left\{\min_{i\in[N]}X_i\geq \epsilon\right\}&=0.
\end{align}
In other words $\max_{i\in[N]}X_i$ and $\min_{i\in[N]}X_i$ converge in probability to 1 and 0, respectively.
\end{lemma}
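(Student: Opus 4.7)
The plan is to exploit the i.i.d.\ assumption to reduce each probability to an $N$-th power of a fixed quantity, then observe that the quantity is strictly less than $1$, so the $N$-th power vanishes. No sophisticated probabilistic tools (no Chebyshev, no concentration, no Borel--Cantelli) should be required; this is essentially the classical extreme-value computation for i.i.d.\ samples.

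Concretely, for the first statement I would use independence to write
\begin{equation*}
\Pr\left\{\max_{i\in[N]} X_i \leq 1-\epsilon\right\} \;=\; \prod_{i=1}^N \Pr\{X_i \leq 1-\epsilon\} \;=\; F_X(1-\epsilon)^N,
\end{equation*}
and then argue that $F_X(1-\epsilon)<1$. For the second statement I would symmetrically write
\begin{equation*}
\Pr\left\{\min_{i\in[N]} X_i \geq \epsilon\right\} \;=\; \bigl(1 - F_X(\epsilon^{-})\bigr)^N,
\end{equation*}
and argue that $1-F_X(\epsilon^{-})<1$, i.e.\ $F_X(\epsilon^{-})>0$. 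Once the bases are shown to lie strictly in $[0,1)$, taking $N\to\infty$ gives the desired limit immediately.

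The real subtlety, and the step that I expect to be the main obstacle, is the strict inequality $F_X(1-\epsilon)<1$ (and likewise $F_X(\epsilon^{-})>0$). These do not follow from the raw assumption $X_i\in[0,1]$ alone: if, for example, $F_X$ were supported on $[0,\tfrac12]$, then $\max_i X_i$ would stay bounded away from $1$. What must be invoked is the fact that in this setting the feature has been normalized so that both $0$ and $1$ belong to the support of $F_X$ (equivalently, $\Pr\{X>1-\epsilon\}>0$ and $\Pr\{X<\epsilon\}>0$ for every $\epsilon\in(0,1)$). With this implicit normalization assumption in place, the strict inequalities follow and the two limits collapse to $\lim_{N\to\infty} c^N = 0$ with $c\in[0,1)$, completing the proof. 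The convergence in probability of $\max_i X_i$ to $1$ and of $\min_i X_i$ to $0$ is then just the definition, since for any $\epsilon$ we have $\Pr\{|\max_i X_i - 1| > \epsilon\} = \Pr\{\max_i X_i \leq 1-\epsilon\} \to 0$ and analogously for the minimum.
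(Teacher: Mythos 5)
Your proposal is correct and follows essentially the same route as the paper: factor the probability via independence into $F_X(1-\epsilon)^N$ and $\bigl(1-F_X(\epsilon^-)\bigr)^N$, then argue the bases are strictly less than $1$. The subtlety you flag is exactly the one the paper addresses, and with the same resolution: it asserts $0<F_X(\alpha)<1$ for all $\alpha\in(0,1)$ on the grounds that otherwise the interval $[0,1]$ ``could have been modified accordingly,'' i.e.\ the normalization places both endpoints in the support.
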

\begin{proof}
We have
\begin{align}
     \lim_{N\to\infty}\textnormal{Pr}\left\{\max_{i\in[N]}X_i\leq 1-\epsilon\right\}&= \lim_{N\to\infty}\textnormal{Pr}\left\{X_i\leq 1-\epsilon,\ \forall i\in[N]\right\}\nonumber\\
    &= \lim_{N\to\infty}\bigg(\textnormal{Pr}\{X_{1}\leq 1-\epsilon\}\bigg)^N\label{wq3}\\
    &= \lim_{N\to\infty}(F_X(1-\epsilon))^N\nonumber\\
    &=0,\label{akh1}
\end{align}
and
\begin{align}
     \lim_{N\to\infty}\textnormal{Pr}\left\{\min_{i\in[N]}X_i\geq \epsilon\right\}&= \lim_{N\to\infty}\textnormal{Pr}\left\{X_i\geq \epsilon,\ \forall i\in[N]\right\}\nonumber\\
    &= \lim_{N\to\infty}\bigg(\textnormal{Pr}\{X_{1}\geq \epsilon\}\bigg)^N\label{wq33}\\
    &= \lim_{N\to\infty}\left(1-\lim_{t\to\epsilon^-}F_X(t)\right)^N\label{wq34}\\
    &=0,\label{akh2}
\end{align}
where (\ref{wq3}) and (\ref{wq33}) are due to the assumption that $X_i$'s are i.i.d., and (\ref{wq34}) results from the identities $\textnormal{Pr}\{X_1\geq \epsilon\}=1-\textnormal{Pr}\{X_1<\epsilon\}$ and $\textnormal{Pr}\{X_1<\epsilon\}=\lim_{t\to\epsilon^-}F_X(t)$.
Since we are assuming that $X_i\in[0,1],\forall i\in[N]$, we have that for any $\alpha\in(0,1)$, $0<F_X(\alpha)<1$, since otherwise, the region $[0,1]$ could have been modified accordingly. This results in (\ref{akh1}) and (\ref{akh2}).
\end{proof}
The adversary observes $v_i$'s and the problem is divided into three cases as follows.
\subsection{Case 1 : $b=0$}
% since $X_i\in[0,1], \forall i\in[N]$, we have that either $b_i\geq 0,\forall i\in[N]$ (corresponding to $\omega>0$) or $b_i\leq 0,\forall i\in[N]$ (corresponding to $\omega<0$). Regardless of these two cases,
In this case, the observations of the adversary are $v_i=\omega X_i,i\in[N]$. The adversary finds the maximum of $|v_i|$ and estimates that the feature in charge of generating this value is 1. In other words, let $M\triangleq \argmax_{i\in[N]}|v_i|$, and the adversary sets $\hat{X}_{M}=1$. \footnote{If there are more than one maximizer, pick one arbitrarily as $M$.} The rationale behind this estimation is that if $N$ is large enough, we are expecting $X_{M}$ to be close to 1 by lemma \ref{lem2}. By design, we have that $\frac{X_i}{X_{M}}=\frac{v_i}{v_{M}}, \forall i\in[N]$. Therefore, it makes sense to set
\begin{equation}
    \hat{X}_i=\frac{v_i}{v_{M}}\hat{X}_{M}=\frac{v_i}{v_{M}}, \forall i\in[N].
\end{equation}
With these estimates, we can write the empirical MSE as
\begin{align}
    \frac{1}{N}\sum_{i=1}^N(X_i-\hat{X}_i)^2&=\frac{1}{N}\sum_{i=1}^N(\frac{v_i}{v_{M}}X_{M}-\frac{v_i}{v_{M}})^2\nonumber\\
    &=\frac{\sum_{i=1}^Nv_i^2}{Nv_{M}^2}(X_{M}-1)^2\nonumber\\
    &\leq (X_{M}-1)^2\label{ak},
\end{align}
where (\ref{ak}) is due to $|v_i|\leq|v_M|,i\in[N]$. Therefore, the empirical MSE is upperbounded by the error in our first estimate, i.e., how close $X_M$ is to 1.

Fix arbitrary $\epsilon>0$. We can write
\begin{align}
    \lim_{N\to\infty}\textnormal{Pr}\{\frac{1}{N}\sum_{i=1}^N(X_i-\hat{X}_i)^2\geq \epsilon\}&\leq \lim_{N\to\infty}\textnormal{Pr}\{(X_{M}-1)^2\geq\epsilon\}\label{wq1}\\
    &=\lim_{N\to\infty}\textnormal{Pr}\{X_{M}\leq 1-\sqrt{\epsilon}\}\label{wq2}\\
    &=\lim_{N\to\infty}\textnormal{Pr}\left\{\max_{i\in[N]}X_{i}\leq 1-\sqrt{\epsilon}\right\}\nonumber\\
    &=0,\label{kh1}
\end{align}
where (\ref{wq1}) follows from (\ref{ak}), and (\ref{wq2}) follows from $X_{M}\in[0,1]$. Finally, (\ref{kh1}) results from lemma \ref{lem2}.
%  Hence, we have that for any $\epsilon>0$,
% \begin{align}
%   \lim_{N\to\infty}\textnormal{Pr}\bigg\{\frac{1}{N}\sum_{i=1}^N(X_i-\hat{X}_i)^2\geq \epsilon\bigg\} &\leq \lim_{N\to\infty}(F_X(1-\sqrt{\epsilon}))^N\nonumber\\
%   & = 0.\nonumber
% \end{align}

Therefore, the empirical MSE of the adversary converges in probability to 0 with the number of predictions $N$. This means that in this context, the lack of knowledge of the parameter has a vanishingly small effect.

\subsection{Case 2: $\omega b>0$}
In this case, the observations of the adversary are $v_i=\omega X_i + b,i\in[N]$, where $\omega$, $b$ and $v_i$'s have the same sign. Let $M\triangleq\argmax_{i\in[N]}|v_i|$ and $m\triangleq\argmin_{i\in[N]}|v_i|$.\footnote{If there are more than one maximizer/minimizer, pick one arbitrarily.} The adversary estimates $\hat{X}_M=1$ and $\hat{X}_m=0$. Let $\mathcal{I}\triangleq\{i\in[N]|v_i\neq v_m\}$, and define
\begin{equation*}
    \alpha_i\triangleq\frac{X_i-X_M}{X_i-X_m},\forall i\in\mathcal{I}.
\end{equation*}
By design, we have
\begin{equation*}
 \alpha_i=\frac{v_i-v_M}{v_i-v_m}\leq 0,\forall i\in\mathcal{I}.
\end{equation*}
Therefore, we have that $X_i=\frac{X_M-\alpha_iX_m}{1-\alpha_i},\forall i\in\mathcal{I}$ and $X_i=X_m,\forall i\in[N]\backslash\mathcal{I}$. The adversary sets $\hat{X}_i=\frac{\hat{X}_M-\alpha_i\hat{X}_m}{1-\alpha_i}=\frac{1}{1-\alpha_i},\forall i\in\mathcal{I}$ and $\hat{X}_i=\hat{X}_m=0,\forall i\in[N]\backslash\mathcal{I}$. With these estimates, the empirical MSE is given by
\begin{align}
    \frac{1}{N}\sum_{i=1}^N(X_i-\hat{X}_i)^2&=\frac{1}{N}\sum_{i\in\mathcal{I}}(X_i-\hat{X}_i)^2+\frac{1}{N}\sum_{i\in[N]\backslash\mathcal{I}}(X_i-\hat{X}_i)^2\nonumber\\
    &=\frac{1}{N}\sum_{i\in\mathcal{I}}\left(\frac{X_M-1-\alpha_iX_m}{1-\alpha_i}\right)^2+\frac{1}{N}\sum_{i\in[N]\backslash\mathcal{I}}X_m^2\label{subs}\\
    &=\frac{\sum_{i\in\mathcal{I}}\frac{1}{(1-\alpha_i)^2}}{N}(X_M-1)^2+\frac{1}{N}\left(N-|\mathcal{I}|+\sum_{i\in\mathcal{I}}(\frac{\alpha_i}{1-\alpha_i})^2\right)X_m^2\nonumber\\
    &\ \ \ +\frac{1}{N}\left(\sum_{i\in\mathcal{I}}\frac{-2\alpha_i}{(1-\alpha_i)^2}\right)(X_M-1)X_m\label{ko1}\\
    &\leq (X_M-1)^2+X_m^2\label{ko2},
\end{align}
where (\ref{ko2}) is justified as follows. Since $\alpha_i\leq 0,\forall i\in\mathcal{I}$, and $|\mathcal{I}|\leq N$, the coefficients of $(X_M-1)^2$ and $X_m^2$ in (\ref{ko1}) are both upper bounded by 1. Moreover, since $(X_M-1)\leq 0$, the third term in (\ref{ko1}) is non-positive, which results in (\ref{ko2}).

Fix arbitrary $\epsilon>0$. We have
\begin{align}
    \lim_{N\to\infty}\textnormal{Pr}\{\frac{1}{N}\sum_{i=1}^N(X_i-\hat{X}_i)^2\geq \epsilon\}&\leq \lim_{N\to\infty}\textnormal{Pr}\{(X_{M}-1)^2+X_m^2\geq\epsilon\}\label{rr1}\\
    &\leq\lim_{N\to\infty}\textnormal{Pr}\{(X_{M}-1)^2\geq\frac{\epsilon}{2}\cup X_m^2\geq\frac{\epsilon}{2}\}\label{rr2}\\
    &\leq \lim_{N\to\infty}\textnormal{Pr}\{(X_{M}-1)^2\geq\frac{\epsilon}{2}\} +\textnormal{Pr}\{X_m^2\geq\frac{\epsilon}{2}\}\label{rr3}\\
    &=\lim_{N\to\infty}\textnormal{Pr}\left\{X_{M}\leq1-\sqrt{\frac{\epsilon}{2}}\right\}+\textnormal{Pr}\left\{X_m\geq\sqrt{\frac{\epsilon}{2}}\right\}\nonumber\\
    &=\lim_{N\to\infty}\textnormal{Pr}\left\{\max_{i\in[N]}X_i\leq1-\sqrt{\frac{\epsilon}{2}}\right\}\nonumber\\
    &\ \ \ +\lim_{N\to\infty}\textnormal{Pr}\left\{\min_{i\in[N]}X_i\geq\sqrt{\frac{\epsilon}{2}}\right\}\nonumber\\
    &=0,\label{rr4}
\end{align}
where (\ref{rr1}) follows from (\ref{ko2}), and (\ref{rr2}) is from the fact that for two random variables $A,B$, the event $\{A+B\geq\epsilon\}$ is a subset of $\{A\geq\frac{\epsilon}{2}\}\cup\{B\geq\frac{\epsilon}{2}\}$. (\ref{rr3}) is the application of Boole's inequality, i.e., the union bound, and finally, (\ref{rr4}) results from lemma \ref{lem2}.

Again, the empirical MSE of the adversary converges in probability to 0, which means that in this context, the lack of knowledge of the parameters has a vanishingly small effect.

\subsection{Case 3: $\omega b<0$}
In this case, the observations of the adversary are $v_i=\omega X_i + b,i\in[N]$, where $\omega$ and $b$ have different signs. This case is more involved and can be divided into two scenarios as follows.
\subsubsection{All the $v_i$'s have the same sign}
In this case, the adversary concludes that the sign of $b$ is the same as that of $v_i$'s, since if $N$ is large enough, for some $i\in[N]$, we have $X_i\approx 0$ and its corresponding $v_i$ is close to $b$. Also, since we have $\omega b<0$, the sign of $\omega$ is inferred. Now that the signs of $\omega$ and $b$ are known to the adversary, following a similar approach as in the previous subsection, it can be shown that MSE converges in probability to 0.
\subsubsection{The $v_i$'s do not have the same sign}
In this case, the adversary cannot decide between $\omega>0,b<0$ and $\omega<0,b>0$. It is, however, easy to show that in one case the adversary's estimates are close to the real values, i.e., $\hat{X}_i\approx X_i,i\in[N]$, and in the other case $\hat{X}_i\approx 1-X_i$. Not knowing which of the two cases is true, one approach is that the adversary can assume $\omega>0,b<0$ for the first $\frac{N}{2}$ predictions and obtain estimates accordingly, and for the second $\frac{N}{2}$ predictions, it assumes $\omega<0,b>0$ and obtain estimates accordingly. The error of the adversary is close to 0 in one of these batches of $\frac{N}{2}$ predictions. However, this approach can be outperformed as follows. The adversary assumes for the whole $N$ predictions that $\omega>0,b<0$ and obtains estimates accordingly. Afterwards, the adversary assumes $\omega<0,b>0$ for the whole $N$ predictions, and obtains a second estimate. Since MSE is a strictly convex function of the estimate, $\hat{X}_i\approx\frac{1}{2}X_i+\frac{1}{2}(1-X_i)=\frac{1}{2}$ outperforms the previous approach, which means that the aforementioned estimation is worse than the naive estimate of $\frac{1}{2}$. Weather the adversary can beat this estimate in this context is left as a problem to be consider in a later study.\footnote{In this context, one possible approach is to use the population statistics publicly available to the active party. For instance, if the unknown feature is the age of each client, the active party can use the population average as an estimate in solving $\omega\frac{\sum_iX_i}{N}+b=\frac{\sum_iv_i}{N}$.}

In conclusion, if the active party is aware of only the signs of $\omega$ and $b$, the attack has an error that vanishes with $N$. If the adversary is only aware of the sign of $\omega b$, the same result holds unless the observations $v_i$'s have different signs.
\section{Privacy-Preserving Scheme (PPS)}\label{PR}
In \cite{Xinjian} and \cite{Jiang}, several defense techniques, such as differentially-private training, processing the confidence scores, etc., have been investigated, where the model accuracy is taken as the utility in a privacy-utility trade-off. Experimental results are provided to compare different techniques. Except for the two techniques, purification and rounding, defense comes with a loss in utility, i.e., the model accuracy is degraded.

This section consists of two subsections. In the first one, we consider the problem of preserving the privacy in the most stringent scenario, i.e., without altering the confidence scores that are revealed to the active party. In the second subsection, this condition is relaxed, and we focus on privacy-preserving schemes that do not degrade the model accuracy.
\subsection{privacy preserving without changing the confidence scores}
In this subsection, the question is: Is it possible to improve the privacy of the passive party, or equivalently worsen the performance of the adversary in doing reconstruction attacks, without altering the confidence scores that the active party receives? This refers to the stringent case where the active party requires the true soft confidence scores for decision making rather than the noisy or hard ones, i.e., class labels. One motivation for this requirement is provided in the following exmple. Consider the binary classification case, in which the active party is a bank that needs to decide whether to approve a credit request or not. Assuming that this party can approve a limited number of requests, it would make sense to receive the soft confidence scores for a better decision making. In other words, if the corresponding confidence scores for two sample IDs are $(0.6,0.4)$ and $(0.9,0.1)$, where each pair refers to the probabilities corresponding to  (Approve, Disapprove) classes, the second sample ID has the priority for being approved. This ability to prioritize the samples would disappear if only a binary score is revealed to the active party. Hence, we wish to design a scheme that worsens the reconstruction attacks, while the disclosed confidence scores remain unaltered.

Before answering this question, we start with a simple example to introduce the main idea, and gradually build upon this. Consider a binary classification task with a logistic regression model. Moreover, assume that the training samples are $2$-dimensional, i.e., $\mathbf{x}_i=(x_{1,i},x_{2,i})^T, i\in[n]$ with $n$ denoting the number of elements in the training dataset $\mathcal{D}_{\textnormal{train}}$. By training the classifier, the model parameters $\mathbf{\omega}_0=(\omega_1^0,\omega_2^0)^T$ and $b_0$ are obtained such that $c_1=\sigma(\mathbf{\omega}_0^T\mathbf{x}+b_0)=\frac{1}{1+e^{-\mathbf{\omega}_0^T\mathbf{x}-b_0}}$ denotes the probability that $\mathbf{x}$ belongs to class 1, and obviously $c_2=1-c_1$.

Now, imagine that this time we train a binary logistic regression model with a new training data set $\mathcal{D}_\textnormal{train}^\textnormal{new}=\{(x_{2,i},x_{1,i},y_i)|(x_{1,i},x_{2,i},y_i)\in\mathcal{D}_\textnormal{train},i\in[n]\}$. In other words, the new training samples are a permuted version of the original ones. The new parameters are denoted by $\mathbf{\omega}_\textnormal{new}$ and $b_\textnormal{new}$. We can expect to have $(\omega_1^\textnormal{new},\omega_2^\textnormal{new})^T=(\omega_2^0,\omega_1^0)^T$ and $b_\textnormal{new}=b_0$ for the obvious reason that given an arbitrary loss function $f:\mathds{R}\to\mathds{R}$, if $(\mathbf{\omega}_0,b_0)$ is a/the minimizer of $f(\mathbf{\omega}^T\mathbf{x}+b)$ over $(\mathbf{\omega},b)$, we have that $(\mathbf{\omega}_\textnormal{new},b_\textnormal{new})$ minimizes $f(\mathbf{\omega}^T\mathbf{x}_{new}+b)$, where $\mathbf{x}_\textnormal{new}$ is the permuted version of $\mathbf{x}$, since we have the identity $\mathbf{\omega}_0^T\mathbf{x}+b_0=\mathbf{\omega}_\textnormal{new}^T\mathbf{x}_\textnormal{new}+b_\textnormal{new}$.

This permutation of the original data can be written as
\begin{equation*}
    \mathbf{x}_{i,\textnormal{new}}=\begin{bmatrix}0&1\\1&0\end{bmatrix}\mathbf{x}_i,\ \ i\in[n],
\end{equation*}
which is a special case of an invertible linear transform, in which $\mathbf{x}_\textnormal{new}=\mathbf{Hx}$ with $\mathbf{H}$ being an invertible matrix.
%\footnote{Note that if an $L^2$-norm regularization term is also added to the objective function, $\mathbf{H}$ needs to be orthonormal, i.e., $\mathbf{H}^{-1}=\mathbf{H}^T$. For simplicity of analysis, here we assume the case with no regularization.}

% With this introduction, we conclude that for a general multi-class logistic regression task we have that if the parameters of the model trained on the original data are denoted by $(\mathbf{W}_0,\mathbf{b}_0)$, we can expect the parameters of the model trained on the new linearly transformed data ($\mathbf{x}_\textnormal{new}=\mathbf{Hx},\ \textnormal{det}(\mathbf{H})\neq 0$) to be $(\mathbf{W}_\textnormal{new},\mathbf{b}_\textnormal{new})=(\mathbf{W}_0\mathbf{H}^{-1},\mathbf{b}_0)$ due to the identity
% \begin{equation*}
%   \mathbf{W}_\textnormal{new}\mathbf{x}_\textnormal{new}=\mathbf{W}_0\mathbf{H}^{-1}\mathbf{x}_\textnormal{new}=\mathbf{W}_0\mathbf{H}^{-1}\mathbf{H}\mathbf{x}=\mathbf{W}_0\mathbf{x},
% \end{equation*}
% which results in not only the preservation of the model accuracy and loss, but also the preservation of the model outputs, i.e., the confidence scores.

The above explanation, being just an introduction to the main idea, is not written rigorously. In what follows, the discussion is provided formally.

Consider the optimization in the multi-class classification logistic regression as
\begin{equation}\label{mlog}
    \min_{\mathbf{W,b}}\left\{\frac{1}{n}\sum_{i=1}^nH(\Bar{\mathbf{y}_i},\mathbf{c}_i)+\lambda[\textnormal{Tr}(\mathbf{WW}^T)+\|\mathbf{b}\|^2]\right\},
\end{equation}
in which $\Bar{\mathbf{y}_i}$ is the one-hot vector of the class label $y_i$ in $\mathcal{D}_\textnormal{train}=\{(\mathbf{x}_i,y_i)|i\in[n]\}$, $\mathbf{c}_i=\sigma(\mathbf{Wx}_i+\mathbf{b})$ is the confidence score as in (\ref{confi}), and $\lambda\geq 0$ is a hyperparameter corresponding to the regularization. Select an invertible $\mathbf{H}$, and construct $\mathcal{D}_\textnormal{train}^\textnormal{new}=\{(\mathbf{Hx}_i,y_i)|(\mathbf{x}_i,y_i)\in\mathcal{D}_\textnormal{train},i\in[n]\}$.
\begin{proposition}\label{LI}
When $\lambda=0$, i.e., no regularization, if $(\mathbf{W}_0,\mathbf{b}_0)$ is a solution of (\ref{mlog}) calculated on $\mathcal{D}_\textnormal{train}$, we have that $(\mathbf{W}_0\mathbf{H}^{-1},\mathbf{b}_0)$ is a solution of (\ref{mlog}) calculated on $\mathcal{D}_\textnormal{train}^\textnormal{new}$. When $\lambda\neq 0$, if $(\mathbf{W}_0,\mathbf{b}_0)$ denotes the solution of (\ref{mlog}) calculated on $\mathcal{D}_\textnormal{train}$, and $\mathbf{H}$ is orthonormal ($\mathbf{H}^T\mathbf{H}=\mathbf{I}$), we have that $(\mathbf{W}_0\mathbf{H}^{-1},\mathbf{b}_0)$ is the solution of (\ref{mlog}) calculated on $\mathcal{D}_\textnormal{train}^\textnormal{new}$.
\end{proposition}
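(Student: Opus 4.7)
The plan is to show that reparameterizing $(\mathbf{W},\mathbf{b}) \mapsto (\mathbf{W}\mathbf{H}^{-1},\mathbf{b})$ sets up a bijection on the parameter space under which the loss evaluated on $\mathcal{D}_\textnormal{train}^\textnormal{new}$ becomes the loss evaluated on $\mathcal{D}_\textnormal{train}$, so the minimizers correspond under this map.

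First, the key identity is that for every sample $(\mathbf{x}_i,y_i)\in\mathcal{D}_\textnormal{train}$ and every pair of parameters $(\mathbf{W},\mathbf{b})$,
\begin{equation*}
(\mathbf{W}\mathbf{H}^{-1})(\mathbf{H}\mathbf{x}_i)+\mathbf{b} \;=\; \mathbf{W}\mathbf{x}_i+\mathbf{b},
\end{equation*}
so, recalling (\ref{confi}), the confidence vectors $\mathbf{c}_i$ induced on $\mathcal{D}_\textnormal{train}^\textnormal{new}$ by the parameters $(\mathbf{W}\mathbf{H}^{-1},\mathbf{b})$ coincide with those induced on $\mathcal{D}_\textnormal{train}$ by $(\mathbf{W},\mathbf{b})$. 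Since the one-hot labels $\bar{\mathbf{y}}_i$ are unchanged by the transformation of the features, the cross-entropy term $\frac{1}{n}\sum_{i=1}^n H(\bar{\mathbf{y}}_i,\mathbf{c}_i)$ in (\ref{mlog}) is invariant under $(\mathbf{W},\mathbf{b})\mapsto(\mathbf{W}\mathbf{H}^{-1},\mathbf{b})$ when one simultaneously replaces $\mathcal{D}_\textnormal{train}$ by $\mathcal{D}_\textnormal{train}^\textnormal{new}$.

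For the $\lambda=0$ case, the absence of the regularizer reduces (\ref{mlog}) to the cross-entropy term alone. Because the map $\phi:(\mathbf{W},\mathbf{b})\mapsto(\mathbf{W}\mathbf{H}^{-1},\mathbf{b})$ is a bijection of the parameter space (the inverse being $(\mathbf{W}',\mathbf{b}')\mapsto(\mathbf{W}'\mathbf{H},\mathbf{b}')$), and the objective on $\mathcal{D}_\textnormal{train}^\textnormal{new}$ at $\phi(\mathbf{W},\mathbf{b})$ equals the objective on $\mathcal{D}_\textnormal{train}$ at $(\mathbf{W},\mathbf{b})$, minimizers correspond under $\phi$. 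Hence $\phi(\mathbf{W}_0,\mathbf{b}_0)=(\mathbf{W}_0\mathbf{H}^{-1},\mathbf{b}_0)$ solves (\ref{mlog}) on $\mathcal{D}_\textnormal{train}^\textnormal{new}$.

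For the $\lambda\neq 0$ case, the only subtle point is whether the regularizer is also invariant under $\phi$. The bias term $\|\mathbf{b}\|^2$ clearly is. For the weights, compute
\begin{equation*}
\textnormal{Tr}\bigl((\mathbf{W}\mathbf{H}^{-1})(\mathbf{W}\mathbf{H}^{-1})^T\bigr)
=\textnormal{Tr}(\mathbf{W}\mathbf{H}^{-1}\mathbf{H}^{-T}\mathbf{W}^T),
\end{equation*}
which equals $\textnormal{Tr}(\mathbf{W}\mathbf{W}^T)$ for every $\mathbf{W}$ if and only if $\mathbf{H}^{-1}\mathbf{H}^{-T}=\mathbf{I}$, i.e.\ $\mathbf{H}^T\mathbf{H}=\mathbf{I}$. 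Under the stated orthonormality assumption this holds, so the full objective (\ref{mlog}) is invariant under $\phi$, and the same bijection argument transports the unique minimizer $(\mathbf{W}_0,\mathbf{b}_0)$ to $(\mathbf{W}_0\mathbf{H}^{-1},\mathbf{b}_0)$ on $\mathcal{D}_\textnormal{train}^\textnormal{new}$. The main (minor) obstacle is just pinning down exactly this orthonormality requirement as the necessary and sufficient condition for invariance of the Frobenius-norm regularizer; aside from that, the proof is essentially the change-of-variables identity above.
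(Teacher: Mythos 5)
Your proof is correct and follows essentially the same route as the paper's: the change-of-variables identity $(\mathbf{W}\mathbf{H}^{-1})(\mathbf{H}\mathbf{x}_i)+\mathbf{b}=\mathbf{W}\mathbf{x}_i+\mathbf{b}$ for the loss term, and $\textnormal{Tr}(\mathbf{W}\mathbf{H}^{-1}\mathbf{H}^{-T}\mathbf{W}^T)=\textnormal{Tr}(\mathbf{W}\mathbf{W}^T)$ under orthonormality for the regularizer. Your explicit framing of the reparameterization as a bijection that transports minimizers is a slightly cleaner justification than the paper's, which leaves that step implicit, but it is the same argument.
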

\begin{proof}
When $\lambda=0$, the objective in (\ref{mlog}) is a convex function of $(\mathbf{W},\mathbf{b})$. Therefore, it has a global minimum with infinite minimizers in general. The claim is proved by noting that if the training set is $\mathcal{D}_\textnormal{train}$, and $(\mathbf{W}_0,\mathbf{b}_0)$ is one of the solutions, from the identity $\mathbf{W}_0\mathbf{H}^{-1}\mathbf{Hx}=\mathbf{W}_0\mathbf{x}$, we have that $(\mathbf{W}_0\mathbf{H}^{-1},\mathbf{b}_0)$ is also a solution of (\ref{mlog}) trained over $\mathcal{D}_\textnormal{train}^\textnormal{new}$.

When $\lambda> 0$, the objective in (\ref{mlog}) is a strictly convex function of $(\mathbf{W},\mathbf{b})$ due to the strict convexity of the regularization term, and hence, it has a unique minimizer. Denote it by $(\mathbf{W}_0,\mathbf{b}_0)$. Here, the second term in the objective of (\ref{mlog}) is also preserved if $\mathbf{H}$ is orthonormal. In other words, having $\mathbf{H}^T\mathbf{H}=\mathbf{I}$ results in
\begin{align*}
    \mathbf{W}_0\mathbf{H}^{-1}(\mathbf{W}_0\mathbf{H}^{-1})^T&=\mathbf{W}_0\mathbf{H}^{-1}\mathbf{H}^{-T}\mathbf{W}_0^T\\
    &=\mathbf{W}_0\mathbf{W}_0^T.
\end{align*}
As a result, $(\mathbf{W}_0\mathbf{H}^{-1},\mathbf{b}_0)$ is the solution of (\ref{mlog}) trained over $\mathcal{D}_\textnormal{train}^\textnormal{new}$, which not only preserves the loss and model accuracy, but also it results in the same model outputs.
\end{proof}
% This observation, which is henceforth named as \textit{linear invariance}, can be considered for neural networks, as they can be written as a general function of $\mathbf{Wx}+\mathbf{b}$ as in (\ref{NN}). Therefore, for the two models involved in this paper, i.e., LR and NN, we can linearly transform the data without a change in the model accuracy.

This \textit{linear invariance} observed in proposition \ref{LI} can be used in the design of a privacy-preserving scheme as follows. Consider the VFL discussed in this paper in the context of the white-box setting. Hence, the adversary knows $\mathbf{W}_0$ (corresponding to the passive party's model) and when the number of classes are greater than the number of passive party's features, the latter can be perfectly reconstructed by the adversary resulting in $\textnormal{MSE}=0$, i.e., the maximum privacy leakage. A privacy-preserving method that does not alter the confidence scores is proposed as follows. Select an arbitrary orthonormal matrix $\mathbf{H}_{d\times d}(\neq \mathbf{I}_d)$, and the passive party, instead of performing the training on its original training set $\mathcal{D}_{\textnormal{train}}$, trains the model on $\mathcal{D}_{\textnormal{train}}^\textnormal{new}$ where the new samples are the linear transformation (according to $\mathbf{H}$) of the original samples. Note that the task of the active party in training remains unaltered, i.e., it contributes to the training as before. In the white-box scenario, the adversary is aware of the model parameters, and again with the same assumptions, i.e., when the number of classes are greater than the number of passive party's features, the adversary can perfectly reconstruct $\mathbf{x}_\textnormal{new}(=\mathbf{Hx})$. With this scheme, the adversary's MSE has increased from 0 to $\mathds{E}[\|(\mathbf{I}-\mathbf{H})\mathbf{X}\|^2]$, which answers the question asked in the beginning of this section in the affirmative. What remains is to find an appropriate $\mathbf{H}$. To this end, we propose a heuristic scheme in the sequel.

Any orthonormal $\mathbf{H}(\neq\mathbf{I})$ results in some level of protection for the passive party's features. Therefore, the to propose the heuristic scheme, we first start with maximizing $\mathds{E}[\|(\mathbf{I}-\mathbf{H})\mathbf{X}\|^2]$ over the space of orthonormal matrices. Although the latter is not a convex set, this optimization has a simple solution.
We have
\begin{equation}\label{maxe}
    \argmax_{\substack{\mathbf{H}:\\\mathbf{H}^T\mathbf{H}=\mathbf{I}}}\mathds{E}[\|(\mathbf{I}-\mathbf{H})\mathbf{X}\|^2]=-\mathbf{I},
\end{equation}
and the proof is as follows. Denoting the correlation matrix of $\mathbf{X}$ by $\mathbf{K}_\mathbf{0}$, we can write
\begin{align}
  \max_{\substack{\mathbf{H}:\\ \mathbf{H}^T\mathbf{H}=\mathbf{I}}}\mathds{E}[\|(\mathbf{I}-\mathbf{H})\mathbf{X}\|^2]&= \max_{\substack{\mathbf{H}:\\ \mathbf{H}^T\mathbf{H}=\mathbf{I}}}\textnormal{Tr}\left((\mathbf{I}-\mathbf{H})\mathbf{K}_\mathbf{0}(\mathbf{I}-\mathbf{H})^T\right)\nonumber\\
  &= 2\textnormal{Tr}(\mathbf{K}_\mathbf{0})-2\min_{\substack{\mathbf{H}:\\ \mathbf{H}^T\mathbf{H}=\mathbf{I}}}\textnormal{Tr}(\mathbf{K}_\mathbf{0}\mathbf{H})\label{bne1}\\
  &=4\textnormal{Tr}(\mathbf{K}_\mathbf{0}),\label{bne2}
\end{align}
where in (\ref{bne1}), we have used the arguments i) $\textnormal{Tr}(\mathbf{H}\mathbf{K}_\mathbf{0}\mathbf{H}^T)=\textnormal{Tr}(\mathbf{K}_\mathbf{0}\mathbf{H}^T\mathbf{H})=\textnormal{Tr}(\mathbf{K}_\mathbf{0})$, which follows from the invariance of the trace operator under cyclic permutation and the orthonormality of $\mathbf{H}$, ii) $\textnormal{Tr}(\mathbf{A}^T\mathbf{B})=\textnormal{Tr}(\mathbf{A}\mathbf{B}^T)$ for two $m\times n$ matrices and iii) the symmetry of $\mathbf{K}_\mathbf{0}$, i.e., $\mathbf{K}_\mathbf{0}^T=\mathbf{K}_\mathbf{0}$. To show (\ref{bne2}), denote the singular values of $\mathbf{K}_\mathbf{0}$ and $\mathbf{H}$ by $\alpha_1\geq\alpha_2\geq\ldots\geq\alpha_d$ and $\beta_1\geq\beta_2\geq\ldots\geq\beta_d$, respectively. From Von Neumann's trace inequality, we have
\begin{align}
    |\textnormal{Tr}(\mathbf{K}_\mathbf{0}\mathbf{H})|\leq\sum_{i=1}^d\alpha_i\beta_i   =\sum_{i=1}^d\alpha_i
    =\textnormal{Tr}(\mathbf{K}_\mathbf{0})\label{bne3},
\end{align}
where (\ref{bne3}) follows from the following facts: i) all the singular values of an orthonormal matrix are equal to 1\footnote{This can be proved by noting that the singular values of $\mathbf{H}$ are the absolute value of the square root of the eigenvalues of $\mathbf{H}^T\mathbf{H}$ ($=\mathbf{I}$).}, and ii) since $\mathbf{K}_\mathbf{0}$ is symmetric and positive semidefinite, its singular values and eigenvalues coincide. This shows that $\mathbf{H}=-\mathbf{I}$ is a minimizer in (\ref{bne1}).

The maximization in (\ref{maxe}) is the MSE of the adversary when the number of features is lower than the number of classes. Otherwise, it would be a lower bound on the MSE since the adversary cannot reconstruct $\mathbf{HX}$ perfectly. From Theorem \ref{TH2}, the closed form solution of $\textnormal{MSE}(\hat{\mathbf{X}}_\textnormal{LS})$ is known. Although the passive party is generally unaware of what attack method the adversary employs, in what follows, we analyze the performance of $\hat{\mathbf{X}}_\textnormal{LS}$ after the application of PPS, hence named $\hat{\mathbf{X}}_\textnormal{LS}^\textnormal{PPS}$.

\begin{theorem}
We have
\begin{equation}\label{maxe2}
    \max_{\substack{\mathbf{H}:\\\mathbf{H}^T\mathbf{H}=\mathbf{I}}}\ \ \textnormal{MSE}(\hat{\mathbf{X}}_\textnormal{LS}^\textnormal{PPS})=\textnormal{Tr}((\mathbf{I}+\mathbf{A}^+\mathbf{A})\mathbf{K}_\mathbf{0})+2\|\mathbf{A}^+\mathbf{A}\mathbf{K}_\mathbf{0}\|_*,
\end{equation}
where $\|\cdot\|_*$ denotes the nuclear norm. Let $\mathbf{US}\mathbf{V}^T$ be a singular value decomposition of $\mathbf{A}^+\mathbf{A}\mathbf{K}_\mathbf{0}$. We have that $\mathbf{H}^*=-\mathbf{VU}^T$ is a maximizer in (\ref{maxe2}).
\end{theorem}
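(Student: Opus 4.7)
The plan is to obtain a closed-form expression for $\hat{\mathbf{X}}_\textnormal{LS}^\textnormal{PPS}$ as a function of $\mathbf{H}$, expand the resulting MSE into trace terms, and then handle the only $\mathbf{H}$-dependent term using Von Neumann's trace inequality in exactly the same spirit as (\ref{bne1})--(\ref{bne3}).

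First I would identify the adversary's LS estimate under PPS. By Proposition~\ref{LI}, replacing the training set by $\mathcal{D}_\textnormal{train}^\textnormal{new}$ changes the passive party's model matrix from $\mathbf{W}_0$ to $\mathbf{W}_0\mathbf{H}^{-1}$, so the system the (white-box) adversary solves becomes $\mathbf{A}\mathbf{H}^{-1}\hat{\mathbf{X}} = \mathbf{b}'$. Crucially, $\mathbf{b}'$ is unchanged by PPS, because the confidence scores are unchanged by design; in particular $\mathbf{b}' = \mathbf{A}\mathbf{X}$. Using $\mathbf{H}^{-1} = \mathbf{H}^T$ and a one-line SVD argument on $\mathbf{A}\mathbf{H}^T = \mathbf{U}_A \mathbf{\Sigma}_A (\mathbf{H}\mathbf{V}_A)^T$ (where $\mathbf{A} = \mathbf{U}_A\mathbf{\Sigma}_A\mathbf{V}_A^T$), I would establish the identity $(\mathbf{A}\mathbf{H}^T)^+ = \mathbf{H}\mathbf{A}^+$. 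This yields $\hat{\mathbf{X}}_\textnormal{LS}^\textnormal{PPS} = \mathbf{H}\mathbf{A}^+\mathbf{A}\mathbf{X}$.

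Next, I would compute the MSE along the same lines as the proof of Theorem~\ref{TH2}. Writing $\mathbf{P} \triangleq \mathbf{A}^+\mathbf{A}$ and expanding
\[
\|\mathbf{X} - \mathbf{H}\mathbf{P}\mathbf{X}\|^2 = \mathbf{X}^T\mathbf{X} - 2\mathbf{X}^T\mathbf{H}\mathbf{P}\mathbf{X} + \mathbf{X}^T\mathbf{P}^T\mathbf{H}^T\mathbf{H}\mathbf{P}\mathbf{X},
\]
the orthonormality $\mathbf{H}^T\mathbf{H} = \mathbf{I}$ together with the symmetric idempotency of $\mathbf{P}$ collapses the last summand to $\mathbf{X}^T\mathbf{P}\mathbf{X}$. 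Taking expectations and using $\textnormal{Tr}(\mathbf{MX}\mathbf{X}^T) = \mathbf{X}^T\mathbf{M}\mathbf{X}$ together with the linearity of $\textnormal{Tr}$, I obtain
\[
\textnormal{MSE}(\hat{\mathbf{X}}_\textnormal{LS}^\textnormal{PPS}) = \textnormal{Tr}\big((\mathbf{I}+\mathbf{A}^+\mathbf{A})\mathbf{K}_\mathbf{0}\big) - 2\,\textnormal{Tr}(\mathbf{H}\,\mathbf{A}^+\mathbf{A}\mathbf{K}_\mathbf{0}).
\]
Only the last term depends on $\mathbf{H}$, so the maximization over orthonormal $\mathbf{H}$ is equivalent to minimizing $\textnormal{Tr}(\mathbf{H}\mathbf{M})$ with $\mathbf{M} \triangleq \mathbf{A}^+\mathbf{A}\mathbf{K}_\mathbf{0}$.

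Finally, I would invoke Von Neumann's trace inequality as recalled at the end of Section~\ref{prel}: since every singular value of an orthonormal $\mathbf{H}$ equals $1$, $|\textnormal{Tr}(\mathbf{H}\mathbf{M})| \leq \sum_i \sigma_i(\mathbf{M}) = \|\mathbf{M}\|_*$, so $\textnormal{Tr}(\mathbf{H}\mathbf{M}) \geq -\|\mathbf{M}\|_*$; this gives the upper bound claimed in (\ref{maxe2}). To show that the bound is attained at $\mathbf{H}^* = -\mathbf{V}\mathbf{U}^T$, I would simply substitute: $\mathbf{H}^*\mathbf{M} = -\mathbf{V}\mathbf{U}^T\mathbf{U}\mathbf{S}\mathbf{V}^T = -\mathbf{V}\mathbf{S}\mathbf{V}^T$, whose trace is $-\textnormal{Tr}(\mathbf{S}) = -\|\mathbf{M}\|_*$. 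A quick check confirms that $\mathbf{H}^*$ is indeed orthonormal, since $\mathbf{M}$ is $d\times d$ and hence $\mathbf{U}, \mathbf{V}$ are square unitary. The main conceptual hurdle, I expect, is the very first step: recognizing that the LS adversary is implicitly estimating $\mathbf{H}\mathbf{X}$ while being scored against $\mathbf{X}$, and that this is encoded precisely by the replacement $\mathbf{A} \mapsto \mathbf{A}\mathbf{H}^T$ with $\mathbf{b}'$ held fixed; once that is correctly set up, the rest is a direct trace computation plus the same Von Neumann argument already used in (\ref{bne3}).
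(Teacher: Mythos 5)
Your proposal is correct and follows essentially the same route as the paper's proof: derive $\hat{\mathbf{X}}_\textnormal{LS}^\textnormal{PPS}=\mathbf{H}\mathbf{A}^+\mathbf{A}\mathbf{X}$, expand the MSE into $\textnormal{Tr}((\mathbf{I}+\mathbf{A}^+\mathbf{A})\mathbf{K}_\mathbf{0})-2\textnormal{Tr}(\mathbf{H}\mathbf{A}^+\mathbf{A}\mathbf{K}_\mathbf{0})$, and minimize the $\mathbf{H}$-dependent trace via Von Neumann's inequality with the maximizer $-\mathbf{V}\mathbf{U}^T$. The only cosmetic difference is that you justify $(\mathbf{A}\mathbf{H}^T)^+=\mathbf{H}\mathbf{A}^+$ by an explicit SVD of $\mathbf{A}\mathbf{H}^T$, whereas the paper invokes the product rule $(\mathbf{BC})^+=\mathbf{C}^+\mathbf{B}^+$ for $\mathbf{C}$ with orthonormal rows; both are valid.
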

\begin{proof}
As already stated, after the application of PPS with the orthonormal matrix $\mathbf{H}$, the new parameters are $\mathbf{W}_0\mathbf{H}^{-1}$ or equivalently $\mathbf{W}_0\mathbf{H}^{T}$. As a result, the matrix $\mathbf{A}$, capturing the coefficients in the system of linear equations, changes to $\mathbf{A}\mathbf{H}^{-1}$. Therefore, we can write
\begin{equation*}
    \hat{\mathbf{X}}_\textnormal{LS}^\textnormal{PPS}=(\mathbf{A}\mathbf{H}^{-1})^+(\mathbf{A}\mathbf{H}^{-1})\mathbf{HX}.
\end{equation*}
It is known that for two matrices $\mathbf{B,C}$, if $\mathbf{C}$ has orthonormal rows, then $(\mathbf{BC})^+=\mathbf{C}^+\mathbf{B}^+$. Since $\mathbf{H}$ is orthonormal, $\mathbf{H}^{-1}$ has orthonormal rows, and being invertible, we have $(\mathbf{H}^{-1})^+=\mathbf{H}$. Therefore, we can write
\begin{equation*}
    \hat{\mathbf{X}}_\textnormal{LS}^\textnormal{PPS}=\mathbf{H}\mathbf{A}^+\mathbf{A}\mathbf{X},
\end{equation*}
which results in
\begin{align}
   \textnormal{MSE}(\hat{\mathbf{X}}_\textnormal{LS}^\textnormal{PPS})&=\mathds{E}[\|\mathbf{X}-\hat{\mathbf{X}}_\textnormal{LS}^\textnormal{PPS}\|^2] \nonumber\\
   &=\mathds{E}[\|(\mathbf{I}-\mathbf{H}\mathbf{A}^+\mathbf{A})\mathbf{X}\|^2]\nonumber\\
   &=\textnormal{Tr}((\mathbf{I}+\mathbf{A}^+\mathbf{A})\mathbf{K}_\mathbf{0})-2\textnormal{Tr}(\mathbf{H}\mathbf{A}^+\mathbf{A}\mathbf{K}_\mathbf{0})\label{kah}.
\end{align}
From (\ref{kah}), we have that maximizing $\textnormal{MSE}(\hat{\mathbf{X}}_\textnormal{LS}^\textnormal{PPS})$ is equivalent to minimizing $\textnormal{Tr}(\mathbf{H}\mathbf{A}^+\mathbf{A}\mathbf{K}_\mathbf{0})$, since the first term in (\ref{kah}) does not depend on $\mathbf{H}$. Let $\mathbf{US}\mathbf{V}^T$ be a singular value decomposition of $\mathbf{A}^+\mathbf{A}\mathbf{K}_\mathbf{0}$. As before, by applying Von Neumann's trace inequality, and noting that all the singular values of $\mathbf{H}$ are 1, we have
\begin{align*}
    \textnormal{Tr}(\mathbf{H}\mathbf{A}^+\mathbf{A}\mathbf{K}_\mathbf{0})&\geq-\sum_{i=1}^ds_i\nonumber\\
    &=-\|\mathbf{A}^+\mathbf{A}\mathbf{K}_\mathbf{0}\|_*
\end{align*}
where $s_i$'s are the singular values of $\mathbf{A}^+\mathbf{A}\mathbf{K}_\mathbf{0}$.
By replacing $\mathbf{H}$ with $-\mathbf{VU}^T$, we have
\begin{align*}
   \textnormal{Tr}(\mathbf{H}\mathbf{A}^+\mathbf{A}\mathbf{K}_\mathbf{0}) &=\textnormal{Tr}(-\mathbf{VU}^T\mathbf{US}\mathbf{V}^T)\nonumber\\
   &=-\textnormal{Tr}(\mathbf{V}\mathbf{S}\mathbf{V}^T)\nonumber\\
   &=-\textnormal{Tr}(\mathbf{S})\nonumber\\
   &=-\|\mathbf{A}^+\mathbf{A}\mathbf{K}_\mathbf{0}\|_*.
\end{align*}
Finally, by noting that $-\mathbf{VU}^T$ is orthonormal, the proof is complete.
\end{proof}
Note that when the number of features of the passive party is lower than the number of classes, we have $\mathbf{A}^+\mathbf{A}=\mathbf{I}$. Therefore, $\mathbf{A}^+\mathbf{A}\mathbf{K}_\mathbf{0}=\mathbf{K}_\mathbf{0}=\mathbf{Q}\mathbf{\Lambda}\mathbf{Q}^T$, and $\mathbf{H}^*=-\mathbf{QQ}^T=-\mathbf{I}$, which is in line with (\ref{maxe}).

In summary, if the training is with regularization, we are sure that the new model parameters are a linear transform of the original ones, which is due to the strict convexity mentioned in proposition \ref{LI}. As a result, we can worsen the adversary's performance without altering the confidence scores that are revealed. However, if the training is without regularization, there is no guarantee that the new model parameters are a linear transform of the original ones\footnote{unless we set the initial point of the solver close to the new model parameters, which is not practical as it requires to train twice.}, which is due to the possibility of having multiple solutions stated in proposition \ref{LI}. In this case, although the confidence scores no longer remain the same, the new setting has the same empirical cost. In other words, the average cross entropy between the one-hot vector of the class labels and the confidence scores does not change, which is a relaxed version of our initial lossless utility.

\begin{remark}
The main idea in this subsection was to train the VFL model on a linearly transformed data to worsen the adversary's performance and preserve the fidelity in reporting confidence scores. This process can be viewed/implemented in a different way as follows. Consider that the training phase is on the original data, and the model parameters are obtained. Let $\mathbf{W}_0$ denote the model parameters of the passive party. Instead of revealing $\mathbf{W}_0$ to the active party, a manipulated version of it, i.e., $\mathbf{W}_0\mathbf{H}^{-1}$ is disclosed. The adversary regards this as the true model parameters, and all the attacks are performed accordingly. In other words, the same preservation of privacy has been switched from training on the linearly transformed data to revealing the linearly transformed parameters. In this context, the constraints that were initially imposed on $\mathbf{H}$ can be viewed as a measure which ensures that the new revealed model parameters are not "far" from the original ones and have a one-to-one correspondence due to the invertibility of $\mathbf{H}$. This different but equivalent view of privacy enhancement can be regarded as a bridge between the white-box (revealing $\mathbf{W}_0$) and the black-box (revealing no model parameters) setting. Finally, this manipulation of the model parameters could also be done in an additive way, which is left as a problem to be investigated in future.
\end{remark}
\subsection{privacy preserving without changing the model accuracy}
In this subsection, we relax the requirement of the previous subsection, and consider privacy-preserving schemes that change the confidence scores without changing the model accuracy. We focus on adding noise to the intermediate results as follows. The confidence score that the coordinator reveals to the adversary is given by $\mathbf{c}=\sigma(\mathbf{z})$ with $\mathbf{z}=\bold{W}_{act} \mathbf{Y}+\bold{W}_{pas} \mathbf{X}+\mathbf{b}$. In order to preserve the privacy, we assume that the coordinator adds some noise to the intermediate results, i.e., $\mathbf{z}$, before the application of softmax. In other words, the new confidence scores that are revealed to the adversary are
\begin{equation}\label{anoise}
    \tilde{\mathbf{c}}=\sigma(\tilde{\mathbf{z}})=\sigma(\mathbf{z}+\mathbf{n}).
\end{equation}
Let $\mathbf{S}\triangleq\mathds{E}[\mathbf{nn}^T]$ denote the correlation matrix of $\mathbf{n}$, and $\textnormal{Tr}(\mathbf{S})$ denotes the noise budget. In what follows, we obtain the MSE of $\hat{\mathbf{X}}_\textnormal{LS}$, which sheds light on how to generate the additive noise $\mathbf{n}$.

The noisy system of linear equations that the adversary constructs is a modified version of (\ref{eqeq1}), which is obtained by replacing $\mathbf{z}$ with $\tilde{\mathbf{z}}$ as
\begin{align}
    \bold{JW}_{pas}\bold{X} &= \mathbf{J}\tilde{\mathbf{z}}-\bold{J}\bold{W}_{act}\mathbf{Y}-\mathbf{Jb}\nonumber\\
    &= \mathbf{Jz}-\bold{J}\bold{W}_{act}\mathbf{Y}-\mathbf{Jb}+\mathbf{Jn},
\end{align}
where $\mathbf{J}$ is given in (\ref{JJ}). As a result, instead of solving the correct system $\mathbf{AX}=\mathbf{b}'$, the adversary tries to solve $\mathbf{AX}=\tilde{\mathbf{b}'}(=\mathbf{b}'+\mathbf{Jn})$. Therefore, we have
\begin{equation}
    \hat{\mathbf{X}}_\textnormal{LS}=\mathbf{A}^+\tilde{\mathbf{b}'}=\mathbf{A}^+(\mathbf{AX}+\mathbf{Jn}),
\end{equation}
and
\begin{align}
    \textnormal{MSE}(\hat{\mathbf{X}}_\textnormal{LS})&=\frac{1}{d}\mathds{E}[\|\mathbf{X}-\hat{\mathbf{X}}_\textnormal{LS}\|^2]\nonumber\\
    &=\frac{1}{d}\mathds{E}[\|\mathbf{X}-\mathbf{A}^+(\mathbf{AX}+\mathbf{Jn})\|^2]\nonumber\\
    &=\frac{1}{d}\mathds{E}[\|(\mathbf{I}-\mathbf{A}^+\mathbf{A})\mathbf{X}-\mathbf{A}^+\mathbf{Jn}\|^2]\nonumber\\
    &=\frac{1}{d}\textnormal{Tr}\left((\mathbf{I}-\mathbf{A}^+\mathbf{A})\mathbf{K}_\mathbf{0}\right)+\frac{1}{d}\textnormal{Tr}(\mathbf{A}^+\mathbf{JSJ}^T{\mathbf{A}^+}^T)-\frac{2}{d}\mathds{E}[\mathbf{n}^T\mathbf{J}^T{\mathbf{A}^+}^T(\mathbf{I}-\mathbf{A}^+\mathbf{A})\mathbf{X}]\nonumber\\
    &=\frac{1}{d}\textnormal{Tr}\left((\mathbf{I}-\mathbf{A}^+\mathbf{A})\mathbf{K}_\mathbf{0}\right)+\frac{1}{d}\textnormal{Tr}(\mathbf{A}^+\mathbf{JSJ}^T{\mathbf{A}^+}^T)\label{akh0},
\end{align}
where (\ref{akh0}) follows from having ${\mathbf{A}^+}^T(\mathbf{I}-\mathbf{A}^+\mathbf{A})=\mathbf{0}$. By comparing (\ref{akh0}) to (\ref{ineq1}), we observe that the second term in (\ref{akh0}), which is non-negative\footnote{This follows the positive semidefiniteness of $\mathbf{A}^+\mathbf{JSJ}^T{\mathbf{A}^+}^T$.}, represents the performance degradation due to the receipt of noisy confidence scores. Furthermore, this performance degradation depends on the additive noise only through its correlation matrix $\mathbf{S}$.

It makes sense to maximize the MSE of the adversary subject to a limited noise budget, i.e.,
\begin{equation}\label{optz}
    \max_{\mathbf{S}:\textnormal{Tr}(\mathbf{S})\leq\alpha}\textnormal{Tr}(\mathbf{A}^+\mathbf{JSJ}^T{\mathbf{A}^+}^T)
\end{equation}
for some $\alpha\geq 0$. However, we first need to show that the objective of this optimization does not depend on a specific choice of $\mathbf{J}$. This is crucial since the coordinator is unaware how the adversary constructs the system of linear equations. We already know that $\mathbf{A}=\mathbf{JW}_{pas}$, and for simplicity, we drop the subscript in the sequel, and use $\mathbf{A}=\mathbf{JW}$ instead. The following lemma shows that replacing $\mathbf{J}$ with $\mathbf{RJ}$, in which $\mathbf{R}$ is invertible, does not change the objective in (\ref{optz}). As a result, the coordinator can assume that the system of linear equations has been obtained by $\mathbf{J}$ and perform the optimization in (\ref{optz}).
\begin{lemma}
For an invertible $\mathbf{R}_{(k-1)\times(k-1)}$, we have
\begin{equation*}
    \textnormal{Tr}\left((\mathbf{RJW})^+\mathbf{RJS}(\mathbf{RJ})^T{(\mathbf{RJW})^+}^T\right)=\textnormal{Tr}\left((\mathbf{JW})^+\mathbf{JS}\mathbf{J}^T{(\mathbf{JW})^+}^T\right).
\end{equation*}
\end{lemma}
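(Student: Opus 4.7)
The plan is to reduce the identity to a cancellation argument based on the pseudoinverse formula $(\mathbf{RA})^{+}=\mathbf{A}^{+}\mathbf{R}^{-1}$, which holds whenever $\mathbf{A}$ has linearly independent rows and $\mathbf{R}$ is invertible. This identity is in fact the same one that was invoked in the remark immediately preceding this lemma (where $(\mathbf{RA})^{+}=\mathbf{A}^{+}\mathbf{R}^{-1}$ is used to argue that $\mathbf{I}-\mathbf{A}^{+}\mathbf{A}$ is invariant under left-multiplication of $\mathbf{A}$ by an invertible matrix), so I would cite/reuse that fact. As is the standing assumption in the white-box analysis, I would assume that $\mathbf{A}=\mathbf{JW}$ has linearly independent rows, which is the condition that licenses the pseudoinverse identity; if needed, one quickly verifies the four Moore--Penrose conditions for $\mathbf{A}^{+}\mathbf{R}^{-1}$, noting that the only nontrivial one is the Hermitian condition on $(\mathbf{RA})(\mathbf{A}^{+}\mathbf{R}^{-1})=\mathbf{R}\mathbf{A}\mathbf{A}^{+}\mathbf{R}^{-1}$, which reduces to $\mathbf{R}\mathbf{R}^{-1}=\mathbf{I}$ precisely because full row rank of $\mathbf{A}$ gives $\mathbf{A}\mathbf{A}^{+}=\mathbf{I}$.

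With this identity in hand, I would substitute $(\mathbf{RJW})^{+}=(\mathbf{JW})^{+}\mathbf{R}^{-1}$ directly into the left-hand side. The inner factor becomes
\begin{equation*}
(\mathbf{JW})^{+}\mathbf{R}^{-1}\cdot\mathbf{R}\mathbf{J}\mathbf{S}\mathbf{J}^{T}\mathbf{R}^{T}\cdot\mathbf{R}^{-T}\bigl((\mathbf{JW})^{+}\bigr)^{T},
\end{equation*}
where I have also used $((\mathbf{JW})^{+}\mathbf{R}^{-1})^{T}=\mathbf{R}^{-T}((\mathbf{JW})^{+})^{T}$. The two central pairs $\mathbf{R}^{-1}\mathbf{R}$ and $\mathbf{R}^{T}\mathbf{R}^{-T}$ both collapse to the identity, leaving exactly $(\mathbf{JW})^{+}\mathbf{J}\mathbf{S}\mathbf{J}^{T}((\mathbf{JW})^{+})^{T}$. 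Taking the trace of both sides yields the claimed equality.

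There is essentially no obstacle beyond correctly invoking the pseudoinverse identity: once $(\mathbf{RA})^{+}=\mathbf{A}^{+}\mathbf{R}^{-1}$ is available, the proof is a one-line cancellation. The only subtlety worth a brief comment in the write-up is the full-row-rank hypothesis on $\mathbf{JW}$, which is the same implicit hypothesis used throughout Section \ref{wbs} and in the remark just above; in particular, if $\mathbf{JW}$ is not full row rank then neither $(\mathbf{RJW})^{+}=(\mathbf{JW})^{+}\mathbf{R}^{-1}$ nor the previously-stated invariance $(\mathbf{I}-\mathbf{A}^{+}\mathbf{A})=(\mathbf{I}-\mathbf{A}_{\textnormal{new}}^{+}\mathbf{A}_{\textnormal{new}})$ necessarily hold, so the lemma should be stated under the same assumption that is already in force.
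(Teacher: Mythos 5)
Your proof is correct and follows essentially the same route as the paper: both rest on the identity $(\mathbf{RJW})^{+}=(\mathbf{JW})^{+}\mathbf{R}^{-1}$, valid because $\mathbf{R}$ is invertible and $\mathbf{JW}$ has linearly independent rows, after which the $\mathbf{R}$ factors cancel inside the trace. Your explicit check of the Moore--Penrose conditions and your remark on the full-row-rank hypothesis are sound but add nothing beyond what the paper's one-line argument already uses.
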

\begin{proof}
Since $\mathbf{R}$ is invertible, it has linearly independent columns. Moreover, since $\mathbf{JW}$ has linearly independent rows, we have $(\mathbf{RJW})^+=(\mathbf{JW})^+\mathbf{R}^+=(\mathbf{JW})^+\mathbf{R}^{-1}$. Using this and the fact that $(\mathbf{AB})^T=\mathbf{B}^T\mathbf{A}^T$ concludes the proof.
\end{proof}
Let $\mathbf{A}^+\mathbf{J}=\mathbf{U\Sigma V}^T$ be a singular value decomposition in which the singular values are arranged in a non-increasing order, i.e., $\sigma_1\geq\sigma_2\geq\ldots$ .
\begin{theorem}\label{th4}
We have
\begin{equation}
    \max_{\mathbf{S}:\textnormal{Tr}(\mathbf{S})\leq\alpha}\textnormal{Tr}(\mathbf{A}^+\mathbf{JSJ}^T{\mathbf{A}^+}^T)=\sigma_1^2\alpha,
\end{equation}
where $\sigma_1$ is the maximum singular value of $\mathbf{A}^+\mathbf{J}$, and $\mathbf{S}^*=\alpha \mathbf{v}_1\mathbf{v}_1^T$, where $\mathbf{v}_1$ denotes the right singular vector corresponding to $\sigma_1$.
\end{theorem}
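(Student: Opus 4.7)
The plan is to recast the optimization as a trace inner product with a single positive semidefinite matrix and then apply Von Neumann's trace inequality (already used in the paper).

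First I would use the cyclic invariance of the trace to rewrite the objective as
\begin{equation*}
\textnormal{Tr}(\mathbf{A}^+\mathbf{JSJ}^T{\mathbf{A}^+}^T)=\textnormal{Tr}(\mathbf{S}\,\mathbf{M}),\qquad \mathbf{M}\triangleq(\mathbf{A}^+\mathbf{J})^T(\mathbf{A}^+\mathbf{J}).
\end{equation*}
The matrix $\mathbf{M}$ is symmetric positive semidefinite, and from the SVD $\mathbf{A}^+\mathbf{J}=\mathbf{U\Sigma V}^T$ we have $\mathbf{M}=\mathbf{V}\mathbf{\Sigma}^T\mathbf{\Sigma}\mathbf{V}^T$, so the eigenvalues of $\mathbf{M}$ are exactly $\sigma_1^2\geq\sigma_2^2\geq\ldots$ with $\mathbf{v}_1$ being a unit eigenvector associated with $\sigma_1^2$. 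Observe also that any admissible $\mathbf{S}$ is a correlation matrix $\mathds{E}[\mathbf{nn}^T]$, hence positive semidefinite.

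Next, denoting the eigenvalues of $\mathbf{S}$ by $s_1\geq s_2\geq\ldots\geq s_{k-1}\geq 0$, Von Neumann's trace inequality (in its PSD form, as already invoked in the proof of Theorem \ref{TH2}) yields
\begin{equation*}
\textnormal{Tr}(\mathbf{SM})\leq \sum_{i=1}^{k-1} s_i\,\sigma_i^2 \leq \sigma_1^2\sum_{i=1}^{k-1} s_i =\sigma_1^2\,\textnormal{Tr}(\mathbf{S})\leq \sigma_1^2\,\alpha.
\end{equation*}
This establishes the upper bound.

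For achievability, I would substitute $\mathbf{S}^*=\alpha\mathbf{v}_1\mathbf{v}_1^T$. It is positive semidefinite and satisfies $\textnormal{Tr}(\mathbf{S}^*)=\alpha\|\mathbf{v}_1\|^2=\alpha$, so it is feasible. Plugging into the objective gives $\textnormal{Tr}(\mathbf{S}^*\mathbf{M})=\alpha\,\mathbf{v}_1^T\mathbf{M}\mathbf{v}_1=\alpha\sigma_1^2$, matching the upper bound. The noise $\mathbf{n}=\sqrt{\alpha}\,Z\,\mathbf{v}_1$ with $Z$ a zero-mean unit-variance scalar random variable realizes this correlation matrix. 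The only mildly subtle step is justifying the PSD constraint on $\mathbf{S}$ (so that the Von Neumann bound applies in its PSD form rather than the less favourable general-matrix form), but this is immediate since $\mathbf{S}$ is a correlation matrix; I do not anticipate any real obstacle beyond this observation.
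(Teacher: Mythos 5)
Your proposal is correct and follows essentially the same route as the paper: cyclic invariance of the trace to reduce the objective to $\textnormal{Tr}\left(\mathbf{S}(\mathbf{A}^+\mathbf{J})^T\mathbf{A}^+\mathbf{J}\right)$, Von Neumann's trace inequality in its PSD form for the upper bound, and $\mathbf{S}^*=\alpha\mathbf{v}_1\mathbf{v}_1^T$ for achievability. Your additional remarks on the feasibility check and the explicit noise construction $\mathbf{n}=\sqrt{\alpha}\,Z\,\mathbf{v}_1$ are welcome details that the paper leaves implicit.
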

\begin{proof}
Denoting the eigenvalues of $\mathbf{S}$ by $\lambda_1\geq\lambda_2\geq\ldots\geq\lambda_k\geq 0$, we have
\begin{align}
    \max_{\mathbf{S}:\textnormal{Tr}(\mathbf{S})\leq\alpha}\textnormal{Tr}(\mathbf{A}^+\mathbf{JSJ}^T{\mathbf{A}^+}^T)&=\max_{\mathbf{S}:\textnormal{Tr}(\mathbf{S})\leq\alpha}\textnormal{Tr}\left(\mathbf{S}(\mathbf{A}^+\mathbf{J})^T\mathbf{A}^+\mathbf{J}\right)\nonumber\\
    &\leq\max_{\substack{\lambda_i\geq 0:\\\sum\lambda_i\leq\alpha}}\sum_{i=1}^k\sigma_i^2\lambda_i\label{fonn}\\
    &=\sigma_1^2\alpha,
\end{align}
where (\ref{fonn}) follows from the application of Von Neumann's trace inequality, and the upper bound is achieved by $\mathbf{S}^*=\alpha \mathbf{v}_1\mathbf{v}_1^T$.
\end{proof}
Thus far, the coordinator knows that the additive noise $\mathbf{n}$ in (\ref{anoise}) should have a correlation matrix equal to $\alpha \mathbf{v}_1\mathbf{v}_1^T$.
Before further analysis, we need to be aware of two points. The first one is that the MSE in (\ref{akh0}) does not assume that $\hat{\mathbf{X}}_\textnormal{LS}$ has been clamped to $[0,1]^d$. As a result, it can grow unboundedly with $\alpha$, as concluded from theorem \ref{th4}. However, in practice, the adversary can simply truncate its estimate to the region of interest, and hence, the results of theorem \ref{th4} can be used as a hint on how to add noise to the intermediate results. The second point that is worth mentioning here is that we are interested in adding noise in a way that it does not degrade the model accuracy. This requires that the same entry that is maximum in $\mathbf{z}$ should also be the maximum in $\tilde{\mathbf{z}}$.

In what follows, taking into account the aforementioned points, we propose two privacy-preserving schemes that preserve the model accuracy while worsening the adversary's performance. The first scheme is as follows. In each prediction, the coordinator finds the index of the entry that is maximum in $\mathbf{z}$, i.e.,
\begin{equation}\label{e91}
    i^*\triangleq\argmax_{i\in[k]}z_i.
\end{equation}
Denoting the elements of $\mathbf{v}_1$ by $v_i,i\in[k]$, define $\tilde{\mathbf{n}}$ element-wise as
\begin{equation}\label{o.w.}
    \tilde{n}_i\triangleq\left\{\begin{array}{ccc}
v_i&i\neq i^*\\\max_j v_j &\textnormal{o.w.}
\end{array}\right.
\end{equation}
Finally, the coordinator sets
\begin{equation}\label{e93}
    \mathbf{n}\triangleq \sqrt{\alpha}\frac{\tilde{\mathbf{n}}}{\|\tilde{\mathbf{n}}\|},
\end{equation}
and reveals $\tilde{\mathbf{c}}=\sigma(\mathbf{z}+\mathbf{n})$ to the adversary.

The second scheme is as follows. The coordinator sets $\mathbf{z}'\triangleq \mathbf{z}+\sqrt{\alpha}\mathbf{v}_1$, and reveals $\tilde{\mathbf{c}}=\sigma(\tilde{\mathbf{z}})$ to the adversary in which $\tilde{\mathbf{z}}$ is defined element-wise as\footnote{If there are more than one maximizer in $\mathbf{z}$, the aforementioned schemes undergo a slight modification: $i^*$ denotes the set of indices of the maximizers of $\mathbf{z}$, and "$\neq$" in (\ref{o.w.}) and (\ref{o.w.2}) is replaced with "$\not\in$".}
\begin{equation}\label{o.w.2}
    \tilde{z}_i\triangleq\left\{\begin{array}{ccc}
z'_i&i\neq i^*\\\max_j z'_j &\textnormal{o.w.}
\end{array}\right. .
\end{equation}
The above schemes can be viewed alternatively as follows. In each prediction, the adversary has the random error of
\begin{align*}
   \frac{1}{d}\|(\mathbf{I}-\mathbf{A}^+\mathbf{A})\mathbf{X}-\mathbf{A}^+\mathbf{Jn}\|^2&=\frac{1}{d}\|(\mathbf{I}-\mathbf{A}^+\mathbf{A})\mathbf{X}\|^2+\frac{1}{d}\|\mathbf{A}^+\mathbf{Jn}\|^2.
\end{align*}
Obviously, the maximizer of the second term over all the unit-norm $\mathbf{n}$'s is the right singular vector corresponding to the maximum singular value of $\mathbf{A}^+\mathbf{J}$, i.e., $\mathbf{v}_1$. However, in order to preserve the model accuracy, the two schemes modify the input of softmax such that $\argmax\tilde{\mathbf{c}}=\argmax\mathbf{c}$.

\section{Experimental results}\label{expr}
In this section, the performance of the proposed reconstruction attacks are evaluated on real data. These results are also compared with the previously known techniques in the literature (\cite{Jiang, Xinjian}).
% In the following, first we review the models and the datasets we have used in out numerical results. Next, we will review the baselines based on which we have compared our results. Then we illustrate our numerical results, where the attack performance of the active party using the proposed methods here is studied and compared with the baselines.

\textbf{Datasets.} We use both real-world and synthetic data for the evaluations. For the former, three widely-used public datasets (Bank, Robot and Satellite) are used for binary and multi-class classification tasks, which are obtained from the Machine Learning Repository website in \cite{MLR}. The synthetic dataset is generated via make$\_$classification in sklearn$.$dataset package. Table \ref{table_dataset} outlines the details of these datasets.
% \footnote{{\color{red}In \cite{Jiang, Xinjian}, another dataset, namely, Drive signal (with $48$ features and $11$ classes) has been used. However, LR and NN performs poorly on this dataset, whereas using Random Forest, it achieves above 99\% accuracy \cite{Tobias_Drive_Signal}. Accordingly, due to the type of dataset not being suitable for LR and NN, we have not included this dataset in out experimental results.}}
\begin{table}[ht]
\caption{Details of the datasets} % title of Table
\centering % used for centering table
\begin{tabular}{c c c c c} % centered columns (4 columns)
\hline\hline %inserts double horizontal lines
Dataset & \#Feature & \#Class & \#Records \\ [0.5ex] % inserts table
%heading
\hline % inserts single horizontal line
Bank & 19 & 2 & 41188 \\ % inserting body of the table
Robot & 24 & 4 & 5456 \\
Satellite & 36 & 6 & 6430 \\
Synthetic & 10 & 2 & 50000 \\[1ex] % [1ex] adds vertical space
\hline %inserts single line
\end{tabular}
\label{table_dataset} % is used to refer this table in the text
\end{table}

There are 20 features in the Bank dataset. As mentioned in the description file of the dataset in \cite{MLR}, the 11-th feature "highly affects the output target...[it] should be discarded if the intention is to have a realistic predictive model." Accordingly, we have eliminated this feature and the training is based on 19 features as shown in Table \ref{table_dataset}. Moreover, this dataset has 10 categorical features, which can be handled by a number of techniques with models like LR or NN. These techniques include one-hot encoding of categorical features, mapping ordinal values to each category, mapping categorical values to theirs statistics, and so on. In this paper, we have considered the latter where each category of a categorical feature is mapped to its average in each class.
% \footnote{{\color{red}In \cite{Jiang, Xinjian}, the 11-th feature has not been removed from training. Additionally, it is not clear how the categorical features have been handled for training the models.}}.

\textbf{Model.}
LR is the model considered in this paper, where each party holds their parameters corresponding to their local features. The VFL model is trained in a centralized manner, which is a reasonable assumption according to \cite{Xinjian}, since it is assumed that no intermediate information is revealed during the training phase, and only the final model is disclosed.
% {\color{red}Note that as long as the output of nonlinear activation units are separable with respect to each party's features, considering two separate NN architectures, one for each party, does not inject any training/performance loss compared to a case where parties train a model jointly. Therefore, without loss of generality, we train the model using a centralized training. }

% The NN model is composed of three hidden layers with 8 units in each layer, where the final layer is considered as sigmoid or softmax for binary or multi-class classification tasks, respectively. This model is considered in two cases: case 1 with $\text{tanh}(\cdot)$ and case 2 with $\text{Relu}(\cdot)$ as the activation units.

As in \cite{Jiang, Xinjian}, the feature values in each dataset are normalized into $[0,1]$. We note that normalizing the dataset (both the training and test data) as a whole could potentially result in an optimistic model accuracy, which is known as \textit{data snooping} and should be avoided for a very noisy dataset \cite{MacKinlay_Data_snooping}. This effect has been neglected here since the datasets under consideration are not very noisy.

Each dataset has been divided into 80\% training data and 20\% test data. This is done using train$\_$test$\_$split in the sklearn package. In training LR, we apply early stopping, and the training is done without considering any regularization, unless specified otherwise. ADAM optimization is used for training, and the codes, which are in PyTorch, are available online in our GitHub repository \cite{mrtzvrst}.

% Model accuracy and learning rates of the models under consideration are provided in table \ref{table_2}.

% \begin{table}[ht]
% \caption{Accuracy and learning rate in LR and NN models} % title of Table
% \centering % used for centering table
% \begin{tabular}{c c c c c c } % centered columns (4 columns)
% \hline\hline %inserts double horizontal lines
% Dataset & LR Acc. & LR lr & mb size \\ [0.5ex] % inserts table
% \hline % inserts single horizontal line
% Bank & & & &\\ % inserting body of the table
% Robot & & & &\\
% Satellite & & & &\\[1ex] % [1ex] adds vertical space
% \hline %inserts single line
% \end{tabular}
% \label{table_2} % is used to refer this table in the text
% \end{table}

\textbf{Baselines.}
Equation solving attack (ESA) in \cite{Xinjian} and Gradient inversion attack (GIA) in \cite{Jiang} are the baselines and briefly explained in the sequel. ESA was proposed in \cite{Xinjian} mainly for LR that is equal to $\hat{X}_\textnormal{LS}$ in this paper.

GIA is proposed in \cite{Jiang} as a model agnostic reconstruction attack, which can be applied to LR or Neural Networks (NN). The idea is to search for an estimate $\hat{\mathbf{x}}_{pas}\in[0,1]^d$ whose corresponding confidence score, denoted by $\hat{\mathbf{c}}$ is close to $\mathbf{c}$, where the closeness can be measured according to $D(\hat{\mathbf{c}}||{\mathbf{c}})$ and the optimization can be carried out by a gradient-based optimizer with zero initial values.
% We have
% \begin{align}
%     \hat{\bold{x}}_{pas} = \argmin_{\mathcal{X}_{pas}} D(\bold{c}, \hat{\bold{c}})
% \end{align}
% where $D(\cdot, \cdot)$ is a metric to measure the distance of $\bold{c}$ and $\hat{\bold{c}}$.

% Two metrics for the distance function $D(\cdot, \cdot)$ is used, namely, mean squared error (MSE)
% \begin{align}\label{eq:2}
%     D_{MSE}(\hat{\bold{c}}, \bold{c}) = \frac{1}{k}\sum_{m=1}^{m=k}(\hat{c}_m-c_m)^2,
% \end{align}
% and KL divergence\footnote{We note that in \cite{Jiang}, the metric corresponding to KL divergence is not clear whether it is $D_{KLD}(\hat{\bold{c}}, \bold{c})$ or $D_{KLD}(\bold{c}, \hat{\bold{c}})$. We have therefore, implemented both in our experiments. In the paper, we only include $D_{KLD}(\hat{\bold{c}}, \bold{c})$, however, the results for $D_{KLD}(\bold{c}, \hat{\bold{c}})$ can be found in our GitHub repository \cite{mrtzvrst}. Regardless, the corresponding results obtained from both metrics are almost on top of each other.}.
% \begin{align}\label{eq:3}
%     D_{KLD}(\hat{\bold{c}}, \bold{c}) = \frac{1}{k}\sum_{m=1}^{m=k}\hat{c}_m \log\frac{\hat{c}_m}{c_m}.
% \end{align}

% Once the active party has a confidence score $\bold{c}$, a gradient based approach is applied\footnote{Note that the attack is implemented in the context of white box, where active party has access to the model parameters.} with zero initial values for the unknown passive party features until the corresponding metric approaches its minimum.
\begin{remark}
As stated earlier, although ESA (i.e., $\hat{\mathbf{x}}_\textnormal{LS}$) belongs to the solution space $\mathcal{S}$, it does not necessarily belong to the set of feasible solutions, i.e., $\mathcal{S_F}$. As a consequence, we might have $\hat{\mathbf{x}}_{\textnormal{LS}}\not\in[0,1]^d$. On another note, since $D(\hat{\mathbf{c}}||\mathbf{c})$ is a convex function of $\hat{\mathbf{x}}$ and the optimization is restricted to $[0,1]^d$, GIA results in an estimate in the set of feasible solutions $\mathcal{S_F}$. Therefore, the performance improvement of GIA (over ESA) observed in \cite{Jiang} is mainly due to this fact. One trivial approach to improve the performance of ESA is to at least truncate/clamp it when it falls out of $[0,1]^d$, but this has not been considered in \cite{Xinjian}.
% We note that applying ESA on (\ref{eq:1}), does not by itself guarantee that the obtained estimation of passive party feature vector $\hat{\bold{x}}_{pas}$ is in the feasible region, i.e., the estimated vector is in the hypercube  $[0,1]^d$. As it will be clarified more later, this is the main reason of an optimistic performance improvement of GIA attack over ESA in \cite{Jiang}.
\end{remark}
\subsection{Evaluation of the inference attacks in the white-box setting}
The performance of inference attacks are evaluated according to the MSE per feature in (\ref{MSE}), which can be estimated empirically by $\frac{1}{Nd}\sum_{i=1}^N\|\mathbf{X}_i-\hat{\mathbf{X}}_i\|^2$ with $N$ denoting the number of predictions and $d$ denoting the number of passive party's features. We set $N=1000$, and name this averaging over $N$ as \textit{average over time}. This is to distinguish from another type of averaging, namely, \textit{average over space}, which is explained via an example as follows. Assume that the Bank data, which has 19 features, is considered. Also, consider the case that we are interested in obtaining the MSE when the active and passive parties have 14 and 5 features, respectively. Since these 19 features are not i.i.d., the MSE depends on which 5 (out of 19) features are allocated to the passive party. In order to resolve this issue, we average the MSE over some different possibilities of allocating 5 features to the passive party. More specifically, we average the MSE over a moving window of size 5 featues, i.e., MSE is obtained for 19 scenarios where the feature indices of the passive party are $[1:5], [2:6],[3:7],\ldots,\{19\}\cup[1:4]$. Afterwards, these 19 MSE's are summed and divided by 19, which denotes the MSE when $d=5$.

The performance of the following attacks are compared: $\hat{\mathbf{X}}_\textnormal{LS}$(denoted by ESA in \cite{Xinjian}), $\hat{\mathbf{X}}_{\textnormal{CLS}}$, $\hat{\mathbf{X}}_\textnormal{half}$, $\hat{\mathbf{X}}_{\textnormal{half}^*}$, $\hat{\mathbf{X}}_{\textnormal{RCC1}}$, $\hat{\mathbf{X}}_\textnormal{RCC2}$ and GIA (\cite{Jiang}). Moreover, since $\hat{\mathbf{X}}_\textnormal{LS}$ may not belong to $[0,1]^d$, we also consider "Clamped LS" that is $\hat{\mathbf{X}}_\textnormal{LS}$ clamped to $[0,1]^d$, i.e., any values lower than 0 or greater than 1 are replaced with 0 and 1, respectively. Finally, by RG (Random Guess), we are referring to the random variable generated according to the uniform distribution over $[0,1]^d$, and Zero represents the estimate $\hat{\mathbf{X}}=\mathbf{0}$.

The optimizations involved in $\hat{\mathbf{X}}_\textnormal{CLS},\hat{\mathbf{X}}_\textnormal{RCC1}$ and $\hat{\mathbf{X}}_\textnormal{RCC2}$ is carried out using the Cvxpy package in Python \cite{Boyd_cvxpy}. The maximum iteration of the problem solver in Cvxpy for all of the three algorithms is set to $100000$.

%Additionally, the optimizer is the default choice of cvxpy (this changes with respect to each problem).

%For CLS, RCC1 and RCC, empirical averaging (over many samples and different unknown feature set of size $d$) is required to smooth out the results. To that end, for a given $d$, we have averaged the MSE over 1000 data point. We also consider a moving window of size $d$ in order to average over different combinations of successive $d$ features. For more information on how the averaging is performed refer to our repository on GitHub \cite{mrtzvrst}.
\begin{figure}[ht]
 \centering % centering figure
 \scalebox{0.4} % rescale the figure by a factor of 0.8
 {\includegraphics{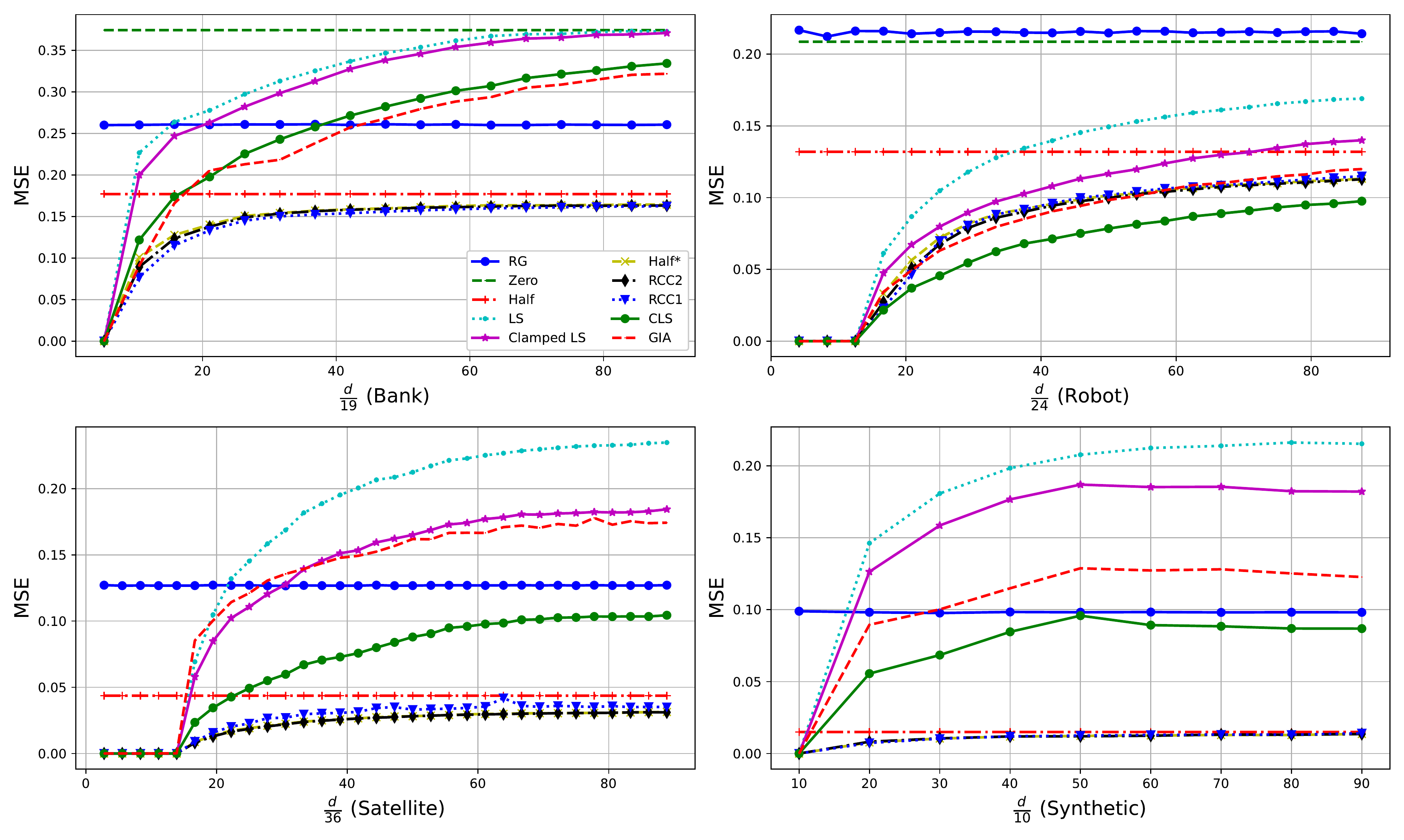}} % importing figure
 \caption{MSE per feature on Bank, Robot, Satellite and Synthetic datasets for different reconstruction attacks.}
 \label{fig1} % labeling to refer it inside the text
\end{figure}
Figure \ref{fig1} illustrates the performance of the aforementioned attacks versus the ratio of passive party's features, i.e., $\frac{d}{d_t}$. The fact that when $d<k$, we have MSE$=0$ is due to the simple fact that the system of linear equations under consideration has a unique solution, i.e., $\mathbf{A}^+\mathbf{b}$. The first observation is provided as a remark
\begin{remark}
It is observed that the scheme Half is always better than RG, which is consistent with theory (lemma \ref{lem1}), that states that any randomized guess is outperformed by its statistical mean. In Figure \ref{fig1}, the gap between the MSE corresponding to Half and RG is always $0.0833$, which is equal to $\textnormal{Var}(\hat{X})=\frac{1}{12}$ (with $\hat{X}$ being uniformly distributed on $[0,1]$) as expected from lemma \ref{lem1}. Therefore, as long as MSE is concerned, RG (used in \cite{Xinjian,Jiang}) is not a proper baseline to choose. Moreover, it is observed that in the Bank, Satellite, and synthetic datasets, the simple scheme of Half outperforms LS (ESA) and GIA for the most part. The importance of this latter observation is more emphasized by noting that Half i) does not rely on any side information (either the receipt of confidence scores or the knowledge of passive party's parameters.), and ii) does not need any optimization or matrix computation.
\end{remark}
The second observation is the confirmation of Theorem \ref{TH1}, i.e., RCC2 performs better than Half$^*$, which itself is better than Half. This rigorous guarantee that RCC2 and Half$^*$ are always better than Half is crucial, since for the attacks CLS, LS (ESA) and GIA, there is no guarantee that they perform better than the trivial blind estimate of Half, which can be verified in Figure \ref{fig1}. The third observation is that Clamped LS performs better than LS (ESA), which is discussed before. The fourth observation is that the performance of RCC1 and RCC2 are close in these datasets, while no general claim can be made as to which is better than the other. The fifth observation is that either CLS or the simple Half$^*$ outperform the results in the literature. We give the two final observations as remarks as follows.\footnote{Note that in Figure \ref{fig1}, the MSE corresponding to the estimate Zero is not shown for the Satellite and synthetic datasets due to the fact that being relatively large compared to the MSE of other attacks, it results in the condensation of other curves, and hence poor legibility.}
\begin{remark}
It is important to note that in general, the performance of the attack methods are data-dependent. In other words, one can synthesize a dataset, with an appropriate choice of the mean and variance, such that a particular attack exhibits promising performance. Nevertheless, in the context where the attacker is unaware of the underlying distribution of the passive party's features, a reasonable method is that of estimation based on the Chebyshev center of the feasible solution space, and consequently, its approximations, i.e., RCC1 and RCC2. Although, there is no claim of universal optimality of the Chebyshev center (i.e., optimal for every dataset), it is in line with the intuition that in the worst possible case, it has the best performance, i.e., optimal in the best-worst sense.
\end{remark}
\begin{remark}
Consider a VFL model with two cases, where in the first one, features with the index set $[2]$ are given to the passive party and in the second case, passive party holds features indexed by $[4]$. Fix a specific attack method. It is obvious that MSE in the second case is greater than (or equal to) the MSE in the first case. But, how do the MSE per feature compare in these cases? This depends on the underlying distribution of the data. More specifically, the MSE per feature in the second case could be greater than, equal to, or even lower than that in the first case. This is because in MSE per feature, we have a ratio whose both numerator and denominator increase with the number of features; However, whether the overall ratio is increasing or decreasing depends on the rate of increase in the numerator which depends on the data. As an example, consider a binary classification VFL with $\hat{\mathbf{X}}_\textnormal{LS}$ as the attack method. First, assume that features indexed by $[2]$ are allocated to the passive party. In this case, the adversary has a non-zero MSE per feature, which is denoted by $\zeta$. Now, imagine that this time the passive party holds features indexed by $[d]$ for some $d>2$ to be obtained later. From (\ref{ineq1}), we have that the MSE per feature is upper bounded by $\frac{1}{d}\textnormal{Tr}(\mathbf{K}_\mathbf{0})$, where $\mathbf{K}_\mathbf{0}$ denotes the correlation matrix of the $d$ features. Assume that the distribution of the data is such that we have $\mathds{E}[X_i^2]=\frac{1}{i^2}$ for $i\in[d]$. As a result, we have that $\lim_{d\to\infty}\frac{1}{d}\textnormal{Tr}(\mathbf{K}_\mathbf{0})=0$, which means that we can select a $d>2$, such that the MSE per feature becomes smaller than $\zeta$, i.e., the case with $d=2$. Hence, in general, no claim can be made on the increasing/decreasing trend of the MSE per feature.
\end{remark}

\subsection{Verification of Theorem \ref{TH2}}
In Theorem \ref{TH2} closed form expressions for the MSE of the attacks $\hat{\mathbf{X}}_{\textnormal{LS}}$ and $\hat{\mathbf{X}}_{\textnormal{half}^*}$ along with their corresponding upper and lower bounds are provided. Figure \ref{fig7} provides the verification of this Theorem on four datasets. The empirical values, denoted by LS empirical and Half$^*$ empirical, are the same curves as in the previous sebsection shown in Figure \ref{fig1}, i.e., by averaging over $N=1000$ predictions. The curves denoted by LS and Half$^*$ denote the closed form expressions in (\ref{ineq1}) and (\ref{ineq2}), respectively, where the matrices $\mathbf{K}_\mathbf{0},\mathbf{K}_{\frac{1}{2}\mathbf{1}_d}$ and $\mathbf{K}_\mathbf{\mu}$ are empirically obtained over the whole datasets (including the training and test sets). First, we observe that LS and Half$^*$ coincide with their corresponding empirical values. Moreover, we observe that the upper and lower bounds in (\ref{ineq1}) and (\ref{ineq2}), denoted by LS UB, Half$^*$ UB, LS LB, and Half$^*$ LB are valid. Note that the lower bound in (\ref{ineq3}) is denoted by LB. In particular, we observe that in the Bank dataset, the LS UB is tight when $d$ is large enough. Furthermore, it is observed that in the Satellite and synthetic datasets, the LB is tight for the attack Half$^*$, which is justified by the fact that for these datasets, all the feature values are close to $\frac{1}{2}$ (see Figure \ref{Fig_fig2}) resulting in $\mathbf{K}_{\mathbf{\mu}}\approx\mathbf{K}_{\frac{1}{2}\mathbf{1}_d}$.
% In Figure \ref{fig7}, these results are evaluated for the Bank, Satellite and Robot datasets.
% we have illustrated the upper bounds and lower bound in (\ref{ineq1}), (\ref{ineq2}) and (\ref{ineq3}) denoted as UB1, UB2 and LB, respectively. This is along with LS and Half$^*$ achievable MSE for all the four datasets. Two points are noticed. First, for datasets with number of features relatively larger than the number of classes, i.e., $k<<d_t$, the LS upper bound is tight for larger values of $d$. This is verified in Figure \ref{fig7} for the bank dataset. Similar observation is valid for Half$^*$ upper bound, although with some residual gap. Second, LB in all cases is reasonably close to the actual MSE achieved via Half$^*$ (note that LB is also valid for LS).
\begin{figure}[ht]
 \centering % centering figure
 \scalebox{0.4} % rescale the figure by a factor of 0.8
 {\includegraphics{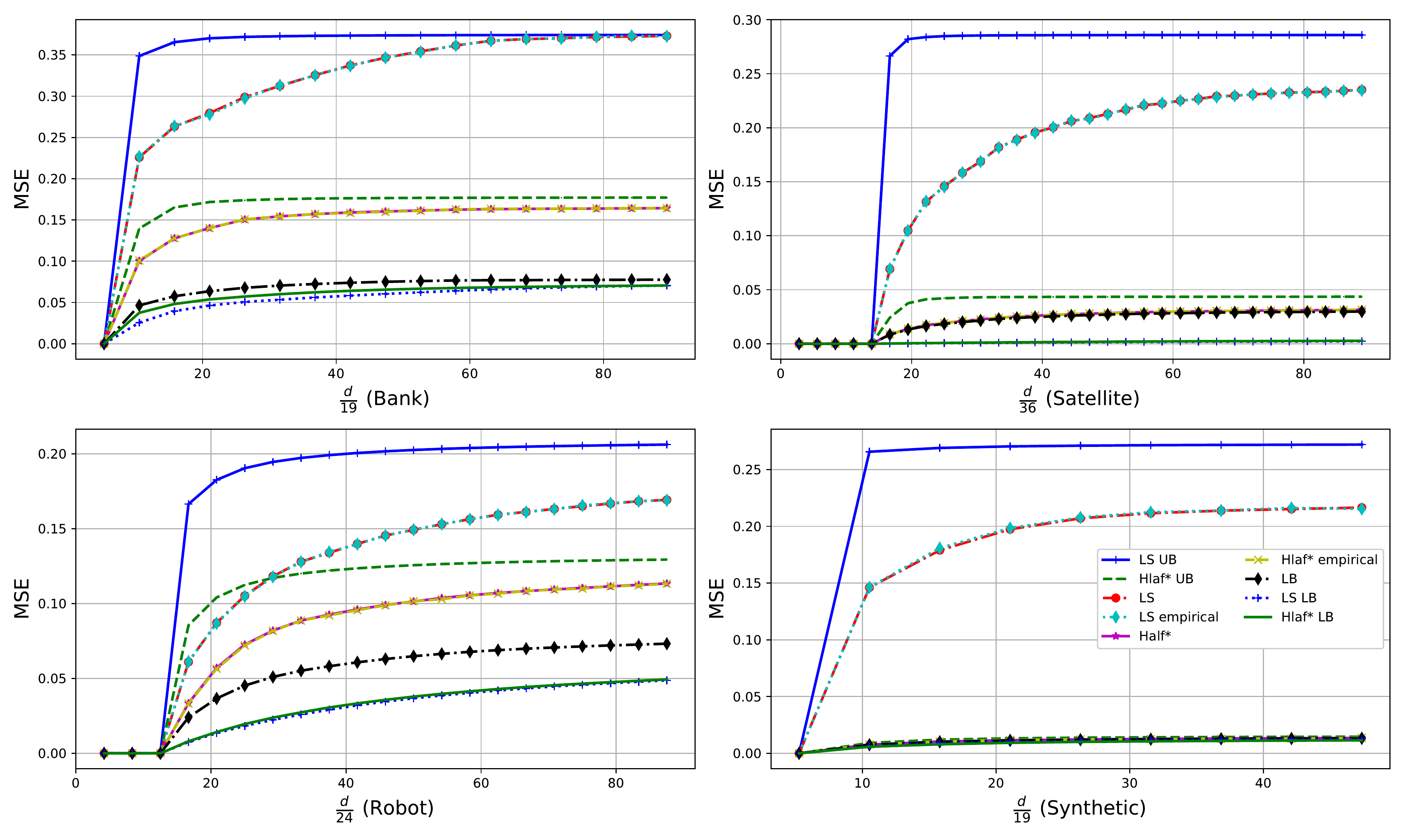}} % importing figure
 \caption{{Illustration of the results of Theorem \ref{TH2}}}
 \label{fig7} % labeling to refer it inside the text
\end{figure}

% each of the lower bounds in RHS of (\ref{ineq1}), (\ref{ineq2}) include two terms inside their min operators. In Figure \ref{fig8} we have plotted these terms for the satellite dataset (the results are similar for the other datasets, therefore removed for brevity), where UB11, UB12 and UB21, UB22 denote the  first and second terms in UB1 and UB2, respectively. It is noted from the Figure that for lower values of $d$, UB11 (UB21) are active, whereas for high values of $d$, UB12 (UB22) are active in yielding UB1 (UB2).
\begin{figure}[ht]
 \centering % centering figure
 \scalebox{0.4} % rescale the figure by a factor of 0.8
 {\includegraphics{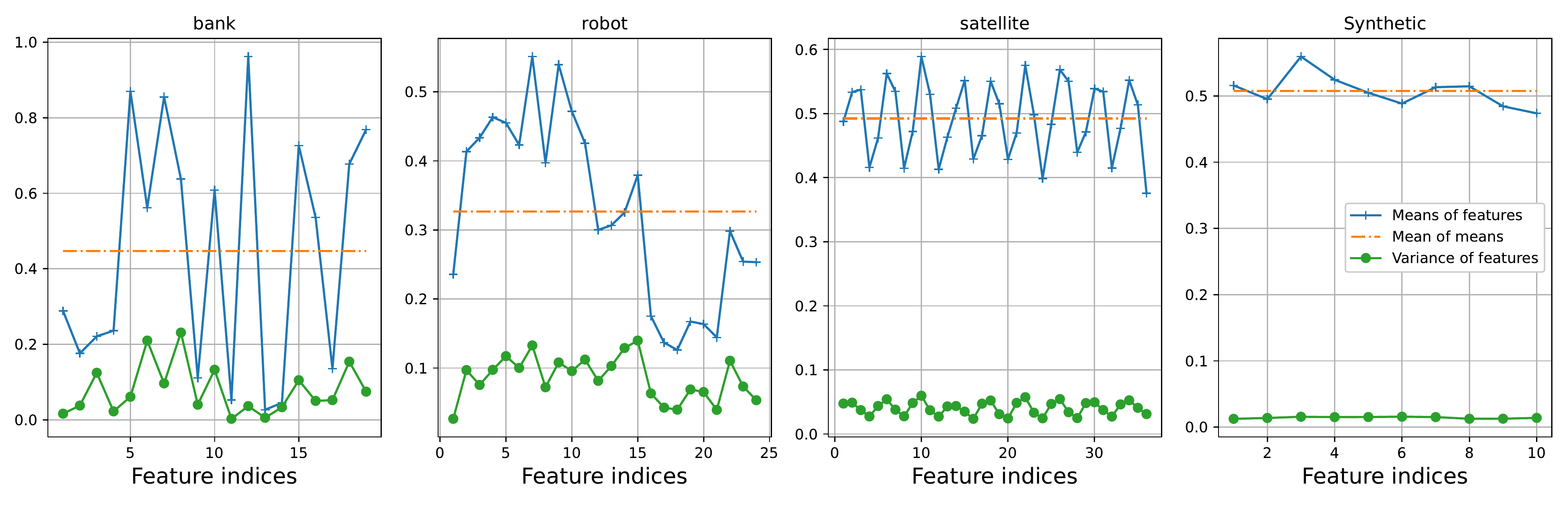}} % importing figure
 \caption{Empirical mean, variance, and mean of the means of features.}
 \label{Fig_fig2} % labeling to refer it inside the text
\end{figure}
\subsection{Evaluation of the inference attack in the black-box setting}
To evaluate the results in section \ref{bbs}, we assume that the 14th feature of the Bank data set, which denotes the employment variation rate in \cite{MLR}, is unknown. In other words, the passive party is allocated feature 14, and the active party has the remaining 18 features of this dataset. For this setting, the corresponding $\omega$ and $b$, given in section \ref{bbs}, are $\omega=0.98916, b=3.048751$, and as a result, we have $\omega b>0$. The adversary is unaware of the exact values of $\omega$ and $b$ and their signs. What it is informed of is only the fact that $\omega$ and $b$ have the same sign, and nothing else. According to the attack method in case 2 of section \ref{bbs}, the MSE converges in probability to 0 as the number of predictions $N$ grows. In Figure \ref{fig12}, the empirical expectation of $\frac{1}{N}\sum_{i=1}^N(X_i-\hat{X}_i)^2$ is plotted for $N$ ranging from 1 to 100. For each $N$, the empirical expectation is obtained with averaging over 100 instances. As it can be verified in Figure \ref{fig12}, this expectation converges to 0, which by using Markov's inequality, confirms that the empirical MSE converges in probability to 0.

\begin{figure}[ht]
 \centering % centering figure
 \scalebox{0.5} % rescale the figure by a factor of 0.8
 {\includegraphics{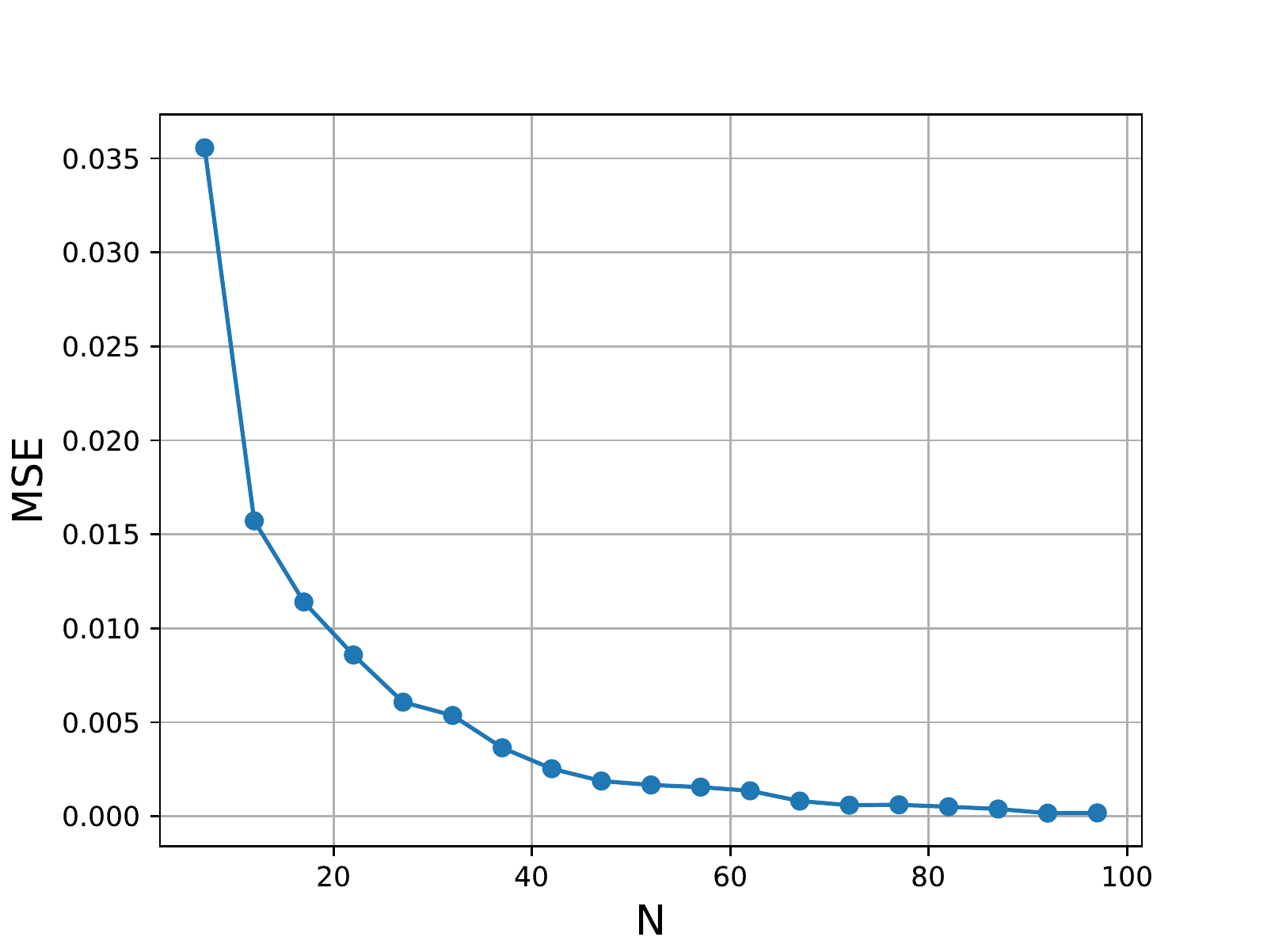}} % importing figure
 \caption{Inference attack in the black-box setting. The passive party holds one feature and $N$ denotes the number of predictions.}
 \label{fig12} % labeling to refer it inside the text
\end{figure}
\subsection{Privacy-preserving scheme}
In this section, the performance of the proposed privacy-preserving schemes in the two subsections of section \ref{PR} is evaluated on real-world and synthetic data.

To evaluate the scheme in the first subsection, first, the LR is trained (with regularization) in the context of VFL without the privacy-preserving scheme and the performance of several attacks (LS, Clamped LS, CLS, Half$^*$, RCC1 and RCC2) are obtained, as illustrated in Figures \ref{fig6pps} to \ref{fig9pps} with circle-blue lines.
\begin{figure}[ht]
 \centering % centering figure
 \scalebox{0.4} % rescale the figure by a factor of 0.8
 {\includegraphics{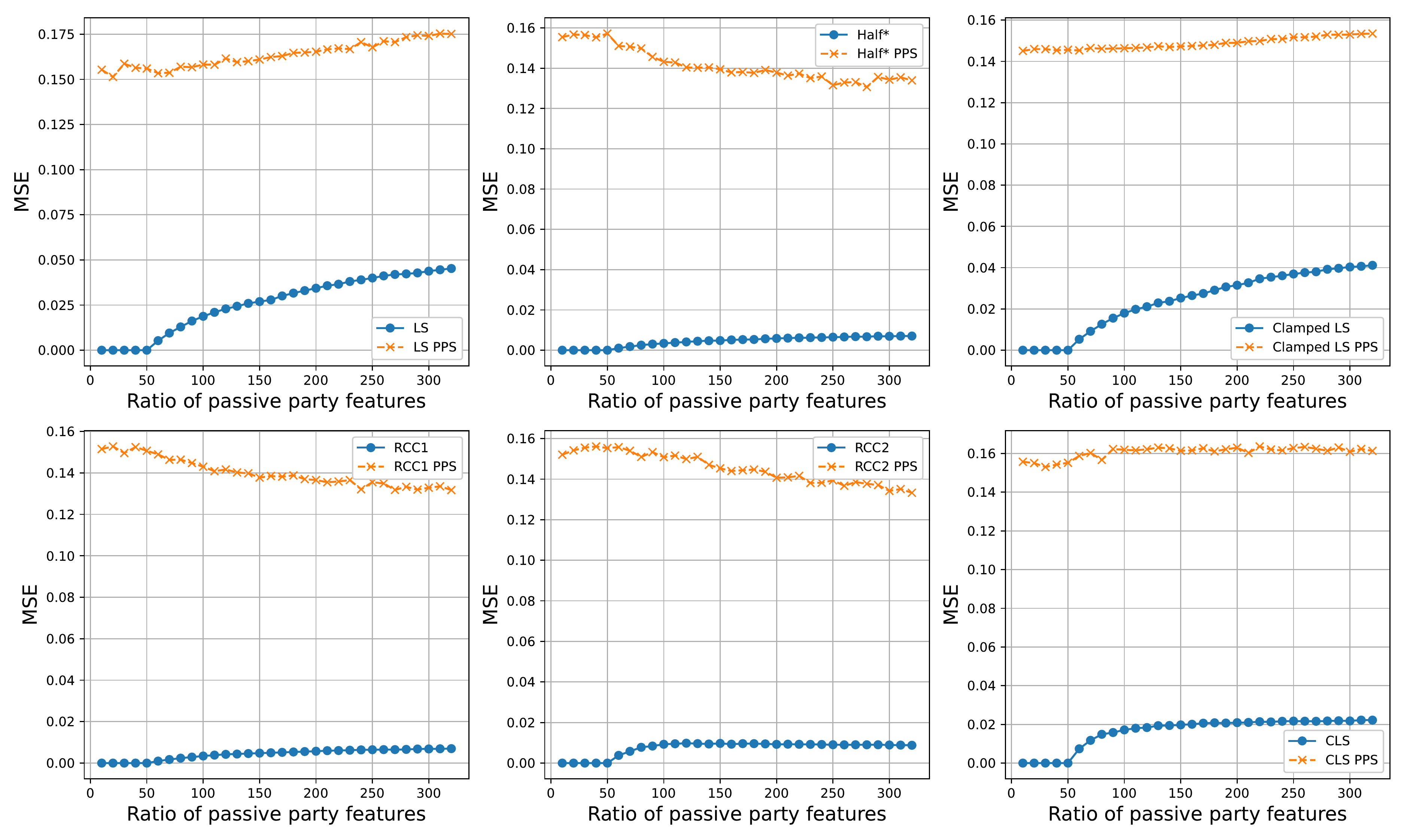}} % importing figure
 \caption{MSE per feature before and after applying PPS (Satellite).}
 \label{fig6pps} % labeling to refer it inside the text
\end{figure}
\begin{figure}[ht]
 \centering % centering figure
 \scalebox{0.4} % rescale the figure by a factor of 0.8
 {\includegraphics{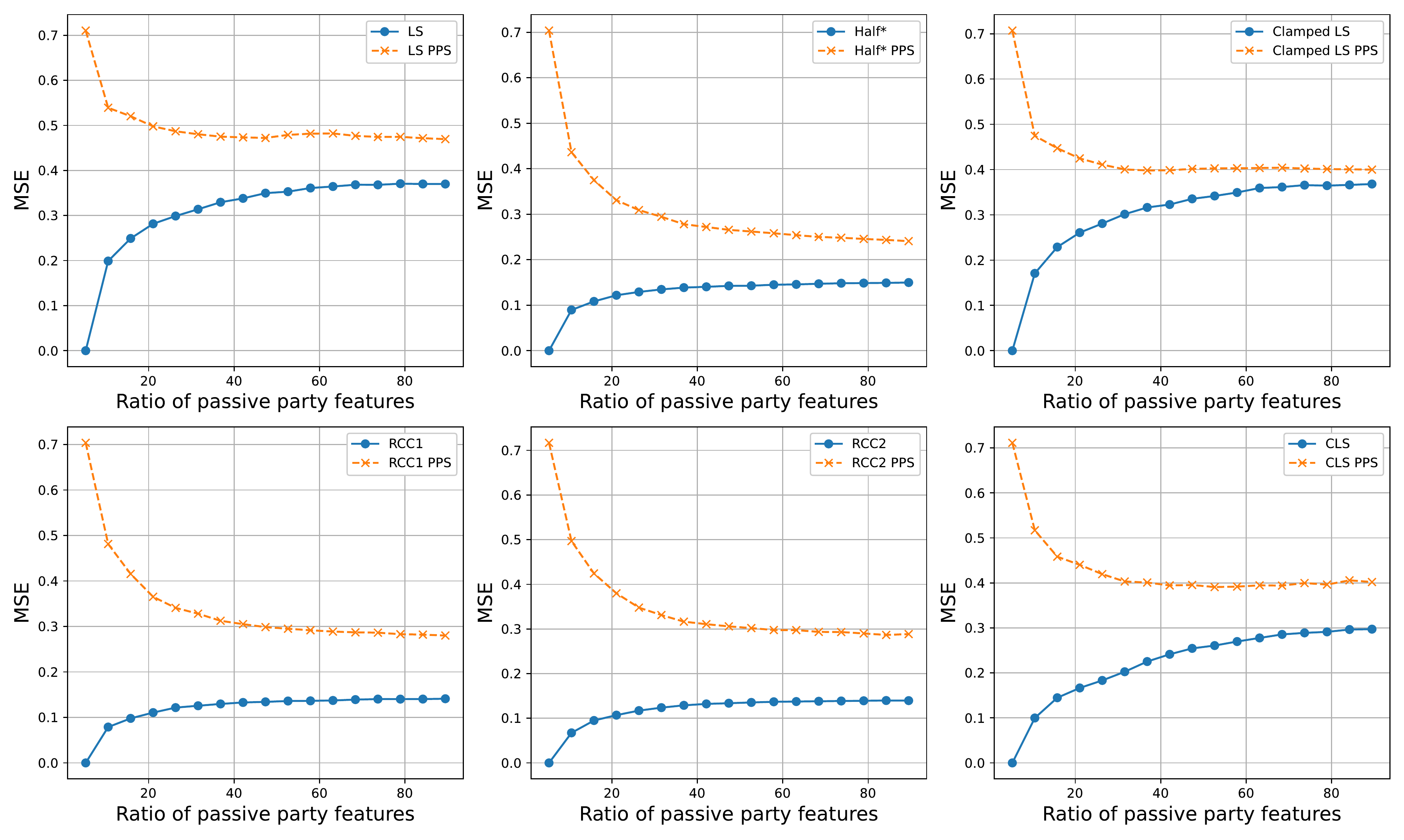}} % importing figure
 \caption{MSE per feature before and after applying PPS (Bank).}
 \label{fig7pps} % labeling to refer it inside the text
\end{figure}
\begin{figure}[ht]
 \centering % centering figure
 \scalebox{0.4} % rescale the figure by a factor of 0.8
 {\includegraphics{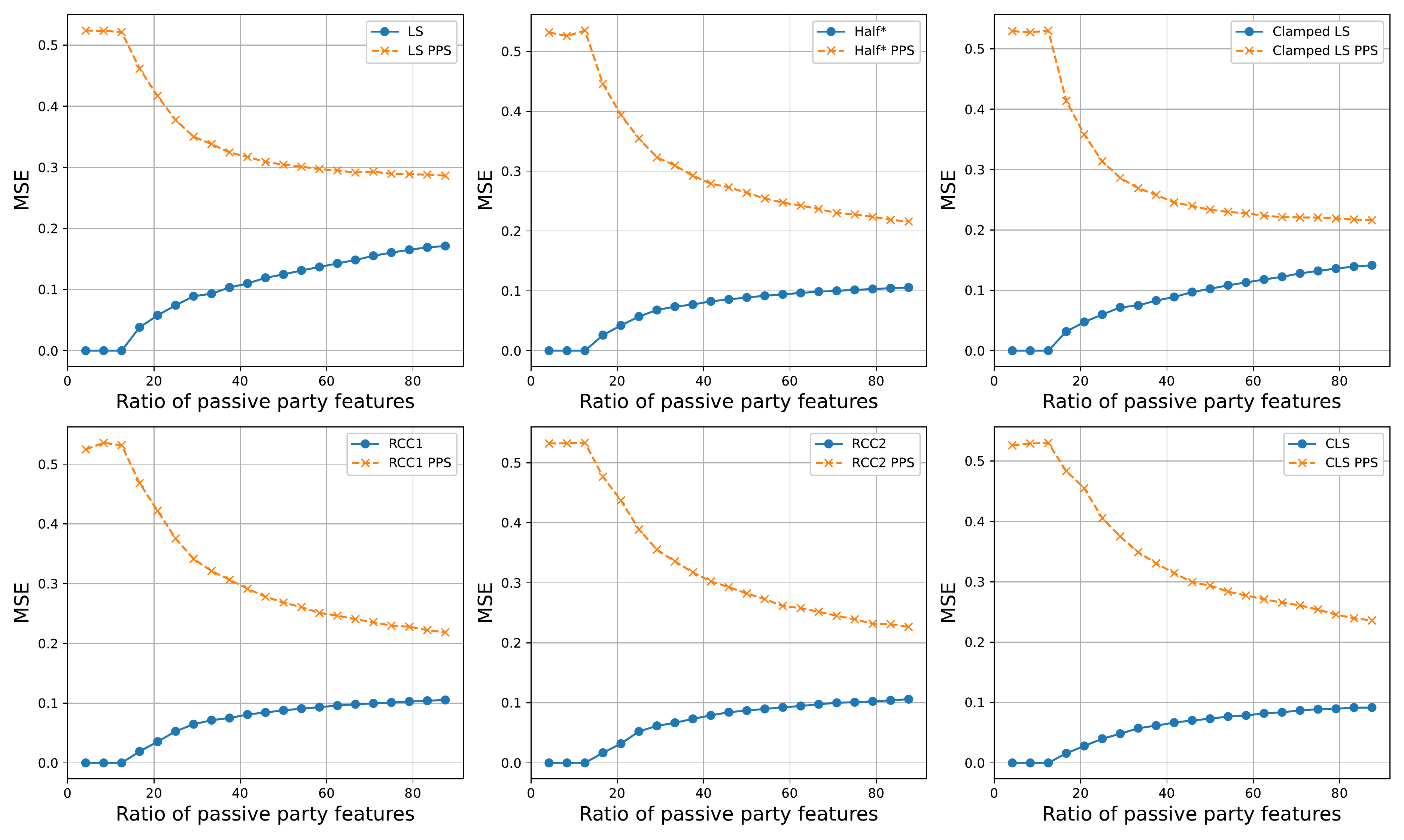}} % importing figure
 \caption{MSE per feature before and after applying PPS (Robot).}
 \label{fig8pps} % labeling to refer it inside the text
\end{figure}
\begin{figure}[ht]
 \centering % centering figure
 \scalebox{0.4} % rescale the figure by a factor of 0.8
 {\includegraphics{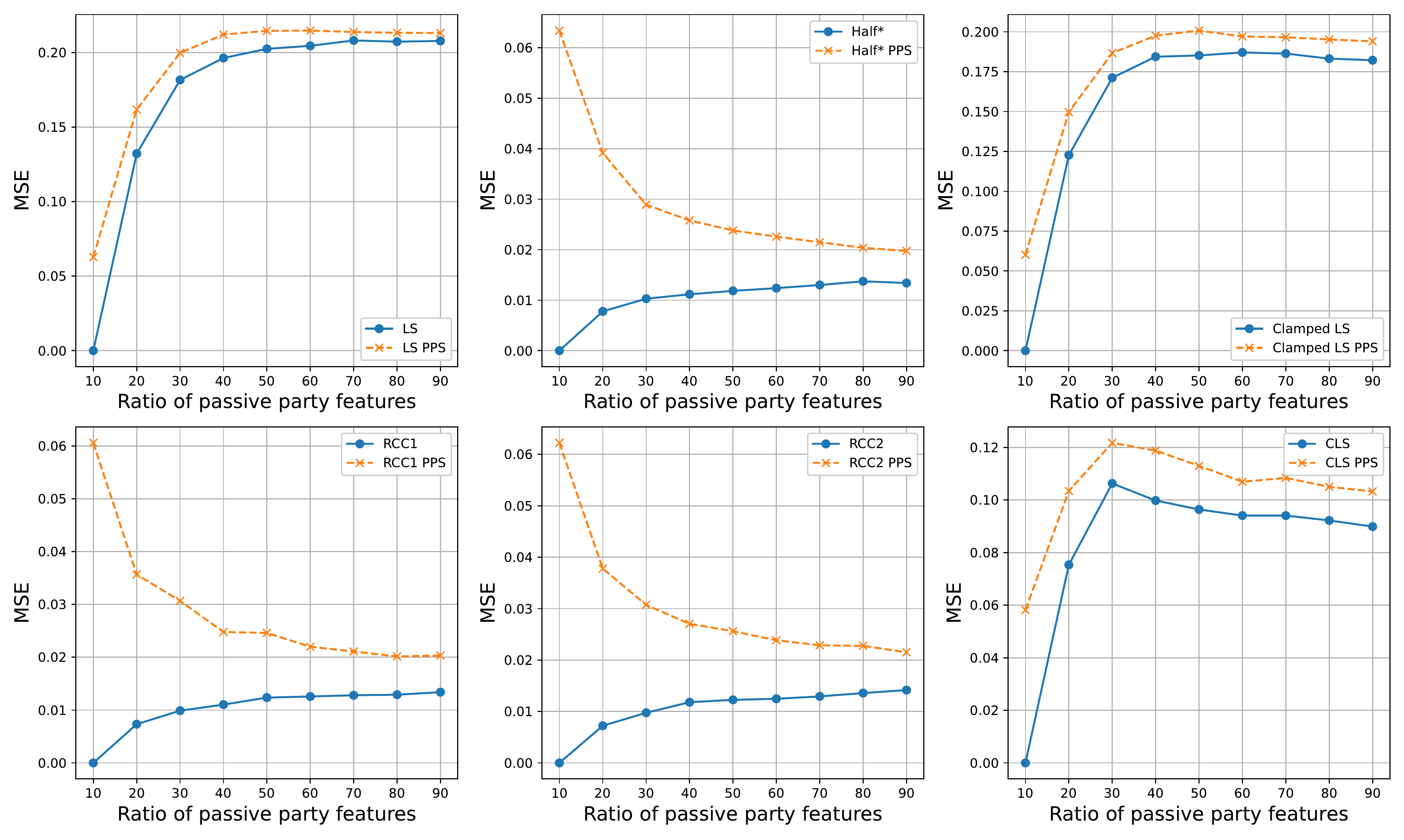}} % importing figure
 \caption{MSE per feature before and after applying PPS (Synthetic).}
 \label{fig9pps} % labeling to refer it inside the text
\end{figure}
Next, we apply the PPS technique on the datasets to obtain new training sets, which are used to train new LR models. After the model is trained, the same reconstruction attacks are applied and their performance are evaluated by the MSE per feature shown in Figures \ref{fig6pps} to \ref{fig9pps} with dashed orange lines. As it can be observed, all the reconstruction attacks have become less successful in their estimation. It is important to note that multiplying the data by $-\mathbf{I}$ and then normalizing it is equivalent to changing each feature value $x$ to $1-x$. In what follows, we consider the least square attack analytically. As given in (\ref{ineq1}), we have
\begin{equation*}
    \textnormal{MSE}(\hat{\mathbf{X}}_\textnormal{LS})=\frac{\textnormal{Tr}\left((\mathbf{I}-\mathbf{A}^+\mathbf{A})\mathbf{K}_\mathbf{0}\right)}{d}.
\end{equation*}
Let $\textnormal{MSE}(\hat{X}_{\textnormal{LS}}^\textnormal{PPS})$ denote the MSE per feature of this attack after PPS has been applied. Since we have $\mathbf{W}=\mathbf{W}_0\mathbf{H}^{-1}=-\mathbf{W_0}$, we conclude that the new matrix $\mathbf{A}$ that the adversary constructs when PPS has been applied is the negative of that in the case without PPS. As a result, we have
\begin{equation*}
    \textnormal{MSE}(\hat{X}_{\textnormal{LS}}^\textnormal{PPS})=\frac{1}{d}\mathds[\|\mathbf{X}-\mathbf{A}^+\mathbf{A}(\mathbf{1}-\mathbf{X})\|^2].
\end{equation*}
By simple calculations, we get
\begin{equation*}
    \textnormal{MSE}(\hat{X}_{\textnormal{LS}}^\textnormal{PPS})- \textnormal{MSE}(\hat{\mathbf{X}}_\textnormal{LS})=\frac{4}{d}\textnormal{Tr}(\mathbf{A}^+\mathbf{A}\mathbf{K}_{\frac{1}{2}\mathbf{1}})\geq 0.
\end{equation*}
The closer the features are to $\frac{1}{2}$, the lower $\textnormal{Tr}(\mathbf{K}_{\frac{1}{2}\mathbf{1}})$ is, and the above difference is smaller. This is consistent with the intuition since in this case $x$ and $1-x$ are not far from each other. Interestingly, we can obtain the similar performance degradation for $\textnormal{MSE}(\hat{X}_{\textnormal{half}^*}^\textnormal{PPS})$ as
\begin{equation*}
    \textnormal{MSE}(\hat{X}_{\textnormal{half}^*}^\textnormal{PPS})- \textnormal{MSE}(\hat{\mathbf{X}}_{\textnormal{half}^*})=\frac{4}{d}\textnormal{Tr}(\mathbf{A}^+\mathbf{A}\mathbf{K}_{\frac{1}{2}\mathbf{1}}).
\end{equation*}
As we know, $\mathbf{A}^+\mathbf{A}$ has $\textnormal{rank}(\mathbf{A})$ 1's and $\textnormal{nul}(\mathbf{A})$ 0's as eigenvalues. In the experimental results, the rank of $\mathbf{A}$ is $\min\{k-1,d\}$. Therefore, denoting the eigenvalues of $\mathbf{K}_{\frac{1}{2}\mathbf{1}}$ by $\gamma_1\geq\gamma_2\geq\ldots\geq\gamma_d$, by applying Von Neumann's trace inequality, we have
\begin{equation*}
    \frac{4}{d}\textnormal{Tr}(\mathbf{A}^+\mathbf{A}\mathbf{K}_{\frac{1}{2}\mathbf{1}})\leq\frac{4}{d}\sum_{i=1}^{\min\{k-1,d\}}\gamma_i.
\end{equation*}
If the data is such that the maximum eigenvalue, i.e., $\gamma_1$, scales at most sublinearly with $d$ ($\lim_{d\to\infty}\frac{\gamma_1}{d}=0$), it can be concluded that for a fixed number of classes $k$, the difference vanishes as $d$ grows\footnote{For example, if the features are independent with mean $\frac{1}{2}$, we have $\mathbf{K}_{\frac{1}{2}\mathbf{1}}$ is a diagonal matrix. Obviously, all the eigenvalues of $\mathbf{K}_{\frac{1}{2}\mathbf{1}}$ are equal to the diagonal entries, that are upper bounded by $\frac{1}{4}$.}.

To show that this PPS does not alter the confidence scores, the average KL divergence between confidence scores obtained before and after the application of PPS, averaged over $N'=20000$ samples, is illustrated in Figure \ref{fig66}. In other words, if the confidence scores of sample $i$ before and after the application of PPS are denoted by $\mathbf{c}_i$ and $\tilde{\mathbf{c}}_i$, respectively, the average KL divergence is $\frac{1}{N'}\sum_{i=1}^{N'}D(\mathbf{c}_i||\tilde{\mathbf{c}}_i)$. This value, as shown in Figure \ref{fig66}, is close to 0, which is consistent with our notion of a lossless PPS. The slight deviation from 0 is due to i) normalization of features, and ii) the step size of the gradient optimizer.

Finally, it is important to note that if the adversary is aware of $\mathbf{H}$, the privacy-preserving scheme fails. If the adversary is unaware of this transformation, still he/she could perform the attack assuming that in half of the samples, the features haven't been transformed and in the remaining ones, they have. Still, with a similar argument as in the black-box setting, it can be shown that this assumption cannot beat the scheme $\hat{\mathbf{X}}_\textnormal{half}$. There are other variants of this transform, e.g., selectively transform some features and preserve the remaining ones. Permutation can also be applied in case the features have similar alphabets, etc.

\begin{figure}[ht]
 \centering % centering figure
 \scalebox{0.4} % rescale the figure by a factor of 0.8
 {\includegraphics{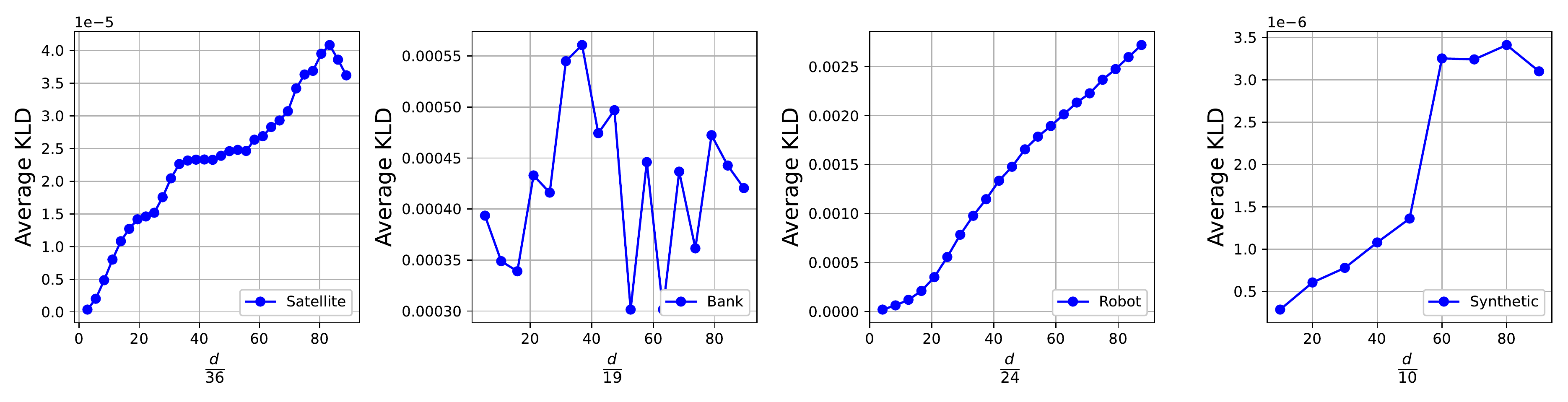}} % importing figure
 \caption{Average KL divergence between the confidence scores before and after the application of PPS.}
 \label{fig66} % labeling to refer it inside the text
\end{figure}

% to mask the passive party features and compare the achieved MSE (related to the estimation of passive party features by active party) with previously obtained MSE values when no PPS was applied. To elaborate further on how we implement the latter part, we first consider a single case. Assume active party has three unknown features, i.e., $d=3$. The passive party first does a precoding on their dataset by multiplying it by an invertible matrix. This is done as in (). Next, both parties collaborate in model training in the same way as of VFL training. Note that the invertible matrix used to precode the passive party features is not shared by CA or active party. Additionally, note that the training of the VFL model should be resumed if the set of unknown features are changed. Due to this, in our experimental results we train a new LR model once the set of unknown passive party features are changed. To smooth out the results we also do an averaging using a moving window over successive feature sets of the same size (for more details refer to \cite{mrtzvrst}). Once a LR model with the masked features are obtained, new estimation results are obtained. These results are illustrated as the upper start-orange lines in Figure \ref{fig6}. As it is observed in the Figure, for all the attack schemes, PPS deteriorates the accuracy of estimation.
% We mention here that for a scenario when only passive party feature is unknown ....
To evaluate the schemes in the second subsection of section \ref{PR}, Figure \ref{Scheme12} illustrates the privacy-preserving schemes (Schemes 1 and 2) in red and dark blue curves. The vertical axis refers to the MSE per feature for $\hat{\mathbf{X}}_{\textnormal{half}^*}$ and the horizontal axis denotes the average KL distance between the confidence scores before and after the application of these schemes. Furthermore, two other curves (denoted by "scheme 3" and "class label") are also plotted which are explained as follows. Scheme 3, shown in green, refers to the scheme in which $\tilde{\mathbf{z}}\triangleq(1-\alpha)\mathbf{z}+\alpha\mathbf{1}$ for $\alpha\in[0,1)$. Obviously, this scheme preserves the model accuracy, and by tuning $\alpha$, the trade-off in Figure \ref{Scheme12} is obtained. The curve denoted by "class label" represents the scenario where the coordinator reveals the class label. In doing so, the confidence score which is maximum is mapped to 1, and the remaining ones are mapped to 0. However, in order to obtain the trade-off in Figure \ref{Scheme12}, we let these remaining confidence scores approach 0. The smaller they get, the greater the average KL distance becomes, and in this way, the pale blue curve is produced.

\begin{figure}[ht]
 \centering % centering figure
 \scalebox{0.4} % rescale the figure by a factor of 0.8
 {\includegraphics{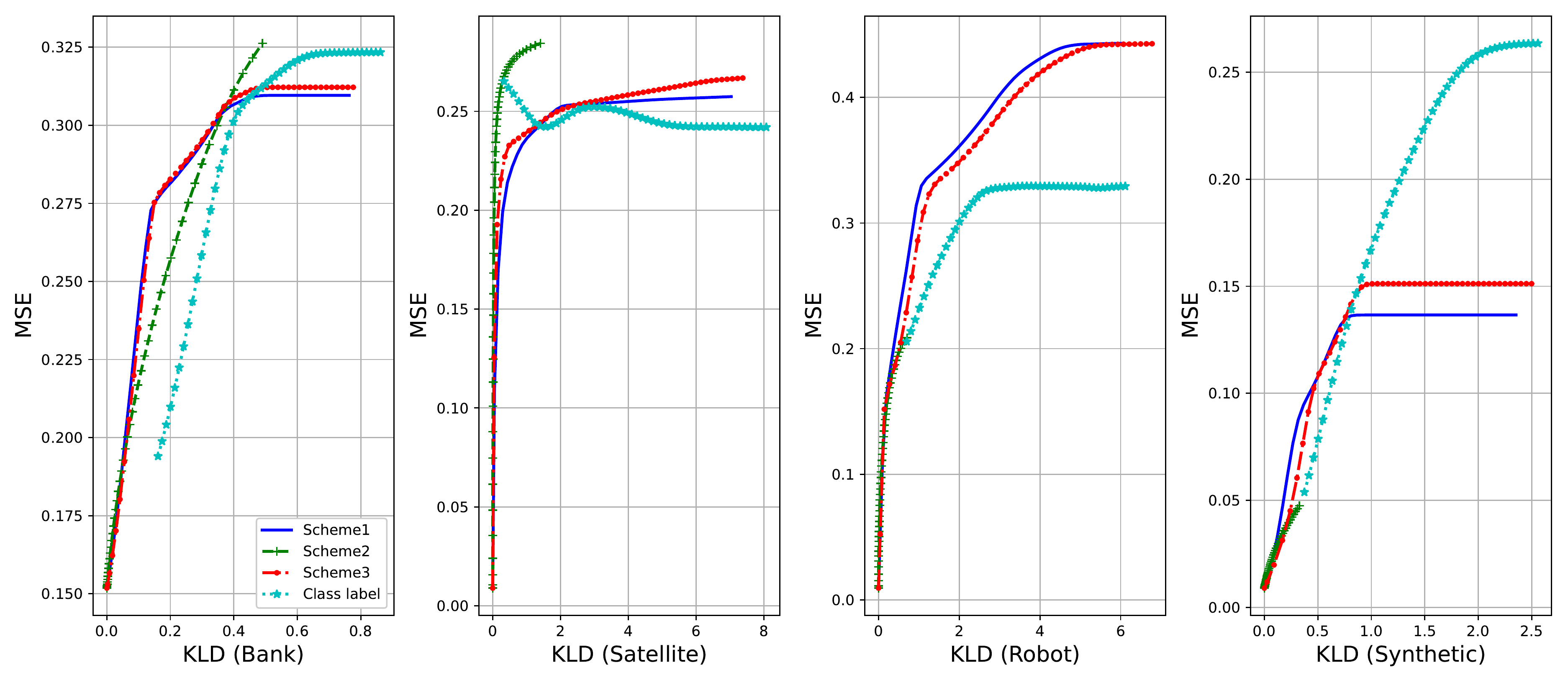}} % importing figure
 \caption{Privacy-preserving schemes that preserve the model accuracy}
 \label{Scheme12} % labeling to refer it inside the text
\end{figure}

\subsection{A note on the effect of the initial point in GIA}
As part of the experimental results, simulation of GIA was required for comparison purposes. In doing so, \cite[Algorithm 1]{Jiang} was considered, in which the initial point of the gradient-based method is set to the origin, i.e., $\mathbf{0}_d$. Knowing that the gradient-based methods start with the initial point and move in the direction opposite to the gradient of the objective function, this choice of the initial point makes sense when the features are relatively close to zero. However, when the adversary has no information about the distribution of the passive party's features, a better approach in the best-worst sense, is to take $\frac{1}{2}\mathbf{1}_d$, i.e., the Chebyshev center of $[0,1]^d$, as the initial point. This observation is shown in Figure \ref{fig3}, where the performance of GIA is illustrated for different initializations, namely, $\mathbf{0}_d$ (original in \cite{Jiang}), $\frac{1}{2}\mathbf{1}_d$ and random (uniform over $[0,1]^d$). We observe that in the Bank, Sattelite and synthetic datasets whose features have a mean close to $\frac{1}{2}$ (see Figure \ref{Fig_fig2}), initialization with $\frac{1}{2}\mathbf{1}_d$ is the best, while in the Robot dataset whose features are relatively far from $\frac{1}{2}\mathbf{1}_d$, initialization with $\frac{1}{2}\mathbf{1}_d$ still performs better for the most part. Therefore, although in the context of this paper, GIA is outperformed by the proposed attack methods, it makes sense to use the initialization $\frac{1}{2}\mathbf{1}_d$ when the adversary is blind to the underlying distribution of the passive party's features.

% Then we compare the best of GIA on each dataset with Half, Half$^*$ and RCC2. In illustrating the results for GIA approach, we have obtained both clamped and non-clamped versions. Since in all datasets, clamped versions improve upon non-clamped ones, therefore,for brevity, we have restricted the following results to the clamped ones. However, all the removed results here can be accessed in \cite{mrtzvrst}.

% In Figure \ref{fig3}, the achievable MSE (for a trained LR model) for KLD and D1 is illustrated for three different initializations, namely, zero, half and random on the three datasets. It is observed that for Bank, Robot and Satellite, respectively, half, zero and half initialization perform better than others. While this observation was not mentioned in \cite{Jiang} (their GIA approach is always initiated by zero), it reveals the importance of initial point in estimating the unknown features of passive party. Additionally, in the absence of any information on the passive party features, it seem half initialization gives a better chance to the active party in achieving lower MSE.
\begin{figure}[ht]
 \centering % centering figure
 \scalebox{0.4} % rescale the figure by a factor of 0.8
 {\includegraphics{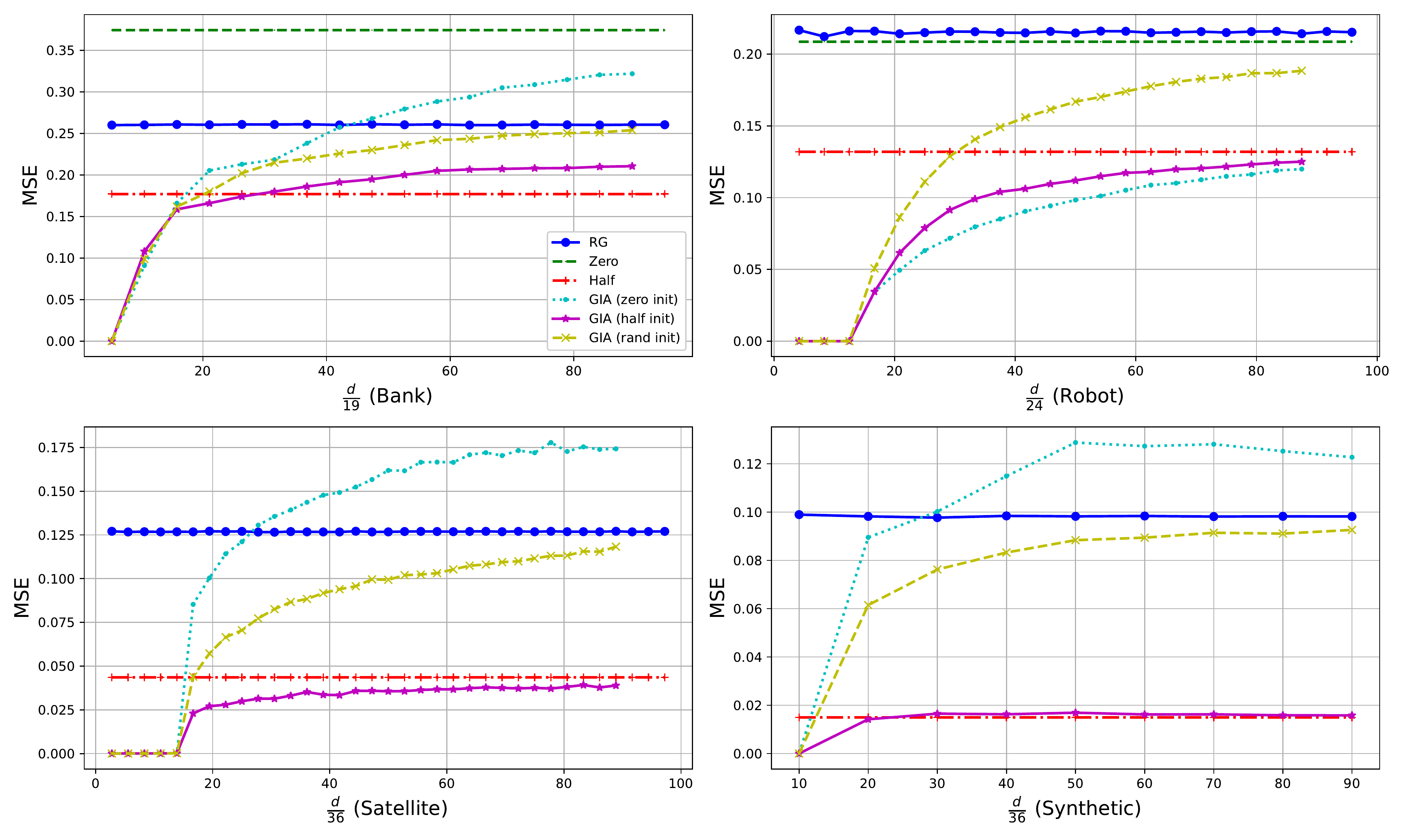}} % importing figure
 \caption{MSE per feature in GIA for different initialization.}
 \label{fig3} % labeling to refer it inside the text
\end{figure}

\section{conclusions}\label{conc}
  In this paper, a vertical federated learning setting is considered, where an active party, having access to true class labels, wishes to build a classification model by utilizing more features from a passive party, that has no access to the labels, to improve the model accuracy. The model under consideration is logistic regression. In the prediction phase, based on a classical notion of the centre of the set of feasible solutions, several inference attack methods are proposed that the active party can apply to reconstruct the sensitive information of the passive party. These attacks are shown to outperform those proposed in the literature. Moreover, several theoretical performance guarantees, in terms of upper and lower bounds, are provided for the aforementioned attacks. Subsequently, we show that in the black-box setting, knowing only the signs of the parameters involved is sufficient for the adversary to fully reconstruct the passive party's features in certain scenarios. As a defense mechanism, two privacy-preserving schemes are proposed that worsen the adversary's reconstruction attacks, while preserving the confidence scores revealed to the active party. Finally, experimental results demonstrate the effectiveness of the proposed attacks and the privacy-preserving schemes.

  Two complementary points are worth mentioning here. The first one is that in the proposed inference attacks, no knowledge about the nature of the features was used. In other words, all the features were regarded as real numbers in the interval $[0,1]$. However, in practice, features can have sparse values. For example, if a feature denotes the marital status, it has two values, and after normalizing to the range $[0,1]$, it takes either 0 or 1. Now, assume as an example that the equation under consideration is $x_1+x_2=0.5$. This equation has infinite solutions in $[0,1]^2$, while it has only one solution in $[0:1]\times[0,1]$, which is $(0,0.5)$. Hence, all the attack methods can further be improved by utilizing this knowledge of the adversary.

  We proceed with this example to mention the second point, which is the limitation of the mean square error (MSE) as a performance metric. Assume that the passive party holds features corresponding to "marital status" and "salary", and the adversary is aware of this, though it doesn't know the ordering, i.e., whether $x_1$ denotes the salary or $x_2$. Assume that in this context $x_1$ and $x_2$ denote "marital status" and "salary", respectively. Consider $N$ prediction records with true values $\mathbf{X}_1=(0,0.18)^T,\mathbf{X}_2=(1,0.66)^T,\mathbf{X}_3=(0,0.44)^T,\ldots,\mathbf{X}_N=(1,0.77)^T$. Assume that what the adversary reconstructs is a permuted version of the true values, i.e., $\hat{\mathbf{X}}_1=(0.18,0)^T,\hat{\mathbf{X}}_2=(0.66,1)^T,\hat{\mathbf{X}}_3=(0.44,0)^T,\ldots,\hat{\mathbf{X}}_N=(0.77,1)^T$. It is obvious that MSE$\neq0$, meaning that the adversary cannot perfectly reconstruct the features. However, knowing the set of features that have been estimated, upon observing the estimates, an intelligent adversary decides that the second element in each estimate refers to marital status, as it takes only 0 and 1, and the first element in each estimate refers to salary. Therefore, although MSE of this reconstruction is not zero, the adversary is able to perfectly reconstruct the features. This shows the limitation of the popular MSE metric by which the attacks are evaluated, and hence, needs further investigation, esp. when the support of the features are disparate/distinguishable.
  For instance, assuming that the alphabets of features are finite, which is the case in many practical scenarios,  the conditional entropy of the true values conditioned on the reconstructions better captures the adversary's performance in this context, i.e., here, we have $H(\mathbf{X}|\hat{\mathbf{X}})=0$.

  %The final point is that in the design of the privacy-preserving scheme, we have assumed that the adversary is unaware of the linear transform the passive party implements prior to the training. Needless to say that if the adversary is informed about the transform, i.e., matrix $\mathbf{H}$, the privacy-preserving scheme fails to protect the passive party's features, since the adversary can construct the inverse transform, i.e., $\mathbf{H}^{-1}$.
\bibliography{REFERENCE}
\bibliographystyle{IEEEtran}
\end{document}